\documentclass{article} 
\usepackage{iclr2026_conference,times}


\usepackage{hyperref}
\usepackage{url}
\usepackage{algorithm}
\usepackage{algorithmicx}
\usepackage{algpseudocode}
\usepackage{amsfonts}       
\usepackage{amsmath}
\usepackage{amsthm}
\usepackage{bigstrut}
\usepackage{bm}
\usepackage{booktabs}
\usepackage{color}
\usepackage{comment}
\usepackage[english]{babel}
\usepackage{enumitem}
\usepackage[T1]{fontenc}
\usepackage[table]{xcolor}
\usepackage{graphicx}
\usepackage{listings}
\usepackage{mathabx}
\usepackage{mathtools}
\usepackage{multirow}
\usepackage{multicol}
\usepackage{rotating}
\usepackage{stmaryrd}
\usepackage{subfigure}
\usepackage{tablefootnote}
\usepackage{longtable}
\usepackage{threeparttable}
\usepackage{wrapfig}

\theoremstyle{plain}
\newtheorem{theorem}{Theorem}
\newtheorem{proposition}{Proposition}
\newtheorem{lemma}{Lemma}

\theoremstyle{definition}
\newtheorem{definition}{Definition}

\theoremstyle{remark}

\title{Accessible, Realistic, and Fair Evaluation of Positive-Unlabeled Learning Algorithms}

\iclrfinalcopy

\author{Wei Wang~$^{1,2*\dagger}$~~~~~Dong-Dong Wu~$^{2,1}$\thanks{\footnotesize{Equal contributions. $^{\dagger}$Corresponding author: Wei Wang~<wei.wang@riken.jp>.}}~~~~~Ming Li~$^{3,2}$\\
\bf Jingxiong Zhang~$^{2}$~~~~~Gang Niu~$^{1}$~~~~~Masashi Sugiyama~$^{1,2}$ \\ 
  $^1$~RIKEN, Tokyo 103-0027, Japan\\
  $^2$~The University of Tokyo, Chiba 277-8561, Japan\\
  $^3$~University of Maryland, College Park, MD 20742, USA\\
  \texttt{wei.wang@riken.jp}\\ \texttt{\{dongdongwu1230,lm1640362161,gang.niu.ml\}@gmail.com}\\
  \texttt{7656472907@edu.k.u-tokyo.ac.jp}~~~\texttt{sugi@k.u-tokyo.ac.jp}
}

\begin{document}
\maketitle
\begin{abstract}
Positive-unlabeled~(PU) learning is a weakly supervised binary classification problem, in which the goal is to learn a binary classifier from only positive and unlabeled data, without access to negative data. In recent years, many PU learning algorithms have been developed to improve model performance. However, experimental settings are highly inconsistent, making it difficult to identify which algorithm performs better. In this paper, we propose the first PU learning benchmark to systematically compare PU learning algorithms. During our implementation, we identify subtle yet critical factors that affect the realistic and fair evaluation of PU learning algorithms. On the one hand, many PU learning algorithms rely on a validation set that includes negative data for model selection. This is unrealistic in traditional PU learning settings, where no negative data are available. To handle this problem, we systematically investigate model selection criteria for PU learning. On the other hand, PU learning involves different problem settings and corresponding solution families, i.e.,~the one-sample and two-sample settings. However, existing evaluation protocols are heavily biased towards the one-sample setting and neglect the significant difference between them. We identify the internal label shift problem of unlabeled training data for the one-sample setting and propose a simple yet effective calibration approach to ensure fair comparisons within and across families. We hope our framework will provide an accessible, realistic, and fair environment for evaluating PU learning algorithms in the future.
\end{abstract}

\section{Introduction}
In binary classification, both positive and negative data are usually necessary to train an effective classifier. However, in many real-world applications, collecting negative data can be more challenging than collecting positive data~\citep{hsieh2015pu,zhou2021pure}. In positive-unlabeled~(PU) learning, only positive and unlabeled data are needed. The objective is to train a binary classifier that assigns positive or negative labels to unseen instances. Therefore, PU learning is a promising weakly supervised binary classification approach for many real-world problems where negative data are difficult to obtain, including recommender systems~\citep{yi2017scalable,chen2023bias}, anomaly detection~\citep{ju2020pumad,tian2024multiscale,takahashi2025positiveunlabeled}, knowledge graphs~\citep{yin2024lambda}, and link prediction~\citep{wu2024unraveling,mao2025boosting}.

In recent years, there has been significant progress in PU learning algorithms. PU learning can be divided into three groups: cost-sensitive PU learning algorithms~\citep{du2014analysis,zhao2022distpu}, sample-selection PU learning algorithms~\citep{chen2020self,wang2023beyond}, and biased PU learning algorithms~\citep{pawel2025learning}. Cost-sensitive algorithms assign different weights to positive and unlabeled data to approximate the classification risk. Sample-selection algorithms select high-confidence negative data from unlabeled data, which are then given to supervised learning algorithms. Biased PU learning algorithms model the biased generation process of positive data and exploit various correction approaches. 

Although many PU learning algorithms have been developed to improve generalization performance, there is a lack of a unified experimental setup in the literature for fairly comparing different PU learning algorithms. The experimental settings of different papers are not consistent with each other, making it difficult to tell which algorithm is better. It has been observed that subtle differences in experimental settings can greatly affect the model performance of PU learning algorithms. Additionally, subtle algorithm details, including data augmentation, algorithm tricks, and warm-up strategies, can also greatly affect model performance~\citep{zhu2023robust,wang2023beyond}. Therefore, a unified experimental protocol is necessary to further promote the development of PU learning algorithms. In this paper, we propose the first PU learning benchmark to systematically and fairly compare state-of-the-art PU learning algorithms with unified experimental settings. We propose careful and unified implementations of the data generation, algorithm training, and evaluation processes for PU learning algorithms. This makes it easier for users to validate the effectiveness of their newly developed algorithms. 

In our implementations, we observe that many PU learning algorithms rely on a validation set containing both positive and negative data for meta-learning, model selection, or early stopping~\citep{chen2020self,zhu2023robust,long2024positive}. However, accessing negative data is unrealistic and contradicts the original motivation of PU learning~\citep{elkan2008learning}, which goes against the advantages of PU learning in not depending on negative data. Actually, if we can obtain some negative data, we can directly apply supervised learning techniques, which can greatly boost model performance~\citep{sakai2017semi}. Therefore, standardizing the composition and use of the validation set is vital to fairly and practically evaluating PU learning algorithms. In this paper, we systematically revisit the model selection criteria for PU learning by using only positive and unlabeled validation data, and validate their effectiveness with both theoretical and empirical analyses.

\begin{wrapfigure}[12]{r}{0.6\textwidth}
  \centering
  \vspace{-10pt}
  \subfigure[One-Sample]{
    \includegraphics[width=0.283\textwidth]{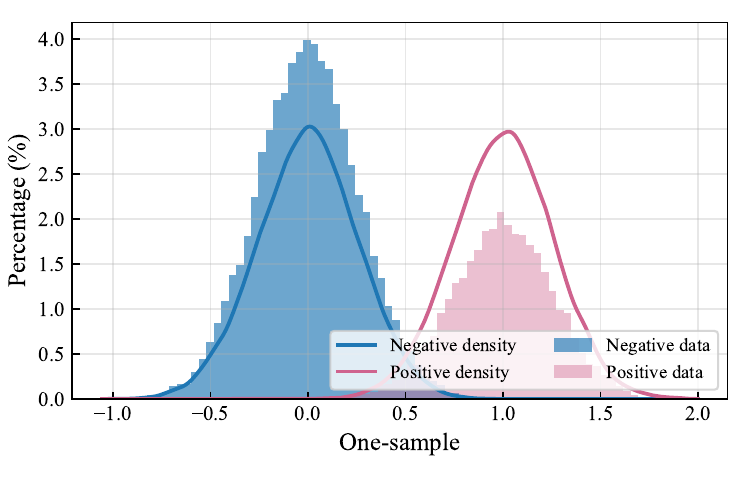}
  }
  \subfigure[Two-Sample]{
    \includegraphics[width=0.283\textwidth]{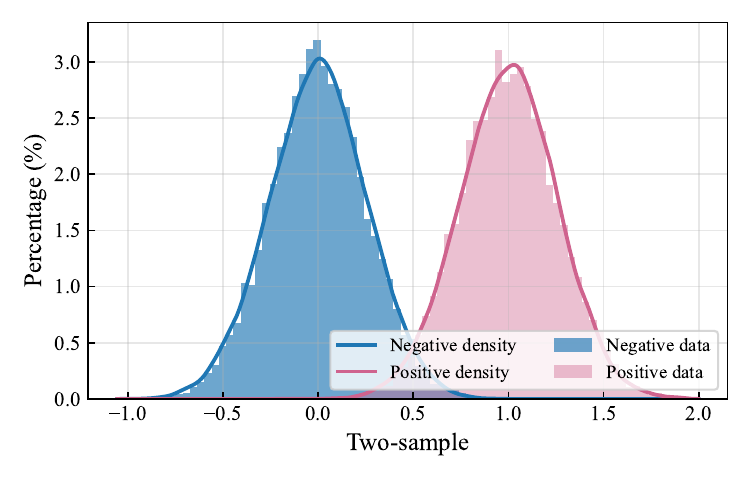}
  }
  \vspace{-13pt}
\caption{An example of the comparison of the distribution of unlabeled training data in different PU learning settings.}\label{fig:intro}
\end{wrapfigure}

In addition, there are different families and corresponding solutions of PU learning algorithms, but existing evaluations fail to consider the differences between these families. From the perspective of data generation processes, there are two types of PU learning problems: the one-sample~(OS) and two-sample~(TS) settings. In the OS setting, the positive and unlabeled training sets are generated sequentially. An unlabeled dataset is first sampled from the marginal density. Then, if an instance in the unlabeled dataset is positive, its positive label is observed with a constant probability. If an instance in the unlabeled dataset is negative, its label is never observed, and the instance remains unlabeled. Finally, the observed positive data constitute the positive training set, while the remaining unlabeled data constitute the unlabeled training set. In the TS setting, the positive and unlabeled training sets are generated independently, meaning that the density of unlabeled training data is the same as the marginal density. This indicates that the density of unlabeled training data is different in these two settings. Figure~\ref{fig:intro} shows an example of the distribution of unlabeled data under the OS and TS settings. We can find that the class priors of the two settings are different. This causes an internal label shift~(ILS) problem for the unlabeled training data when adopting the OS setting as the evaluation setting. Unfortunately, this problem has typically been overlooked. Existing evaluation protocols are heavily biased towards the OS setting and compare OS and TS algorithms together without specific manipulations. This can deteriorate the performance of TS PU learning algorithms and lead to unfair experimental comparisons. Therefore, we identify the ILS problem for the first time in the PU learning literature and propose a simple yet effective calibration approach to overcome it with theoretical guarantees.

We draw the following key takeaways from our benchmark results:
\begin{itemize}[leftmargin=1em, itemsep=1pt, topsep=0pt, parsep=-1pt]
\item No single algorithm outperforms all others on every dataset or evaluation metric; some early, simple methods already achieve strong classification performance. Therefore, we should choose which PU learning algorithm to use on a case-by-case basis.
\item The model-selection problem in PU learning must be addressed when designing new algorithms or conducting empirical comparisons, and different selection criteria should be used for different test metrics.
\item The performance of TS PU learning algorithms degrades significantly when evaluated under the OS setting without proper adaptation, so OS protocols in the existing PU learning literature do not reflect the true performance of TS methods. Hence, differences between OS and TS settings must be considered to ensure fair cross-family comparisons.
\end{itemize}
\section{Preliminaries}
In this section, we present the background of PU learning and existing state-of-the-art algorithms.
\subsection{Positive-Unlabeled Learning}
\noindent{\textbf{Problem Setting.}}~~~Let $\mathcal{X}\subseteq \mathbb{R}^d$ denote the $d$-dimensional feature space and $\mathcal{Y}=\left\{+1,-1\right\}$ denote the binary label space. Let $p(\bm{x}, y)$ denote the joint probability density over the random variables $(\bm{x}, y)\in \mathcal{X}\times \mathcal{Y}$. In PU learning, we are given a positive training set $D_{\mathrm{P}}=\left\{\left(\bm{x}_i,+1\right)\right\}_{i=1}^{n_{\mathrm{P}}}$ and an unlabeled training set $D_{\mathrm{U}}=\left\{\bm{x}_i\right\}_{i=n_{\mathrm{P}}+1}^{n_{\mathrm{P}}+n_{\mathrm{U}}}$. Let $\pi=p(y=+1)$ denote the class prior probability of the positive class. Let $p(\bm{x}|y=+1)$ and $p(\bm{x}|y=-1)$ denote the positive and negative class-conditional densities, respectively. Let $p(\bm{x})$ denote the marginal density. The goal of PU learning is to learn a binary classifier $f:\mathcal{X}\rightarrow \mathbb{R}$ from $D_{\mathrm{P}}\bigcup D_{\mathrm{U}}$ that maximizes the \emph{expected accuracy}
\begin{equation}
\mathrm{ACC}(f)=\mathbb{E}_{p(\bm{x},y)}\mathbb{I}\left(yf(\bm{x})\geq 0\right),
\end{equation} 
where $\mathbb{E}$ denotes the expectation and $\mathbb{I}$ denotes the indicator function. However, since the 0-1 loss function is difficult to optimize, we usually use a surrogate loss function $\ell$, such as the logistic loss. Then, the \emph{classification risk} to be minimized can be expressed as
\begin{equation}\label{eq:risk}
R(f) = \mathbb{E}_{p(\bm{x},y)}\left[\ell\left(f\left(\bm{x}\right),y\right)\right].
\end{equation}
\noindent{\textbf{Data Generation Assumption.}}~~~There are mainly two data generation assumptions for PU learning, i.e., the TS setting~\citep{du2014analysis,niu2016theoretical,chen2020variational} and the OS setting~\citep{elkan2008learning,coudray2023risk}. In the TS setting, we assume that $\mathcal{D}_{\mathrm P}$ and $\mathcal{D}_{\mathrm U}$ are generated \emph{independently}, where $\mathcal{D}_{\mathrm P}$ is sampled from the positive conditional density $p(\bm{x}|y=+1)$ and $\mathcal{D}_{\mathrm U}$ is sampled from the marginal density $p(\bm{x})$. In the OS setting, $\mathcal{D}_{\mathrm U}$ and $\mathcal{D}_{\mathrm P}$ are generated \emph{sequentially}. First, $\mathcal{D}_{\mathrm U}$ is sampled from the marginal density $p(\bm{x})$. Second, for each example in $\mathcal{D}_{\mathrm U}$, if it is positive, its positive label is observed with a \emph{constant probability} $c>0$. If an example is negative, its negative label is never observed and the example remains unlabeled with probability 1. Finally, the observed positive data constitute $\mathcal{D}_{\mathrm P}$ and all the unlabeled data left constitute $\mathcal{D}_{\mathrm U}$.

\subsection{Positive-Unlabeled Learning Algorithms}
From a methodology taxonomy perspective, PU learning algorithms can be divided into three groups: cost-sensitive algorithms, sample-selection algorithms, and biased PU learning algorithms. Cost-sensitive algorithms assign different weights to positive and unlabeled data to approximate the classification risk~\citep{du2015convex,kiryo2017positive,hsieh2019classification}. Some algorithms are equipped with other regularization techniques to further improve performance, such as entropy minimization~\citep{zhao2022distpu,jiang2023positive} and mixup technique~\citep{chen2020variational,li2022who}. Sample-selection algorithms select reliable negative examples from the unlabeled dataset for supervised learning~\citep{chen2020self,garg2021mixture,wang2023beyond,li2024positive}. Biased PU learning algorithms consider the density of positive data to be biased and adopt different strategies to model the bias~\citep{Bekker2019beyond,gong2022instance,coudray2023risk,wang2023pue,pawel2025learning}.
\section{Model Selection for Positive-Unlabeled Learning}\label{sec:model_select}
In this section, we first explain our motivation for studying the model selection problem in PU learning. Next, we review the criteria used for model selection in PU learning, including the proxy accuracy, proxy area under the curve score, and oracle accuracy.

\subsection{Motivation}
Although model selection is well established for supervised learning, it is non-trivial for PU learning because negative data are inaccessible. This problem is particularly important for deep learning algorithms because they have many hyperparameters, including universal hyperparameters~(e.g., learning rates and weight decay) and algorithm-specific hyperparameters. Previous work has usually conducted model selection by assuming a validation set with  labels~(i.e.,~both positive and negative labels) is available. However, this assumption is inconsistent with the definition of PU learning, in which negative data are unavailable. Therefore, it is important to study the model selection problem systematically for PU learning. According to the original definition of PU learning~\citep{bekker2020learning}, we assume that the validation set consists of a positive validation set $D'_{\mathrm{P}}=\left\{\left(\bm{x}'_i,+1\right)\right\}_{i=1}^{n'_{\mathrm{P}}}$ and an unlabeled validation set $D'_{\mathrm{U}}=\left\{\bm{x}'_i\right\}_{i=n'_{\mathrm{P}}+1}^{n'_{\mathrm{P}}+n'_{\mathrm{U}}}$.
\subsection{Proxy Accuracy}
Although the validation accuracy cannot be directly calculated because of the absence of negative data, it has been shown that the expected accuracy can be expressed using only positive and unlabeled data~\citep{du2014analysis}. This motivates us to apply it for model selection. 
\begin{definition}[Proxy accuracy~(PA)]
The proxy accuracy of a binary classifier $f$ on the PU validation dataset is defined as 
\begin{equation}
\mathrm{PA}(f)=
\begin{cases}
&\frac{2\pi}{n'_{\mathrm{P}}}\sum\nolimits_{i=1}^{n'_{\mathrm{P}}}\mathbb{I}\left(f(\bm{x}'_i)\geq 0\right)+\frac{1}{n'_{\mathrm{U}}}\sum\nolimits_{i=n'_{\mathrm{P}}+1}^{n'_{\mathrm{P}}+n'_{\mathrm{U}}}\mathbb{I}\left(f(\bm{x}'_i)<0\right)\text{, if the setting is TS;}\\
&\frac{2\pi}{n'_{\mathrm{P}}}\sum\nolimits_{i=1}^{n'_{\mathrm{P}}}\mathbb{I}\left(f(\bm{x}'_i)\geq 0\right)+\frac{1}{n'_{\mathrm{P}}+n'_{\mathrm{U}}}\sum\nolimits_{i=1}^{n'_{\mathrm{P}}+n'_{\mathrm{U}}}\mathbb{I}\left(f(\bm{x}'_i)<0\right)\text{, if the setting is OS}.
\end{cases}
\end{equation}
\end{definition}
PA can be calculated using only PU validation data when the class prior $\pi$ is known or estimated~\citep{ramaswamy2016mixture,yao2022rethinking,zhu2023mixture}. The following proposition then holds.
\begin{proposition}\label{prop:pacc}
For two classifiers $f_1$ and $f_2$ that satisfy $\mathbb{E}\left[\mathrm{PA}(f_1)\right]<\mathbb{E}\left[\mathrm{PA}(f_2)\right]$, we have $\mathrm{ACC}(f_1)<\mathrm{ACC}(f_2)$.
\end{proposition}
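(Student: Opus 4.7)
The plan is to prove the slightly stronger identity $\mathbb{E}[\mathrm{PA}(f)] = \mathrm{ACC}(f) + \pi$ in both the TS and OS settings, so that the additive constant $\pi$ (which is independent of $f$) makes the strict ordering of expected PA transfer directly to the strict ordering of ACC, which is exactly the claim.

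First I would handle the TS setting, where $D'_{\mathrm{P}}$ is an i.i.d.\ sample from $p(\bm{x}\mid y=+1)$ and $D'_{\mathrm{U}}$ is an i.i.d.\ sample from $p(\bm{x})$. Taking expectations termwise, the positive term contributes $2\pi\,\mathbb{E}_{p(\bm{x}\mid +1)}[\mathbb{I}(f(\bm{x})\geq 0)]$. For the unlabeled term I would use the class-prior decomposition $p(\bm{x}) = \pi\,p(\bm{x}\mid +1) + (1-\pi)\,p(\bm{x}\mid -1)$ together with $\mathbb{I}(f(\bm{x})<0) = 1 - \mathbb{I}(f(\bm{x})\geq 0)$ on the positive conditional. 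The coefficients in front of $\mathbb{E}_{p(\bm{x}\mid +1)}[\mathbb{I}(f(\bm{x})\geq 0)]$ then combine as $2\pi - \pi = \pi$, leaving
\[
\mathbb{E}[\mathrm{PA}(f)] = \pi\,\mathbb{E}_{p(\bm{x}\mid +1)}[\mathbb{I}(f(\bm{x})\geq 0)] + (1-\pi)\,\mathbb{E}_{p(\bm{x}\mid -1)}[\mathbb{I}(f(\bm{x})<0)] + \pi,
\]
which is exactly $\mathrm{ACC}(f) + \pi$ by definition of ACC.

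For the OS setting the only difference is the denominator $n'_{\mathrm{P}}+n'_{\mathrm{U}}$ in the unlabeled term, but this is natural: $D'_{\mathrm{P}}\cup D'_{\mathrm{U}}$ is by construction the original i.i.d.\ sample from $p(\bm{x})$, so its expectation still equals $\mathbb{E}_{p(\bm{x})}[\mathbb{I}(f(\bm{x})<0)]$. The first term is slightly more delicate because $n'_{\mathrm{P}}$ is itself random under the labeling mechanism; I would condition on $n'_{\mathrm{P}}>0$ and use that the revealed positives are conditionally i.i.d.\ from $p(\bm{x}\mid +1)$, so the inner average is still an unbiased estimator of $\mathbb{E}_{p(\bm{x}\mid +1)}[\mathbb{I}(f(\bm{x})\geq 0)]$. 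The same algebra then gives $\mathbb{E}[\mathrm{PA}(f)] = \mathrm{ACC}(f)+\pi$, and the proposition follows. The only genuine subtlety is this OS bookkeeping around the random $n'_{\mathrm{P}}$; everything else is a direct rewriting of the mixture density, and the key cancellation is the coefficient identity $2\pi - \pi = \pi$ that arises when the positive contribution inside the unlabeled term is subtracted from the doubled positive term.
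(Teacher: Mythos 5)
Your proposal is correct and follows essentially the same route as the paper: both establish the identity $\mathbb{E}[\mathrm{PA}(f)] = \mathrm{ACC}(f) + \pi$ via the mixture decomposition $p(\bm{x}) = \pi\,p(\bm{x}\mid y=+1) + (1-\pi)\,p(\bm{x}\mid y=-1)$ and the substitution $\mathbb{I}(f(\bm{x})<0) = 1 - \mathbb{I}(f(\bm{x})\geq 0)$ on the positive conditional, with the paper merely running the algebra from $\mathrm{ACC}$ toward $\mathrm{PA}$ rather than the reverse. Your extra care about the randomness of $n'_{\mathrm{P}}$ in the OS setting is a legitimate bookkeeping point that the paper's proof passes over silently, but it does not change the argument.
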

The proof can be found in Appendix~\ref{proof:prop_pacc}. According to Proposition~\ref{prop:pacc}, a classifier with a higher expected value of the proxy accuracy can achieve a higher expected accuracy even when the true labels are inaccessible. This means that when the number of validation data is large, the best model chosen using the PA metric will achieve the highest accuracy in expectation. One limitation of PA is that knowledge of the class prior is necessary. However, knowledge of $\pi$ is an intrinsic and common issue in PU learning. Addressing this issue is beyond the scope of our paper. In practice, we can estimate it using off-the-shelf estimation methods~\citep{ramaswamy2016mixture,garg2021mixture,yao2022rethinking}, and we can even obtain this knowledge in some real-world applications~\citep{sugiyama2022machine}.
\subsection{Proxy AUC Score}
It has been shown that the area under the curve~(AUC) score can be robust to corrupted labels for binary classification~\citep{charoenphakdee2019on,wei2022robust}. Therefore, it is promising to employ it for PU model selection. First, we introduce the expected AUC score as follows:
\begin{equation}
\mathrm{AUC}(f)=\mathbb{E}_{p(\bm{x}|y=+1)}\mathbb{E}_{p(\bm{x}'|y'=-1)}\left[\mathbb{I}\left(f(\bm{x})>f(\bm{x}')\right)+\frac{1}{2}\mathbb{I}\left(f(\bm{x})=f(\bm{x}')\right)\right].
\end{equation}
We then consider the unlabeled validation data to be corrupted negative data and calculate the AUC score as follows, which is suitable for both OS and TS settings.
\begin{definition}[Proxy AUC score~(PAUC)]
The proxy AUC of a binary classifier $f$ on the PU validation dataset is defined as 
\begin{equation}
\mathrm{PAUC}(f)=\frac{1}{n'_{\mathrm{P}}n'_{\mathrm{U}}}\sum\nolimits_{i=1}^{n'_{\mathrm{P}}}\sum\nolimits_{j=n'_{\mathrm{P}}+1}^{n'_{\mathrm{P}}+n'_{\mathrm{U}}}\left(\mathbb{I}\left(f(\bm{x}'_i)>f(\bm{x}'_j)\right)+\frac{1}{2}\mathbb{I}\left(f(\bm{x}'_i)=f(\bm{x}'_j)\right)\right).  
\end{equation}
\end{definition}
The following proposition then holds.
\begin{proposition}\label{prop:pauc}
Under both the OS and TS settings, for two classifiers $f_1$ and $f_2$ that satisfy $\mathbb{E}\left[\mathrm{PAUC}(f_1)\right]<\mathbb{E}\left[\mathrm{PAUC}(f_2)\right]$, we have $\mathrm{AUC}(f_1)<\mathrm{AUC}(f_2)$.
\end{proposition}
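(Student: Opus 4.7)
The plan is to show that, in both the TS and OS settings, $\mathbb{E}[\mathrm{PAUC}(f)]$ is an affine, strictly increasing function of $\mathrm{AUC}(f)$; the proposition then follows at once by applying this function to $f_1$ and $f_2$. The key observation is that the unlabeled validation density is a mixture of the positive and negative class conditionals, so pairing a positive validation point with an unlabeled one decomposes into a positive--positive comparison, which averages to $1/2$ by symmetry, and a positive--negative comparison, which is exactly $\mathrm{AUC}(f)$.

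First I would write the unlabeled validation density explicitly as a mixture of the class conditionals. In the TS setting it is just the marginal, $\pi\, p(\bm{x}|y=+1) + (1-\pi)\, p(\bm{x}|y=-1)$. In the OS setting, a positive instance is revealed with probability $c$, so an unlabeled point is positive with the reduced weight $\pi' = \pi(1-c)/\bigl(\pi(1-c) + (1-\pi)\bigr)$, yielding $p_{\mathrm{U}}(\bm{x}) = \pi'\, p(\bm{x}|y=+1) + (1-\pi')\, p(\bm{x}|y=-1)$. Let $\tau$ denote this mixing weight ($\tau=\pi$ in TS, $\tau=\pi'$ in OS). Since the $n'_{\mathrm{P}} n'_{\mathrm{U}}$ terms in the PAUC sum are identically distributed, I would compute
\begin{equation}
\mathbb{E}[\mathrm{PAUC}(f)] = \mathbb{E}_{p(\bm{x}|y=+1)}\, \mathbb{E}_{p_{\mathrm{U}}(\bm{x}')}\!\left[\mathbb{I}(f(\bm{x}) > f(\bm{x}')) + \tfrac{1}{2}\mathbb{I}(f(\bm{x}) = f(\bm{x}'))\right],
\end{equation}
and substitute the mixture for $p_{\mathrm{U}}$ to split this into a positive--positive term and a positive--negative term. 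The positive--negative term equals $(1-\tau)\,\mathrm{AUC}(f)$ by definition of the expected AUC. For the positive--positive term I would invoke the symmetry that for i.i.d.\ $\bm{x},\bm{x}' \sim p(\cdot|y=+1)$ we have $\Pr[f(\bm{x})>f(\bm{x}')] = \Pr[f(\bm{x})<f(\bm{x}')]$, so together with the tie probability the inner expectation evaluates to $1/2$. This gives
\begin{equation}
\mathbb{E}[\mathrm{PAUC}(f)] = \tfrac{\tau}{2} + (1-\tau)\,\mathrm{AUC}(f).
\end{equation}

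Finally, since $\pi \in (0,1)$ we have $\tau \in [0,1)$ in both settings, so the affine map $a \mapsto \tau/2 + (1-\tau)a$ is strictly increasing in $a$. Applying it to the hypothesis $\mathbb{E}[\mathrm{PAUC}(f_1)] < \mathbb{E}[\mathrm{PAUC}(f_2)]$ yields $\mathrm{AUC}(f_1) < \mathrm{AUC}(f_2)$. The only non-routine step is identifying the correct mixture weight $\pi'$ in the OS case and noticing that the subsequent argument does not depend on its specific value beyond $\tau < 1$; once the mixture view is in place, the rest is a short symmetry argument plus affine monotonicity, and I do not anticipate further technical obstacles.
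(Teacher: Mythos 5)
Your proof is correct and takes essentially the same approach as the paper: both decompose the unlabeled validation density as a mixture of the class conditionals (with mixing weight $\pi$ in TS and $(1-c)\pi/(1-c\pi)$ in OS), evaluate the positive--positive comparison as $1/2$ by symmetry, and conclude via an affine strictly monotone relation between $\mathbb{E}[\mathrm{PAUC}(f)]$ and $\mathrm{AUC}(f)$. The only difference is that you solve for $\mathbb{E}[\mathrm{PAUC}]$ in terms of $\mathrm{AUC}$ while the paper inverts the same relation, which is immaterial.
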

The proof can be found in Appendix~\ref{proof:prop_pauc}. Proposition~\ref{prop:pauc} shows that a classifier with a higher expected value of the proxy AUC score will achieve a higher expected AUC score, regardless of whether the setting is OS or TS. Therefore, when the number of validation data is large, the model selected with the highest PAUC can also achieve the highest expected value of the AUC score. An advantage is that the class prior $\pi$ is not necessary when calculating the PAUC. 
\subsection{Oracle Accuracy}
Finally, we introduce the oracle accuracy metric if the true labels of unlabeled data are available.
\begin{definition}[Oracle accuracy~(OA)] 
The oracle accuracy of a binary classifier $f$ on the PU validation dataset is defined as 
\begin{equation}
\mathrm{OA}(f)=
\begin{cases}
&\frac{1}{n'_{\mathrm{U}}}\sum\nolimits_{i=n'_{\mathrm{P}}+1}^{n'_{\mathrm{P}}+n'_{\mathrm{U}}}\mathbb{I}\left(y'_i f(\bm{x}'_i)\geq 0\right)\text{, if the setting is TS;}\\
&\frac{1}{n'_{\mathrm{P}}+n'_{\mathrm{U}}}\sum\nolimits_{i=1}^{n'_{\mathrm{P}}+n'_{\mathrm{U}}}\mathbb{I}\left(y'_i f(\bm{x}'_i)\geq 0\right)\text{, if the setting is OS.}
\end{cases}
\end{equation}
Here, $y'_i$ is the true label of $\bm{x}'_i$.
\end{definition}
Notably, the implementations for the OS and TS settings differ slightly, as it is important to ensure that the validation data have the same distribution as the test data. OA is a natural metric for supervised learning. However, due to the absence of negative data, it cannot be calculated in the traditional PU learning setting. Unfortunately, this metric has actually been widely used in the PU learning literature because of a lack of standardized benchmarking. Therefore, this paper only includes the results of OA for comparison. We recommend using PA and PAUC in future PU learning experiments, especially in real-world applications where negative data cannot be obtained.
\section{Internal Label Shift in Positive-Unlabeled Learning}\label{sec:ils}
In this section, we first introduce the ILS problem in PU learning. Then, we provide a calibration approach to solve it with both theoretical and empirical analysis.
\subsection{Problem Statement}
The difference between the OS and TS settings lies in the density of the unlabeled training data. Specifically, the density of the unlabeled training data equals the marginal density in the TS setting but differs from it in the OS setting. We formalize the ILS problem as follows.
\begin{definition}[Internal label shift in OS PU learning]In the OS setting, the density of $\mathcal{D}_{\mathrm{U}}$ is $\widebar{p}(\bm{x})=\widebar{\pi}p(\bm{x}|y=+1)+(1-\widebar{\pi})p(\bm{x}|y=-1)$, where $\widebar{\pi}$ is the class prior under the OS setting. Here, the positive and negative class-conditional densities are the same as those of the test data; however, the class prior is $\widebar{\pi}=(1-c)\pi/(1-c\pi)$, which differs from $\pi$, the class prior of the test data. This mismatch causes an internal label shift between the unlabeled training data and the test data.
\end{definition}
Many cost-sensitive PU learning algorithms have been developed for the TS setting. In these algorithms, positive and unlabeled data are assigned different weights to approximate the classification risk~\citep{du2014analysis,chen2020variational,zhao2022distpu}. Because the weights are theoretically derived, small discrepancies in data assumptions can degrade performance. Conversely, sample-selection PU learning algorithms select reliable negative data from $\mathcal{D}_{\mathrm{U}}$ and need not rely strictly on the specific data generation process~\citep{zhu2023robust,wang2023beyond,li2024positive}. However, many papers adopt only the OS setting and ignore the distribution mismatch, causing experimental datasets to violate the assumptions of TS approaches.

To demonstrate how ILS affects model performance, we use uPU~\citep{du2015convex} as an example in Section~\ref{sec:ils}; it is a representative TS algorithm and underpins many subsequent cost-sensitive methods.\footnote{Our analysis and calibration approach can be extended to other TS algorithms as well.} Under the TS assumption $\mathcal{D}_{\mathrm{U}}\!\stackrel{\text{i.i.d.}}{\sim}\!p(\bm{x})$, \citet{du2015convex} proposed the unbiased risk estimator~(URE)
\begin{equation}\label{eq:ure_pu}
\widehat{R}(f)=\frac{\pi}{n_{\rm P}}\sum_{i=1}^{n_{\rm P}}\left(\ell\left(f(\bm{x}_i),+1\right)-\ell\left(f(\bm{x}_i),-1\right)\right)+\frac{1}{n_{\rm U}}\sum_{i=n_{\rm P}+1}^{n_{\rm P}+n_{\rm U}}\ell\left(f(\bm{x}_i),-1\right),
\end{equation}
which enjoys risk consistency because $\mathbb{E}[\widehat{R}(f)]=R(f)$. Let $\widehat{f}=\mathop{\arg\min_{f\in\mathcal{F}}}\widehat{R}(f)$ and $f^{*}=\mathop{\arg\min_{f\in\mathcal{F}}}R(f)$ denote the classifiers that minimize the empirical risk in Eq.~(\ref{eq:ure_pu}) and the risk in Eq.~(\ref{eq:risk}), respectively, where $\mathcal{F}$ is the model class. It is known that $\widehat{f}\rightarrow f^{*}$ as $n_{\rm P}\rightarrow\infty$ and $n_{\rm U}\rightarrow\infty$ under the TS setting~\citep{niu2016theoretical}. Under the OS setting, however, $\mathbb{E}[\widehat{R}(f)]\neq R(f)$, so $\widehat{f}\rightarrow f^{*}$ no longer holds~(see Appendix~\ref{apd:bias}). Consequently, minimizing losses designed for the TS setting may not yield high-performing classifiers when datasets are generated under the OS setting, leading to unfair comparisons when all methods are evaluated in the OS setting. The bias stems from the ILS problem: under the OS setting, the class prior of $\mathcal{D}_{\mathrm{U}}$ differs from $\pi$, breaking the consistency of many TS algorithms and degrading their performance.
\begin{figure*}[tbp]
  \centering
  \subfigure[uPU]{
    \includegraphics[width=0.232\textwidth]{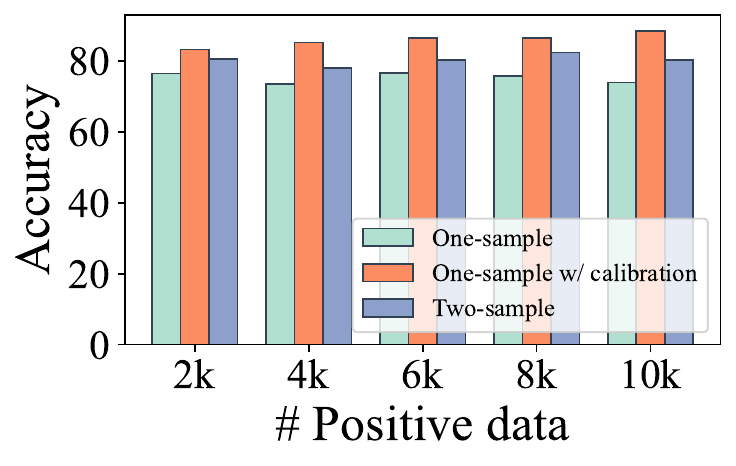}
  }
  \subfigure[nnPU]{
    \includegraphics[width=0.232\textwidth]{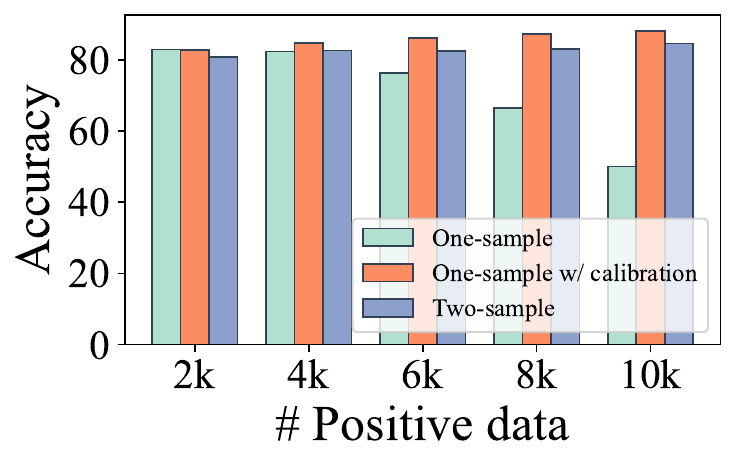}
  }
  \subfigure[nnPU-GA]{
    \includegraphics[width=0.232\textwidth]{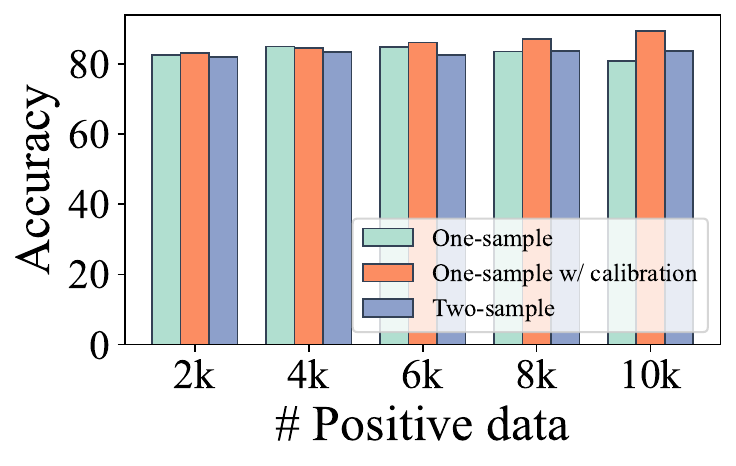}
  }
  \subfigure[PUSB]{
    \includegraphics[width=0.232\textwidth]{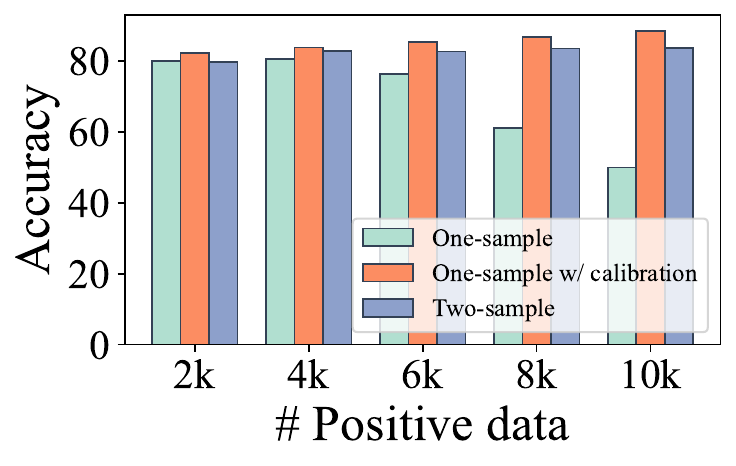}
  }
  \\
  \subfigure[Dist-PU]{
    \includegraphics[width=0.232\textwidth]{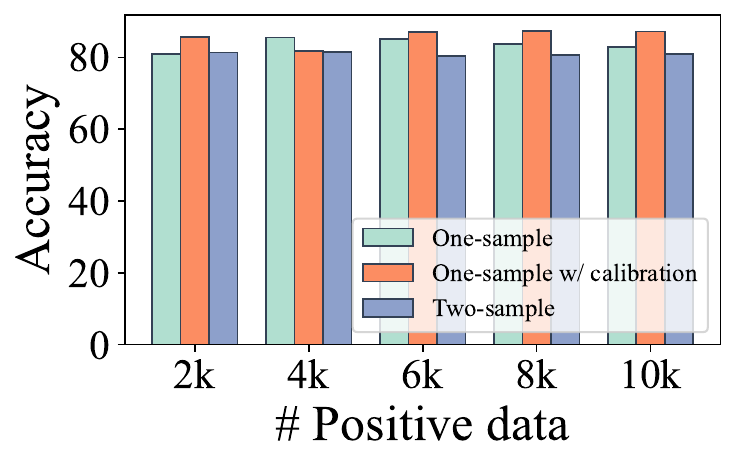}
  }
  \subfigure[VPU]{
    \includegraphics[width=0.232\textwidth]{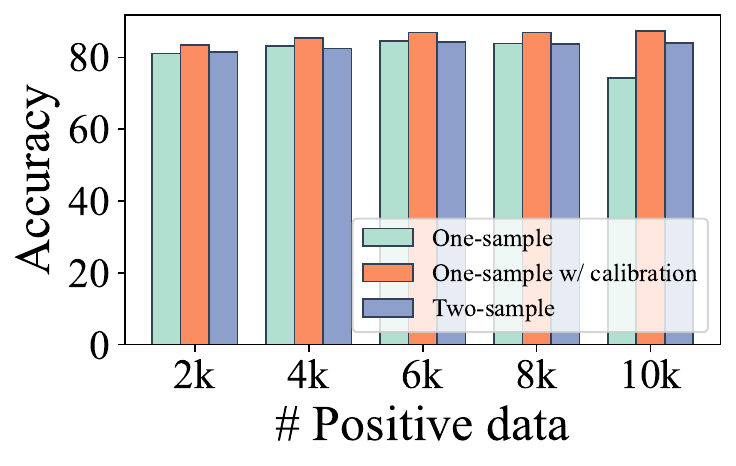}
  }  
  \subfigure[uPU]{
    \includegraphics[width=0.232\textwidth]{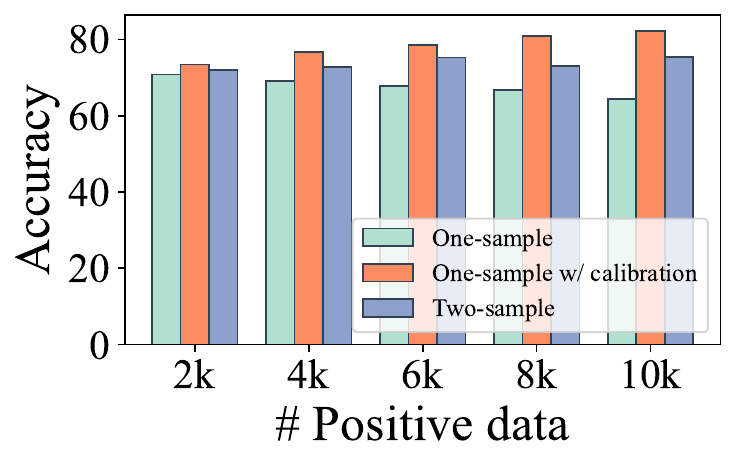}
  }
  \subfigure[nnPU]{
    \includegraphics[width=0.232\textwidth]{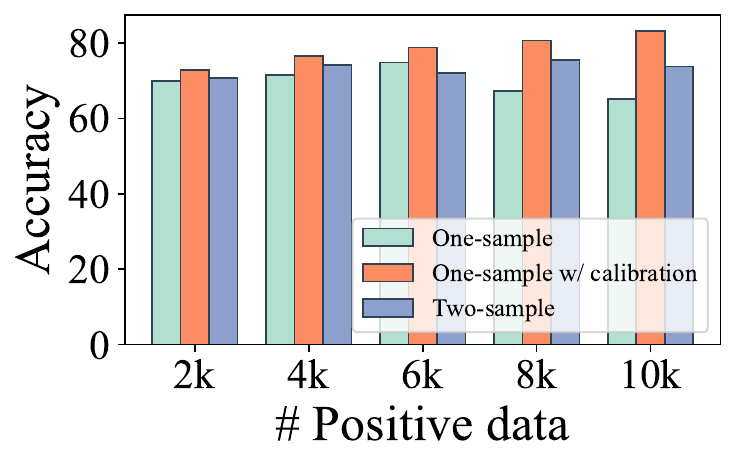}
  }\\
  \subfigure[nnPU-GA]{
    \includegraphics[width=0.232\textwidth]{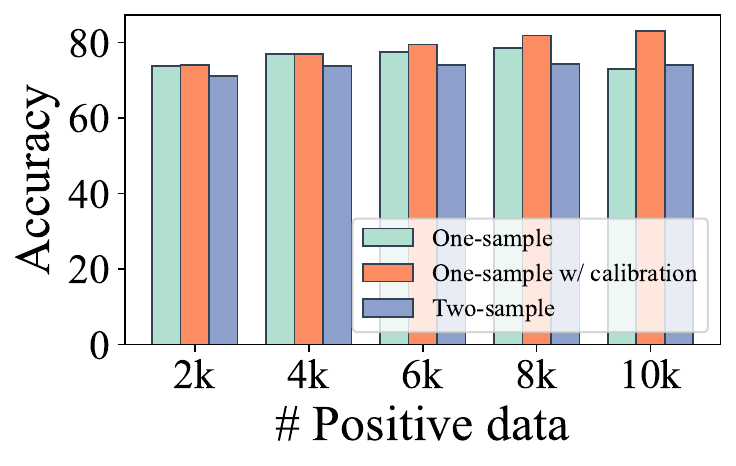}
  }
  \subfigure[PUSB]{
    \includegraphics[width=0.232\textwidth]{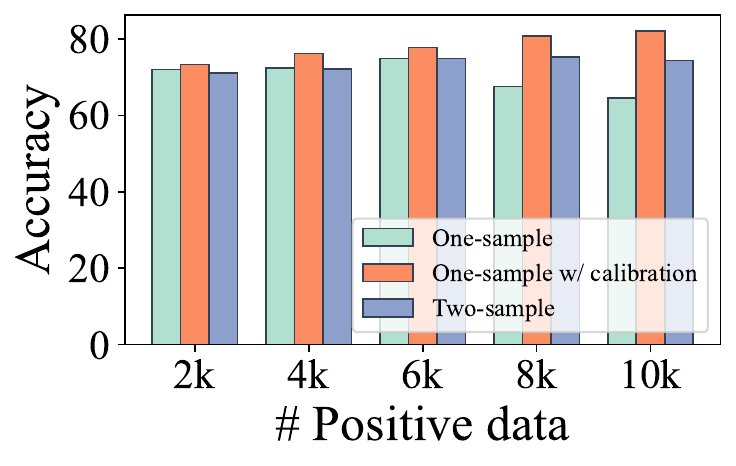}
  }
  \subfigure[Dist-PU]{
    \includegraphics[width=0.232\textwidth]{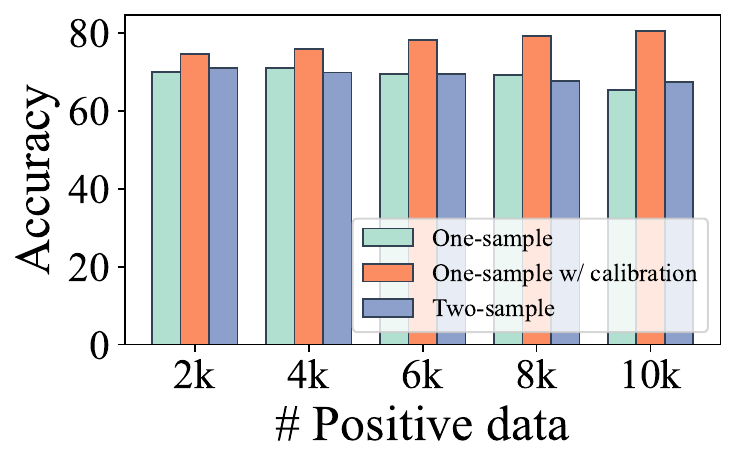}
  }
  \subfigure[VPU]{
    \includegraphics[width=0.232\textwidth]{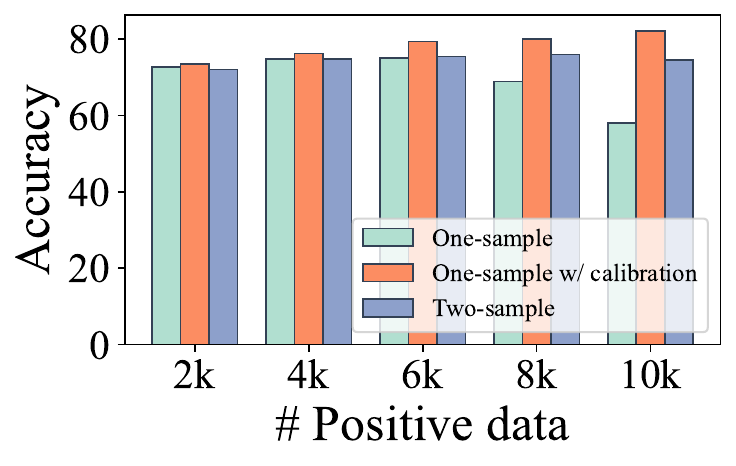}
  }    
  \vspace{-10pt}
  \caption{Classification accuracies of TS PU learning algorithms in OS and TS settings of a PU version of CIFAR-10 with varying amounts of positive data. Figures~(a) to~(f) are for Case 1, and Figures~(g) to~(l) are for Case 2.}\label{fig:sec1_cifar10}
  \vspace{-10pt}
\end{figure*}
\subsection{The Proposed Calibration Approach}
To address the bias, we incorporate the true densities of $\mathcal{D}_{\mathrm{U}}$ for TS algorithms. The following theorem shows that the risk rewrite for the uPU approach differs under the OS setting.
\begin{theorem}\label{thm:cal_ure}
Under the OS setting, the classification risk in Eq.~(\ref{eq:risk}) can be equivalently expressed as
\begin{equation}
R(f)=\pi\mathbb{E}_{p(\bm{x}|y=+1)}\left[\ell(f(\bm{x}), +1)+(c-1)\ell(f(\bm{x}), -1)\right]+\left(1-c\pi\right)\mathbb{E}_{\widebar{p}(\bm{x})}\left[\ell(f(\bm{x}), -1)\right].\nonumber
\end{equation}
\end{theorem}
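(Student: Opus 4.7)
The plan is to start from the standard class-conditional decomposition of the classification risk,
\begin{equation*}
R(f) = \pi\,\mathbb{E}_{p(\bm{x}|y=+1)}[\ell(f(\bm{x}),+1)] + (1-\pi)\,\mathbb{E}_{p(\bm{x}|y=-1)}[\ell(f(\bm{x}),-1)],
\end{equation*}
and then eliminate the negative class-conditional density $p(\bm{x}|y=-1)$, which is not directly available in the OS setting, by re-expressing it in terms of the two quantities that the theorem's right-hand side actually uses: the positive class-conditional density $p(\bm{x}|y=+1)$ and the unlabeled training density $\widebar{p}(\bm{x})$.

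First I would invert the ILS identity $\widebar{p}(\bm{x})=\widebar{\pi}\,p(\bm{x}|y=+1)+(1-\widebar{\pi})\,p(\bm{x}|y=-1)$ to solve for $p(\bm{x}|y=-1)$, obtaining $p(\bm{x}|y=-1) = \bigl(\widebar{p}(\bm{x}) - \widebar{\pi}\,p(\bm{x}|y=+1)\bigr)/(1-\widebar{\pi})$. Next I would compute $1-\widebar{\pi}$ in closed form from $\widebar{\pi}=(1-c)\pi/(1-c\pi)$; a short calculation gives $1-\widebar{\pi}=(1-\pi)/(1-c\pi)$, and hence $(1-c\pi)\widebar{\pi}=(1-c)\pi$. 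These two identities are exactly what is needed to make the prior-weighted term clean.

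Substituting back, I would rewrite $(1-\pi)\,\mathbb{E}_{p(\bm{x}|y=-1)}[\ell(f(\bm{x}),-1)]$ as
\begin{equation*}
\int \ell(f(\bm{x}),-1)\,\bigl[(1-c\pi)\widebar{p}(\bm{x}) - (1-c)\pi\,p(\bm{x}|y=+1)\bigr]\,\mathrm{d}\bm{x},
\end{equation*}
which splits into $(1-c\pi)\mathbb{E}_{\widebar{p}(\bm{x})}[\ell(f(\bm{x}),-1)] + (c-1)\pi\,\mathbb{E}_{p(\bm{x}|y=+1)}[\ell(f(\bm{x}),-1)]$. Combining this with the original positive term $\pi\,\mathbb{E}_{p(\bm{x}|y=+1)}[\ell(f(\bm{x}),+1)]$ and grouping the two expectations under $p(\bm{x}|y=+1)$ delivers exactly the claimed expression.

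The proof is essentially a bookkeeping exercise, so there is no deep obstacle; the only place one can slip is the simplification of $1-\widebar{\pi}$ and the cancellation $(1-c\pi)\widebar{\pi}=(1-c)\pi$, which is what produces the coefficient $(c-1)$ on $\ell(f(\bm{x}),-1)$ inside the positive-class expectation. I would flag this small algebraic identity explicitly in the write-up so that the reader sees why the OS-specific prior shift is exactly what converts the standard risk into the stated form and, conversely, why applying the TS rewrite in the OS setting introduces the bias discussed in Appendix~\ref{apd:bias}.
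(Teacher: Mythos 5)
Your proposal is correct and follows essentially the same route as the paper's proof: both decompose $R(f)$ by class, invert the mixture identity $\widebar{p}(\bm{x})=\widebar{\pi}\,p(\bm{x}|y=+1)+(1-\widebar{\pi})\,p(\bm{x}|y=-1)$ to express $p(\bm{x}|y=-1)$ via $\widebar{p}(\bm{x})$ and $p(\bm{x}|y=+1)$ using $\widebar{\pi}=(1-c)\pi/(1-c\pi)$, and substitute back. The only difference is cosmetic (you simplify $1-\widebar{\pi}$ before substituting, the paper substitutes $\widebar{\pi}$'s value first), and your key identity $(1-c\pi)\widebar{\pi}=(1-c)\pi$ checks out.
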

The proof is given in Appendix~\ref{proof:cal_ure}. Theorem~\ref{thm:cal_ure} shows that the classification risk can be equivalently expressed as expectations w.r.t.~the densities of positive and unlabeled data under the OS setting. We then obtain a calibrated risk estimator using the positive and unlabeled datasets:
\begin{equation}\label{eq:cal_ure}
\widebar{R}(f)=\frac{\pi}{n_{\rm P}}\sum_{i=1}^{n_{\rm P}}\left(\ell\left(f(\bm{x}_i),+1\right)+(c-1)\ell\left(f(\bm{x}_i),-1\right)\right)+\frac{1-c\pi}{n_{\rm U}}\sum_{i=n_{\rm P}+1}^{n_{\rm P}+n_{\rm U}}\ell\left(f(\bm{x}_i),-1\right).
\end{equation}
When the class prior $\pi$ is known or estimated, we obtain an unbiased estimate of $c$ as $c=n_{\rm P}/\pi(n_{\rm P}+n_{\rm U})$. Let $\widebar{f}=\mathop{\arg\min_{f\in\mathcal{F}}}\widebar{R}(f)$ denote the optimal classifier that minimizes the calibrated risk estimator in Eq.~(\ref{eq:cal_ure}). Let $\mathfrak{R}_{n_{\mathrm{P}}}(\mathcal{F})$ and $\mathfrak{R}'_{n_{\mathrm{U}}}(\mathcal{F})$ denote the Rademacher complexities defined in Appendix~\ref{proof:eeb}. Then, the following theorem holds.
\begin{theorem}\label{thm:cre_eeb}
Assume that there exists a constant $C_{f}$ such that $\sup_{f\in\mathcal{F}}\|f\|_{\infty} \leq C_{f}$ and a constant $C_{\ell}$ such that  $\forall y, \sup_{|z|\leq C_{f}}\ell(z, y) \leq C_{\ell}$. We also assume that $\forall y$, the binary loss function $\ell(z, y)$ is Lipschitz continuous in $z$ with a Lipschitz constant $L_{\ell}$. For any $\delta > 0$, the following inequality holds with probability at least $1 - \delta$:
\begin{align}\textstyle
R(\widebar{f}) - R(f^{*}) \leq&(8-4c)\pi L_{\ell}\mathfrak{R}_{n_{\mathrm{P}}}(\mathcal{F})+(4-4c\pi)L_{\ell}\mathfrak{R}'_{n_{\mathrm{U}}}(\mathcal{F})\nonumber\\
&+\left(\frac{(4-2c)\pi C_{\ell}}{\sqrt{n_{\mathrm{P}}}}+\frac{(2-2c\pi)C_{\ell}}{\sqrt{n_{\mathrm{U}}}}\right)\sqrt{\frac{\ln{2/\delta}}{2}}.
\end{align}
\end{theorem}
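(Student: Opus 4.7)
The plan is to adapt the standard estimation error analysis for empirical risk minimization to the calibrated URE $\widebar{R}(f)$ of Eq.~(\ref{eq:cal_ure}). The enabling fact is already supplied by Theorem~\ref{thm:cal_ure}: under the OS setting, $\mathbb{E}[\widebar{R}(f)] = R(f)$ for every $f \in \mathcal{F}$, so $\widebar{R}$ is an unbiased estimator of $R$.

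First I would apply the usual ERM decomposition. Since $\widebar{R}(\widebar{f}) - \widebar{R}(f^{*}) \leq 0$ by the definition of $\widebar{f}$,
\[
R(\widebar{f}) - R(f^{*}) \leq \sup_{f \in \mathcal{F}}\bigl(R(f) - \widebar{R}(f)\bigr) + \sup_{f \in \mathcal{F}}\bigl(\widebar{R}(f) - R(f)\bigr),
\]
so it suffices to bound each supremum at confidence $1-\delta/2$ and take a union bound.

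Next I would apply McDiarmid's inequality to each supremum. The bounded-difference constants come from the URE weights together with the uniform bound $\ell \leq C_\ell$: perturbing a single positive sample changes $\widebar{R}(f)$ by at most $(4-2c)\pi C_\ell / n_{\mathrm{P}}$, because the composite positive-part loss $\phi_{\mathrm{P}}(z) := \ell(z,+1) + (c-1)\ell(z,-1)$ is uniformly bounded by $(2-c)C_\ell$ on $|z| \leq C_f$; perturbing a single unlabeled sample gives a constant of $(2-2c\pi) C_\ell / n_{\mathrm{U}}$. The McDiarmid tail $\sqrt{\sum_i c_i^2 \ln(2/\delta)/2}$ is then separated by $\sqrt{a+b} \leq \sqrt{a}+\sqrt{b}$ into the additive $1/\sqrt{n_{\mathrm{P}}}$ and $1/\sqrt{n_{\mathrm{U}}}$ contributions stated in the theorem.

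For the expected suprema I would use the standard symmetrization--contraction pipeline. Symmetrization converts each $\mathbb{E}[\sup_f (\cdot)]$ into a Rademacher average of the loss class with a factor of $2$, and Talagrand's contraction lemma then replaces the loss by $\mathcal{F}$ at a cost equal to the loss's Lipschitz constant. The unlabeled term uses the $L_\ell$-Lipschitzness of $\ell(\cdot,-1)$, producing $2(1-c\pi) L_\ell \mathfrak{R}'_{n_{\mathrm{U}}}(\mathcal{F})$; the positive term uses the $(2-c)L_\ell$-Lipschitzness of $\phi_{\mathrm{P}}$ (the triangle inequality gives $L_{\phi_{\mathrm{P}}} \leq L_\ell + (1-c)L_\ell$), producing $2\pi(2-c) L_\ell \mathfrak{R}_{n_{\mathrm{P}}}(\mathcal{F})$. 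Summing over the two suprema in the decomposition doubles these coefficients, yielding the stated $(8-4c)\pi L_\ell$ and $(4-4c\pi)L_\ell$.

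The principal difficulty is tracking the non-standard constants introduced by the composite positive-part loss $\phi_{\mathrm{P}}$: its $(2-c)L_\ell$-Lipschitz constant and $(2-c)C_\ell$ sup-norm, together with the $(1-c\pi)$ weight on the unlabeled term, propagate coupled $c$- and $\pi$-dependent factors through both the Rademacher and the McDiarmid contributions. Once these constants are identified correctly, the remainder is a direct specialisation of the McDiarmid--symmetrisation--contraction pipeline to the two-sample $(D_{\mathrm{P}}, D_{\mathrm{U}})$ structure under the OS generative model, with the class prior $\pi$ and the label frequency $c$ absorbed into the URE weights.
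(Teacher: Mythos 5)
Your overall route is the same as the paper's: decompose $R(\widebar{f})-R(f^{*})$ into two uniform deviations using $\widebar{R}(\widebar{f})\leq\widebar{R}(f^{*})$, control the deviation via McDiarmid's inequality plus symmetrization plus Talagrand's contraction, and track the $c$- and $\pi$-dependent weights of the calibrated estimator. The paper packages the two one-sided suprema as $2\sup_{f\in\mathcal{F}}\left|\widebar{R}(f)-R(f)\right|$ through Lemma~\ref{lm:geb}, but that is the same decomposition, and your handling of the Rademacher part (contracting the composite positive-part loss with Lipschitz constant $(2-c)L_{\ell}$, where the paper equivalently contracts $\ell\circ\mathcal{F}$ with $L_{\ell}$ and carries $(2-c)$ as an outer weight) is fine.

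The one genuine problem is a factor-of-two inconsistency in the concentration step. The bounded-difference constants you state are per-supremum quantities, yet you take them to be $(4-2c)\pi C_{\ell}/n_{\mathrm{P}}$ and $(2-2c\pi)C_{\ell}/n_{\mathrm{U}}$, i.e., twice the oscillation of the relevant summands: for $\ell\in[0,C_{\ell}]$ the composite loss $\ell(z,+1)+(c-1)\ell(z,-1)$ takes values in an interval of length $(2-c)C_{\ell}$ (not $2(2-c)C_{\ell}$), and the unlabeled summand in an interval of length $C_{\ell}$, so the correct per-supremum constants are $\pi(2-c)C_{\ell}/n_{\mathrm{P}}$ and $(1-c\pi)C_{\ell}/n_{\mathrm{U}}$, as in the paper. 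You then double the symmetrization coefficients when summing over the two suprema (correctly reaching $(8-4c)\pi L_{\ell}$ and $(4-4c\pi)L_{\ell}$) but do \emph{not} double the McDiarmid tail, which must likewise be paid once per supremum. These two slips cancel, so you land on the stated constants, but the derivation as written does not establish them: carried out consistently with your constants it yields a deviation term twice as large as claimed, while carried out with the tight oscillation-based constants and the doubling it gives exactly the bound in Theorem~\ref{thm:cre_eeb}.
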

The proof is given in Appendix~\ref{proof:eeb}. Theorem~\ref{thm:cre_eeb} shows that $\widebar{f}\rightarrow f^{*}$ as $n_{\rm P}\rightarrow\infty$ and $n_{\rm U}\rightarrow\infty$, because $\mathfrak{R}_{n_{\rm U},\widebar{p}}(\mathcal{F})\rightarrow 0$ and $\mathfrak{R}_{n_{\rm P},p_{+}}(\mathcal{F})\rightarrow 0$ for all parametric models with a bounded norm, such as deep neural networks trained with weight decay~\citep{golowich2018size}. Notably, Eq.~(\ref{eq:cal_ure}) can be equivalently transformed into Eq.~(\ref{eq:ure_pu}) if we incorporate $\mathcal{D}_{\mathrm{P}}$ into $\mathcal{D}_{\mathrm{U}}$ when computing the last loss term w.r.t.~unlabeled data in Eq.~(\ref{eq:ure_pu})~(see Appendix~\ref{apd:derivation_equiv}). Thus, when $\mathcal{D}_{\mathrm{P}}$ is used in both loss terms, the ILS bias is eliminated, because the union of positive and unlabeled data is unbiased w.r.t.~the marginal density. This motivates a simple yet effective calibration approach that adapts TS algorithms to the OS setting, summarized in Algorithm~\ref{alg:cal_pu}. We augment $\mathcal{D}_{\mathrm{U}}$ with $\mathcal{D}_{\mathrm{P}}$ when computing the loss on unlabeled data, so the replenished set is marginally unbiased and suitable for TS PU learners.
\begin{wrapfigure}[12]{r}{0.6\textwidth} 
\begin{minipage}{\linewidth}
\begin{algorithm}[H]
\small
\caption{Calibrated Two-Sample PU Learning}\label{alg:cal_pu}
\begin{algorithmic}[1]
\Require Two-sample PU learning algorithm $\mathcal{A}$, positive training set $\mathcal{D}_{\mathrm{P}}$, unlabeled training set $\mathcal{D}_{\mathrm{U}}$, maximum epochs $T_{\text{max}}$, maximum iterations $I_{\text{max}}$.
\Ensure Classifier $f$ produced by $\mathcal{A}$.
\For{$t = 1,2,\ldots,T_{\text{max}}$}
  \State \textbf{Shuffle} $\mathcal{D}_{\mathrm{P}}$ and $\mathcal{D}_{\mathrm{U}}$;
  \For{$k = 1,\ldots,I_{\text{max}}$}
    \State \textbf{Fetch} mini-batch $\mathcal{D}^{\mathrm{P}}_{k}$ from $\mathcal{D}_{\mathrm{P}}$ and $\mathcal{D}^{\mathrm{U}}_{k}$ from $\mathcal{D}_{\mathrm{U}}$;
    \State \textbf{Call} $\mathcal{A}.\textproc{train\_one\_batch}(\mathcal{D}^{\mathrm{P}}_{k}, \mathcal{D}^{\mathrm{U}}_{k}\bigcup \mathcal{D}^{\mathrm{P}}_{k})$
  \EndFor
\EndFor
\end{algorithmic}
\end{algorithm}
\end{minipage}
\vspace{5pt}
\end{wrapfigure}
\begin{figure*}[tbp]
  \centering
  \subfigure[uPU]{
    \includegraphics[width=0.182\textwidth]{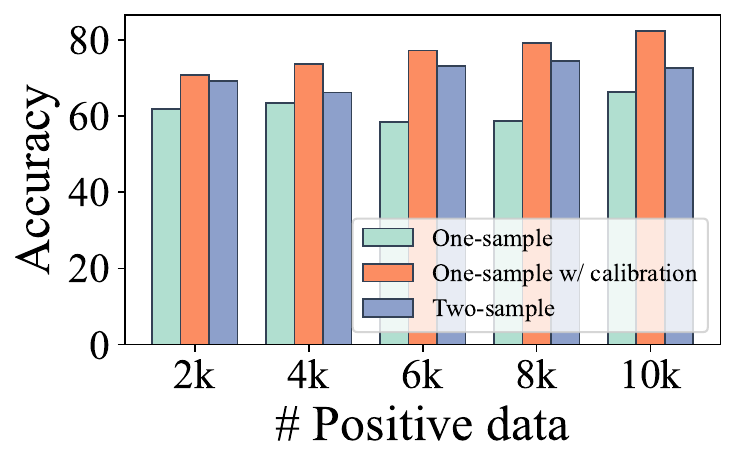}
  }
  \subfigure[nnPU]{
    \includegraphics[width=0.182\textwidth]{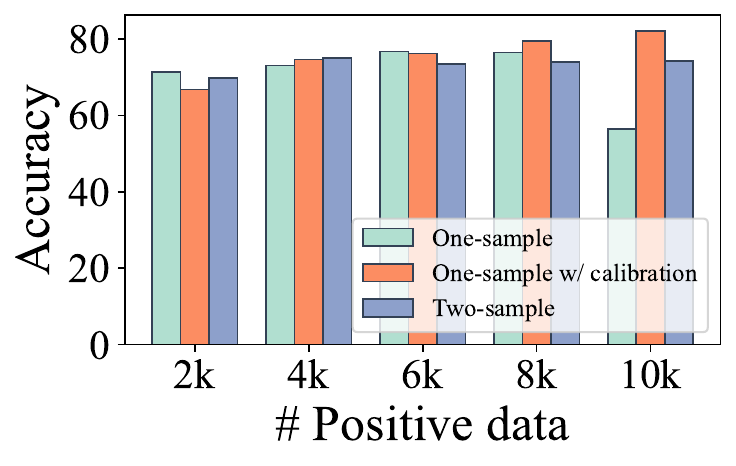}
  }
  \subfigure[nnPU-GA]{
    \includegraphics[width=0.182\textwidth]{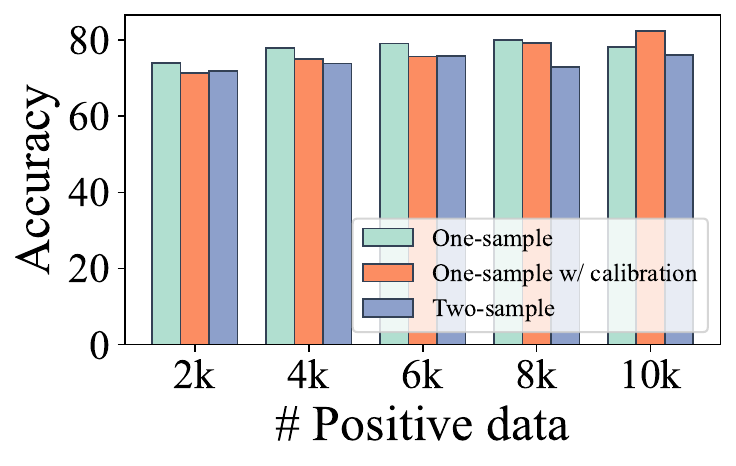}
  }
  \subfigure[PUSB]{
    \includegraphics[width=0.182\textwidth]{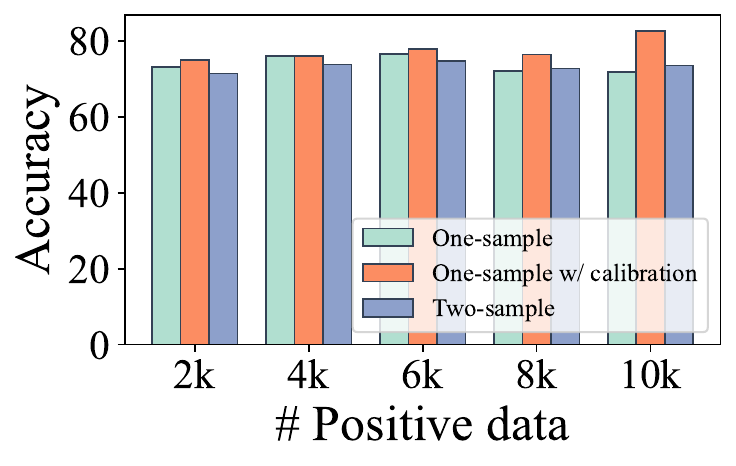}
  }
  \subfigure[VPU]{
    \includegraphics[width=0.182\textwidth]{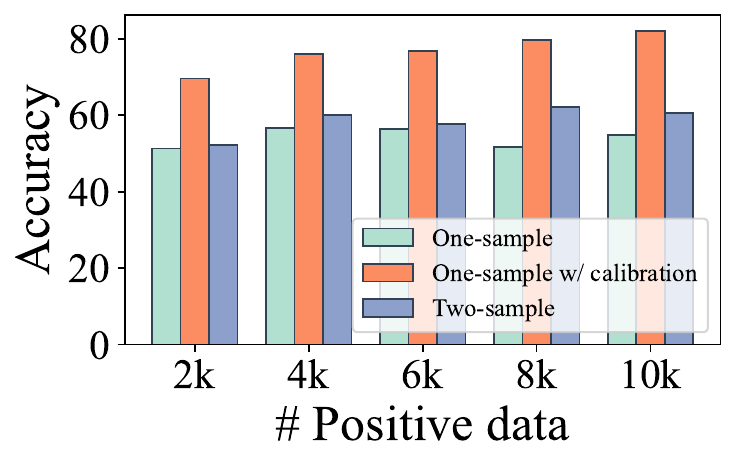}
  }
  \\
  \subfigure[uPU]{
    \includegraphics[width=0.182\textwidth]{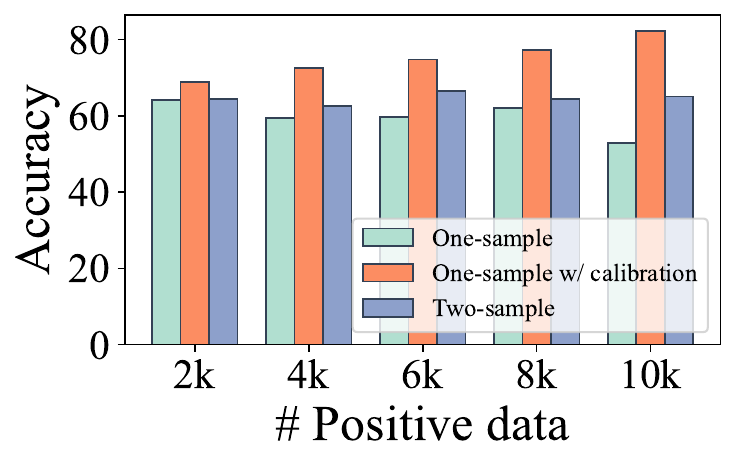}
  }
  \subfigure[nnPU]{
    \includegraphics[width=0.182\textwidth]{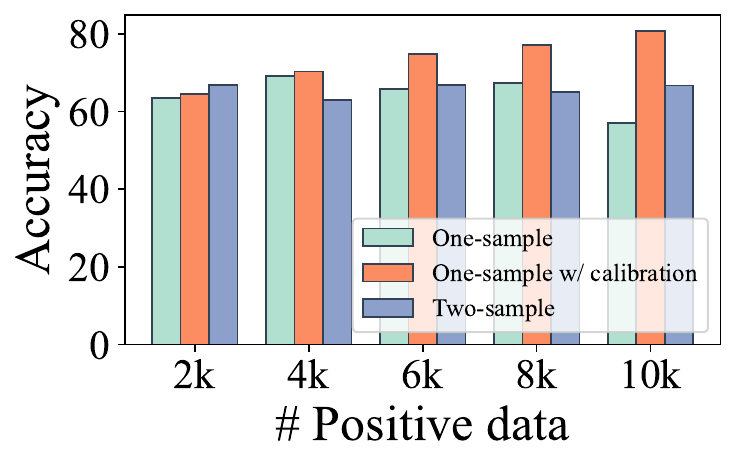}
  }
  \subfigure[nnPU-GA]{
    \includegraphics[width=0.182\textwidth]{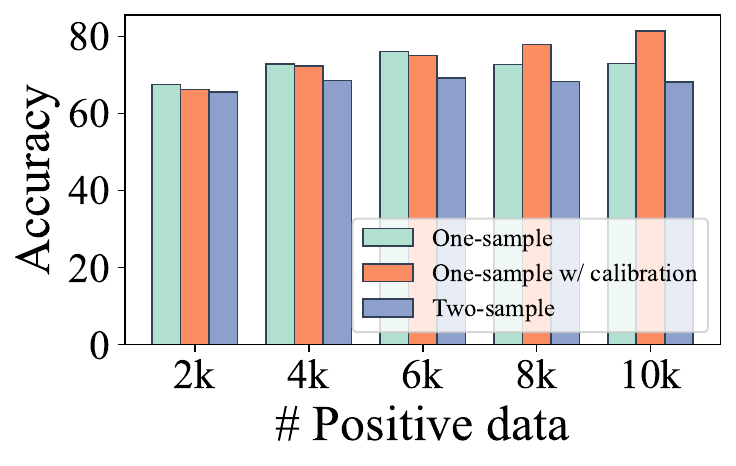}
  }
  \subfigure[PUSB]{
    \includegraphics[width=0.182\textwidth]{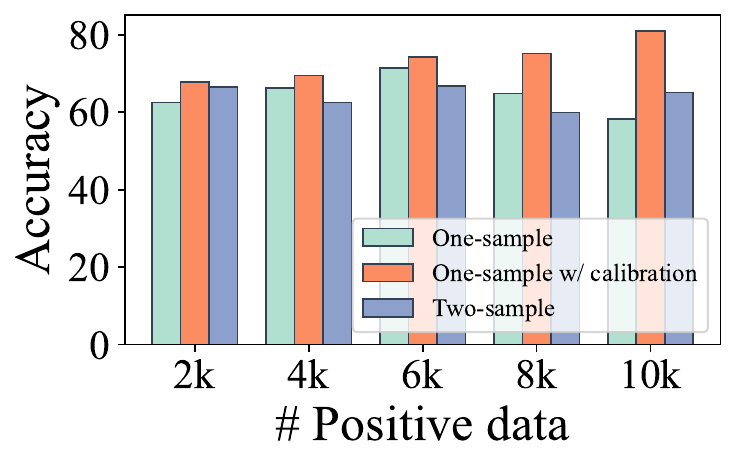}
  }
  \subfigure[VPU]{
    \includegraphics[width=0.182\textwidth]{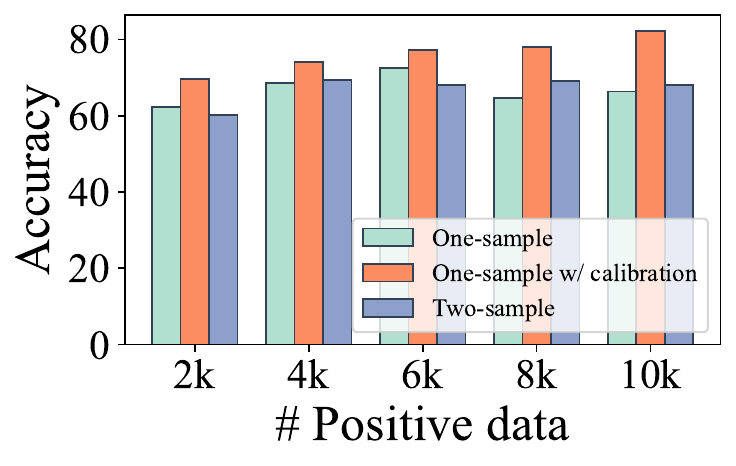}
  }
  \vspace{-10pt}
  \caption{Classification accuracies of TS PU learning algorithms in OS and TS settings of a PU version of ImageNette with varying amounts of positive data. Figures~(a) to~(e) are for Case 1, and Figures~(f) to~(j) are for Case 2.}\label{fig:sec1_imagenette}
  \vspace{-15pt}
\end{figure*}

\subsection{Empirical Analysis}
We validated the existence of the ILS problem and the effectiveness of the proposed calibration approach. We used uPU~\citep{du2015convex}, nnPU~\citep{kiryo2017positive}, nnPU-GA~\citep{kiryo2017positive}, PUSB~\citep{kato2019learning}, VPU~\citep{chen2020variational}, and Dist-PU~\citep{zhao2022distpu}, six representative TS PU learning algorithms. We used CIFAR-10~\citep{krizhevsky2009learning} and ImageNette~\citep{deng2009imagenet} as the datasets. We synthesized PU training datasets with different definitions of positive and negative labels, where the details are presented in Appendix~\ref{apd:exp_setting}. We did not include the results of Dist-PU on ImageNette since Dist-PU did not work well on this dataset. We considered both the OS and TS cases using the same experimental settings, and the only difference lay in how positive data were generated. Figures~\ref{fig:sec1_cifar10} and~\ref{fig:sec1_imagenette} show the experimental results on CIFAR-10 and ImageNette with varying amounts of positive data, respectively. We can observe that using TS approaches directly in the OS setting yields inferior performance. Their performance consistently drops when the number of positive data increases, even though we have more knowledge of the true labels of positive data in the unlabeled dataset. By using our proposed calibration approach, the performance can be improved greatly and can even sometimes surpass the performance in the TS setting. This shows the effectiveness of our calibration approach in improving TS approaches under the OS setting.
\begin{table*}[t]
\centering
\scriptsize
\caption{Test results~(mean$\pm$std) of accuracy, AUC, and F1 score for each algorithm on CIFAR-10~(Case 1) under different model selection criteria. The best performance w.r.t.~each validation metric is shown in bold. Here, ``-c'' indicates using the proposed calibration technique in Algorithm~\ref{alg:cal_pu}.}\label{tab:CIFAR10-set3-1-merged-test}
\vspace{3pt}
\resizebox{0.99\textwidth}{!}{
\begin{tabular}{l|ccc|ccc|ccc}
\toprule
Test metric & \multicolumn{3}{c|}{Accuracy} & \multicolumn{3}{c|}{AUC} & \multicolumn{3}{c}{F1} \\
\midrule
Val metric & PA & PAUC & OA  & PA & PAUC & OA  & PA & PAUC & OA \\
\midrule
PUbN & 86.46$\pm$0.46 & 86.24$\pm$0.84 & 87.33$\pm$0.28 & 93.96$\pm$0.49 & 93.44$\pm$0.74 & 94.33$\pm$0.23 & 86.62$\pm$0.52 & 86.07$\pm$0.98 & 86.98$\pm$0.24 \\
PAN & 76.64$\pm$0.78 & 77.56$\pm$0.41 & 78.91$\pm$0.59 & 87.11$\pm$0.86 & 87.28$\pm$0.93 & 85.70$\pm$0.68 & 79.08$\pm$0.73 & 79.41$\pm$0.61 & 78.74$\pm$0.95 \\
CVIR & 85.45$\pm$1.03 & 83.32$\pm$0.44 & 86.47$\pm$0.48 & 93.74$\pm$0.73 & 93.67$\pm$0.62 & 93.73$\pm$0.31 & 86.19$\pm$0.88 & 84.71$\pm$0.33 & 86.51$\pm$0.40 \\
P3MIX-E & 72.68$\pm$6.26 & 50.00$\pm$0.00 & 73.96$\pm$5.63 & 88.80$\pm$2.65 & 92.62$\pm$0.67 & 89.56$\pm$2.18 & 77.65$\pm$3.65 & 66.67$\pm$0.00 & 67.45$\pm$12.03 \\
P3MIX-C & 86.36$\pm$0.58 & 85.75$\pm$0.76 & 86.65$\pm$0.57 & 92.70$\pm$0.71 & 93.09$\pm$0.65 & 93.16$\pm$0.43 & 86.44$\pm$0.51 & 85.93$\pm$0.70 & 86.72$\pm$0.58 \\
LBE & 82.71$\pm$0.73 & 73.60$\pm$1.29 & 85.03$\pm$0.38 & 92.09$\pm$0.15 & 93.21$\pm$0.04 & 92.26$\pm$0.31 & 83.79$\pm$0.49 & 78.72$\pm$0.76 & 84.31$\pm$0.31 \\
Count Loss & 80.89$\pm$0.32 & 79.86$\pm$0.88 & 82.39$\pm$0.37 & 90.63$\pm$0.69 & 90.40$\pm$0.45 & 89.20$\pm$1.27 & 82.60$\pm$0.28 & 81.83$\pm$0.39 & 83.11$\pm$0.39 \\
Robust-PU & 85.57$\pm$0.18 & 85.61$\pm$0.55 & 85.91$\pm$0.35 & 91.56$\pm$0.49 & 92.89$\pm$0.29 & 91.04$\pm$1.60 & 85.88$\pm$0.09 & 84.80$\pm$0.96 & 85.47$\pm$0.32 \\
Holistic-PU & 50.20$\pm$0.10 & 50.00$\pm$0.00 & 81.81$\pm$0.49 & 64.56$\pm$11.51 & 69.45$\pm$5.04 & 90.60$\pm$0.41 & 66.64$\pm$0.03 & 66.67$\pm$0.00 & 82.97$\pm$0.37 \\
PUe & 77.85$\pm$0.85 & 78.51$\pm$0.33 & 80.45$\pm$0.46 & 86.84$\pm$0.61 & 86.60$\pm$0.45 & 87.58$\pm$0.44 & 79.45$\pm$0.55 & 78.01$\pm$0.48 & 78.99$\pm$0.28 \\
GLWS & 84.46$\pm$0.45 & 79.83$\pm$2.30 & 85.66$\pm$0.44 & 93.55$\pm$0.07 & 93.54$\pm$0.14 & 93.48$\pm$0.16 & 85.65$\pm$0.36 & 82.69$\pm$1.46 & 86.26$\pm$0.32 \\
\midrule
uPU & 80.24$\pm$1.25 & 76.07$\pm$2.83 & 82.04$\pm$0.49 & 88.72$\pm$0.40 & 89.05$\pm$0.17 & 87.36$\pm$0.73 & 81.05$\pm$0.90 & 77.01$\pm$1.41 & 80.34$\pm$0.56 \\
\rowcolor{gray!10} uPU-c & 85.89$\pm$0.44 & 84.20$\pm$0.49 & 86.48$\pm$0.21 & 92.65$\pm$0.38 & 93.03$\pm$0.22 & 93.22$\pm$0.15 & 85.96$\pm$0.43 & 83.04$\pm$0.92 & 86.12$\pm$0.10 \\
nnPU & 82.03$\pm$0.11 & 75.56$\pm$0.29 & 82.40$\pm$0.31 & 92.62$\pm$0.15 & 92.32$\pm$0.47 & 91.95$\pm$0.44 & 83.51$\pm$0.05 & 79.64$\pm$0.25 & 83.49$\pm$0.05 \\
\rowcolor{gray!10} nnPU-c & 85.52$\pm$0.20 & 86.03$\pm$0.68 & 86.35$\pm$0.26 & 92.19$\pm$0.33 & 93.07$\pm$0.55 & 92.95$\pm$0.38 & 85.90$\pm$0.28 & 85.71$\pm$0.70 & 86.29$\pm$0.30 \\
nnPU-GA & 84.26$\pm$0.80 & 84.18$\pm$0.40 & 84.93$\pm$0.70 & 92.79$\pm$0.47 & 92.26$\pm$0.36 & 92.25$\pm$0.46 & 84.87$\pm$0.62 & 84.63$\pm$0.42 & 84.58$\pm$0.53 \\
\rowcolor{gray!10} nnPU-GA-c & 85.80$\pm$0.29 & 86.28$\pm$0.31 & 86.13$\pm$0.25 & 92.81$\pm$0.42 & 92.96$\pm$0.47 & 93.00$\pm$0.42 & 85.90$\pm$0.31 & 85.66$\pm$0.27 & 85.57$\pm$0.19 \\
PUSB & 81.53$\pm$0.77 & 82.49$\pm$1.02 & 82.91$\pm$0.70 & 81.53$\pm$0.77 & 82.49$\pm$1.02 & 82.91$\pm$0.70 & 83.29$\pm$0.47 & 83.80$\pm$0.77 & 84.12$\pm$0.53 \\
\rowcolor{gray!10} PUSB-c & 86.15$\pm$0.37 & 84.76$\pm$0.17 & 86.49$\pm$0.17 & 86.15$\pm$0.37 & 84.76$\pm$0.17 & 86.49$\pm$0.17 & 86.09$\pm$0.44 & 83.89$\pm$0.19 & 86.23$\pm$0.18 \\
VPU & 84.93$\pm$0.52 & 65.71$\pm$7.32 & 85.80$\pm$0.40 & 91.89$\pm$0.08 & 92.89$\pm$0.54 & 92.86$\pm$0.20 & 84.15$\pm$0.59 & 42.73$\pm$17.09 & 84.91$\pm$0.49 \\
\rowcolor{gray!10} VPU-c & 86.41$\pm$0.75 & 82.85$\pm$1.68 & 87.65$\pm$0.25 & 92.30$\pm$0.31 & 93.51$\pm$0.53 & 91.79$\pm$1.62 & 86.73$\pm$0.55 & 84.56$\pm$1.15 & 87.41$\pm$0.29 \\
Dist-PU & 81.64$\pm$0.45 & 79.31$\pm$0.51 & 83.56$\pm$0.46 & 90.91$\pm$0.54 & 91.90$\pm$0.48 & 90.59$\pm$0.49 & 83.34$\pm$0.26 & 81.94$\pm$0.23 & 83.26$\pm$0.60 \\
\rowcolor{gray!10} Dist-PU-c & \textbf{87.06$\pm$0.45} & \textbf{87.38$\pm$0.23} & \textbf{88.47$\pm$0.25} & \textbf{94.93$\pm$0.31} & \textbf{94.55$\pm$0.21} & \textbf{94.90$\pm$0.32} & \textbf{87.63$\pm$0.33} & \textbf{87.28$\pm$0.29} & \textbf{88.18$\pm$0.25} \\
\bottomrule
\end{tabular}}
\vspace{-5pt}
\end{table*}
\section{Benchmarking Positive-Unlabeled Learning}\label{sec:exp}
In this section, we first introduce the benchmark settings, then we present the benchmark experimental results. The code package is available at \url{https://github.com/wu-dd/PUBench}.
\subsection{Benchmark Settings}\label{sec:pubench_setting}
We included seventeen representative PU learning algorithms: uPU~\citep{du2015convex}, nnPU~\citep{kiryo2017positive}, nnPU-GA~\citep{kiryo2017positive}, PUSB~\citep{kato2019learning}, PUbN~\citep{hsieh2019classification}, VPU~\citep{chen2020variational}, PAN~\citep{hu2021predictive}, CVIR~\citep{garg2021mixture}, Dist-PU~\citep{zhao2022distpu}, P$^3$MIX-E~\citep{li2022who}, P$^3$MIX-C~\citep{li2022who}, LBE~\citep{gong2022instance}, Count Loss~\citep{shukla2023unified}, Robust-PU~\citep{zhu2023robust}, Holistic-PU~\citep{wang2023beyond}, PUe~\citep{wang2023pue}, and GLWS~\citep{chen2024general}. We evaluated our methods on two image datasets~(CIFAR-10~\citep{krizhevsky2009learning} and ImageNette~\citep{deng2009imagenet}) and two UCI datasets~(USPS and Letter) from the UCI Machine Learning Repository~\citep{kelly2023uci}. ImageNette is a curated subset of the larger ImageNet corpus, containing ten easily distinguishable categories: \textit{tench, English springer, cassette player, chain saw, church, French horn, garbage truck, gas pump, golf ball, and parachute}. We synthesized PU versions of these datasets; detailed information can be found in Appendix~\ref{apd:exp_setting}. We used ResNet-34~\citep{he2016deep} for image datasets and a multilayer perceptron~(MLP) with a hidden layer width of 500 equipped with the ReLU~\citep{nair2010rectified} activation function for tabular datasets.

Following the widely used validation protocol~\citep{raschka2018model,gulrajani2021in,wang2025realistic}, we divided some training data from the positive and unlabeled datasets into the positive validation set $D'_{\mathrm{P}}$ and the unlabeled validation set $D'_{\mathrm{U}}$, respectively. We used various test metrics, including accuracy, AUC score, F1 score, precision, and recall. We first trained a model with training sets $D_{\mathrm{P}}$ and $D_{\mathrm{U}}$. Then, we evaluated its validation performance based on the metrics in Section~\ref{sec:model_select} as well as its test performance on a test set with true labels. We randomly selected a set of hyperparameter configurations from a given pool. For each validation metric, we selected the checkpoint
with the best validation performance on $D'_{\mathrm{P}}\bigcup D'_{\mathrm{U}}$, and recorded the corresponding test metrics. We recorded the mean test metrics and standard deviations obtained with different data splits. 
\subsection{Benchmark Results}
Tables~\ref{tab:CIFAR10-set3-1-merged-test} to~\ref{tab:IMAGENETTE-set3-2-merged-test}, and~\ref{tab:CIFAR10-set3-1-merged-val} to~\ref{tab:USPS-set3-2-merged-val} in Appendix~\ref{apd:exp_res} report detailed experimental results in terms of different metrics on CIFAR-10, ImageNette, Letter, and USPS, and the hyperparameters are determined with PA, PAUC, and OA, respectively. In addition, Figures~\ref{fig:res_acc} to~\ref{fig:res_recall} show the overall performance of different algorithms. For ease of presentation in figures, we did not include the algorithms where the performance is obviously inferior. We can draw the following conclusions based on the experimental results: 1)~The TS algorithms without calibration perform worse due to the ILS problem, indicating the existence of an evaluation pitfall in the literature. The proposed calibration technique consistently improves the classification performance for TS approaches, demonstrating the effectiveness of the proposed calibration technique. 2)~There is no algorithm that can win in every case of the dataset and evaluation metric. Besides, some early algorithms can already achieve satisfactory classification performance. 3)~Our proposed validation metrics are effective in hyperparameter selection. However, the effectiveness may also depend on the test metric. For example, we can observe from Table~\ref{tab:CIFAR10-set3-1-merged-test} that the model selected using PAUC can achieve better performance than using OA when the test metric is the AUC score. 

\begin{table*}[tbp]
\centering
\scriptsize
\caption{Test results~(mean$\pm$std) of accuracy, AUC, and F1 score for each algorithm on CIFAR-10~(Case 2) under different model selection criteria. The best performance w.r.t.~each validation metric is shown in bold.}
\label{tab:CIFAR10-set3-2-merged-test}
\vspace{3pt}
\resizebox{0.99\textwidth}{!}{
\begin{tabular}{l|ccc|ccc|ccc}
\toprule
Test metric & \multicolumn{3}{c|}{Test ACC} & \multicolumn{3}{c|}{AUC} & \multicolumn{3}{c}{Test F1} \\
\midrule
Val metric & PA & PAUC & OA  & PA & PAUC & OA  & PA & PAUC & OA \\
\midrule
PUbN & 78.26$\pm$1.01 & \textbf{79.50$\pm$0.38} & 79.94$\pm$0.36 & 87.81$\pm$0.65 & 88.00$\pm$0.38 & 88.08$\pm$0.45 & 80.47$\pm$0.65 & 79.17$\pm$0.88 & 79.91$\pm$0.21 \\
PAN & 61.43$\pm$2.74 & 60.61$\pm$4.34 & 63.48$\pm$2.71 & 68.71$\pm$5.63 & 71.54$\pm$4.68 & 69.63$\pm$5.43 & 70.73$\pm$1.40 & 70.87$\pm$1.72 & 69.25$\pm$3.04 \\
CVIR & 78.49$\pm$1.49 & \textbf{79.50$\pm$1.46} & \textbf{80.44$\pm$0.68} & \textbf{88.10$\pm$0.87} & 87.98$\pm$1.33 & \textbf{88.68$\pm$0.81} & \textbf{80.86$\pm$0.97} & \textbf{80.69$\pm$1.33} & \textbf{81.44$\pm$0.58} \\
P3MIX-E & 59.04$\pm$4.54 & 50.00$\pm$0.00 & 59.13$\pm$4.62 & 74.26$\pm$4.26 & 84.52$\pm$0.84 & 74.11$\pm$4.16 & 70.45$\pm$2.00 & 44.44$\pm$18.14 & 70.45$\pm$2.00 \\
P3MIX-C & 78.05$\pm$0.95 & 77.42$\pm$1.40 & 78.70$\pm$0.50 & 85.87$\pm$1.02 & 84.92$\pm$1.40 & 86.13$\pm$0.79 & 79.82$\pm$0.56 & 79.06$\pm$0.92 & 79.90$\pm$0.49 \\
LBE & 72.47$\pm$1.50 & 63.54$\pm$2.86 & 75.96$\pm$0.88 & 84.02$\pm$0.40 & 84.26$\pm$0.78 & 83.47$\pm$0.97 & 77.13$\pm$0.72 & 72.96$\pm$1.42 & 76.04$\pm$0.83 \\
Count Loss & 74.44$\pm$0.68 & 74.75$\pm$0.45 & 76.87$\pm$0.75 & 82.88$\pm$1.02 & 82.99$\pm$1.03 & 84.44$\pm$0.75 & 77.41$\pm$0.54 & 76.70$\pm$0.55 & 78.27$\pm$0.99 \\
Robust-PU & 78.94$\pm$0.79 & 78.43$\pm$0.61 & 79.60$\pm$0.81 & 85.23$\pm$1.09 & 87.13$\pm$0.76 & 86.33$\pm$0.63 & 80.37$\pm$0.72 & 77.16$\pm$0.68 & 79.79$\pm$0.89 \\
Holistic-PU & 55.60$\pm$0.16 & 56.04$\pm$4.93 & 71.18$\pm$1.20 & 78.03$\pm$2.53 & 67.96$\pm$6.67 & 76.93$\pm$3.13 & 69.02$\pm$0.04 & 44.49$\pm$18.12 & 73.64$\pm$2.09 \\
PUe & 68.60$\pm$0.41 & 67.40$\pm$1.90 & 71.05$\pm$0.52 & 78.06$\pm$0.31 & 79.27$\pm$0.51 & 78.69$\pm$0.36 & 73.41$\pm$0.44 & 73.05$\pm$0.71 & 71.06$\pm$1.35 \\
GLWS & 77.71$\pm$0.71 & 76.22$\pm$1.33 & 79.58$\pm$0.61 & 87.86$\pm$0.33 & \textbf{88.08$\pm$0.43} & 87.44$\pm$0.51 & 80.40$\pm$0.37 & 79.75$\pm$0.81 & 80.47$\pm$0.47 \\
\midrule
uPU & 66.21$\pm$1.40 & 69.03$\pm$1.04 & 70.46$\pm$0.70 & 76.46$\pm$1.65 & 78.80$\pm$0.74 & 77.97$\pm$0.90 & 71.52$\pm$0.73 & 72.78$\pm$0.47 & 70.89$\pm$1.53 \\
\rowcolor{gray!10} uPU-c & 77.22$\pm$0.26 & 79.29$\pm$0.37 & 79.02$\pm$0.99 & 85.19$\pm$0.46 & 87.76$\pm$0.38 & 87.11$\pm$0.83 & 79.48$\pm$0.22 & 78.19$\pm$0.45 & 78.60$\pm$1.22 \\
nnPU & 74.27$\pm$1.26 & 62.67$\pm$1.09 & 77.62$\pm$0.68 & 86.16$\pm$0.07 & 86.53$\pm$0.16 & 86.42$\pm$0.58 & 78.00$\pm$0.55 & 72.57$\pm$0.51 & 79.20$\pm$0.52 \\
\rowcolor{gray!10} nnPU-c & 77.74$\pm$0.53 & 78.49$\pm$0.35 & 79.37$\pm$0.30 & 84.84$\pm$0.44 & 86.63$\pm$0.31 & 86.16$\pm$0.22 & 79.79$\pm$0.18 & 77.25$\pm$0.61 & 79.07$\pm$0.39 \\
nnPU-GA & 76.59$\pm$1.15 & 76.73$\pm$0.88 & 78.38$\pm$0.74 & 86.41$\pm$1.24 & 86.09$\pm$1.23 & 86.58$\pm$0.84 & 79.14$\pm$0.95 & 78.76$\pm$1.11 & 78.22$\pm$0.53 \\
\rowcolor{gray!10} nnPU-GA-c & 78.00$\pm$0.52 & 78.32$\pm$0.71 & 79.12$\pm$0.91 & 83.75$\pm$1.30 & 85.82$\pm$1.04 & 85.63$\pm$1.27 & 79.26$\pm$0.81 & 77.78$\pm$0.48 & 79.03$\pm$0.92 \\
PUSB & 75.74$\pm$0.61 & 78.80$\pm$0.55 & 78.35$\pm$0.41 & 75.74$\pm$0.61 & 78.80$\pm$0.55 & 78.35$\pm$0.41 & 79.18$\pm$0.43 & 79.83$\pm$0.59 & 79.79$\pm$0.61 \\
\rowcolor{gray!10} PUSB-c & \textbf{79.06$\pm$0.45} & 77.98$\pm$0.54 & 79.19$\pm$0.32 & 79.06$\pm$0.45 & 77.98$\pm$0.54 & 79.19$\pm$0.32 & 80.06$\pm$0.36 & 77.43$\pm$0.40 & 79.29$\pm$0.40 \\
VPU & 76.99$\pm$1.00 & 63.22$\pm$5.30 & 77.31$\pm$0.86 & 85.47$\pm$0.98 & 87.08$\pm$0.43 & 86.07$\pm$0.67 & 75.15$\pm$1.31 & 39.92$\pm$15.74 & 75.43$\pm$1.33 \\
\rowcolor{gray!10} VPU-c & 77.70$\pm$0.41 & 78.20$\pm$0.90 & 79.81$\pm$0.66 & 86.90$\pm$0.39 & 87.50$\pm$0.46 & 86.32$\pm$0.28 & 80.12$\pm$0.27 & 80.52$\pm$0.53 & 80.56$\pm$0.71 \\
Dist-PU & 73.46$\pm$0.59 & 74.83$\pm$0.58 & 74.69$\pm$0.60 & 80.70$\pm$0.45 & 82.09$\pm$0.40 & 81.48$\pm$0.78 & 76.90$\pm$0.31 & 76.88$\pm$0.16 & 76.65$\pm$0.15 \\
\rowcolor{gray!10} Dist-PU-c & 72.57$\pm$3.47 & 74.41$\pm$2.67 & 74.30$\pm$2.73 & 80.34$\pm$3.48 & 82.49$\pm$2.68 & 81.94$\pm$2.90 & 75.50$\pm$2.34 & 75.27$\pm$2.67 & 73.68$\pm$3.25 \\
\bottomrule
\end{tabular}}
\vspace{-10pt}
\end{table*}

\begin{table*}[tbp]
\centering
\scriptsize
\caption{Test results~(mean$\pm$std) of accuracy, AUC, and F1 score for each algorithm on ImageNette~(Case 1) under different model selection criteria. The best performance w.r.t.~each validation metric is shown in bold.}
\label{tab:IMAGENETTE-set3-1-merged-test}
\vspace{3pt}
\resizebox{0.99\textwidth}{!}{
\begin{tabular}{l|ccc|ccc|ccc}
\toprule
Test metric & \multicolumn{3}{c|}{Test ACC} & \multicolumn{3}{c|}{AUC} & \multicolumn{3}{c}{Test F1} \\
\midrule
Val metric & PA & PAUC & OA  & PA & PAUC & OA  & PA & PAUC & OA \\
\midrule
PUbN & 75.69$\pm$0.02 & 77.07$\pm$0.47 & 78.99$\pm$0.57 & 84.82$\pm$0.28 & 86.25$\pm$0.83 & 87.45$\pm$0.37 & 77.63$\pm$0.16 & 76.30$\pm$1.26 & 79.25$\pm$0.76 \\
PAN & 50.74$\pm$1.17 & 51.52$\pm$0.46 & 56.93$\pm$1.83 & 53.71$\pm$3.39 & 55.48$\pm$1.03 & 55.73$\pm$1.79 & 65.24$\pm$0.19 & 32.31$\pm$14.81 & 45.43$\pm$2.65 \\
CVIR & 78.78$\pm$0.86 & 78.26$\pm$1.62 & \textbf{81.01$\pm$0.67} & \textbf{87.98$\pm$0.35} & \textbf{88.29$\pm$0.65} & \textbf{89.28$\pm$0.38} & 80.12$\pm$0.36 & 79.52$\pm$0.78 & \textbf{81.51$\pm$0.45} \\
P3MIX-E & 74.81$\pm$2.36 & 49.71$\pm$0.48 & 75.19$\pm$2.39 & 82.71$\pm$2.74 & 85.84$\pm$0.68 & 82.91$\pm$2.92 & 76.23$\pm$1.97 & 43.92$\pm$17.93 & 76.41$\pm$2.04 \\
P3MIX-C & \textbf{78.81$\pm$1.61} & \textbf{78.91$\pm$1.84} & 80.25$\pm$0.82 & 85.85$\pm$1.50 & 86.41$\pm$1.31 & 87.33$\pm$1.27 & \textbf{80.26$\pm$1.24} & \textbf{80.35$\pm$1.27} & 80.48$\pm$0.37 \\
LBE & 78.52$\pm$0.41 & 78.73$\pm$0.65 & 79.20$\pm$0.36 & 86.84$\pm$0.37 & 86.31$\pm$0.61 & 86.16$\pm$0.78 & 78.90$\pm$0.32 & 77.09$\pm$1.48 & 78.14$\pm$0.75 \\
Count Loss & 74.98$\pm$0.85 & 75.95$\pm$1.56 & 78.07$\pm$0.73 & 85.50$\pm$0.23 & 85.44$\pm$0.52 & 85.75$\pm$0.74 & 77.84$\pm$0.40 & 77.95$\pm$0.87 & 78.92$\pm$0.91 \\
Robust-PU & 77.67$\pm$0.27 & 75.53$\pm$2.04 & 78.73$\pm$0.43 & 83.93$\pm$0.64 & 85.22$\pm$0.11 & 84.46$\pm$0.93 & 77.86$\pm$0.47 & 71.78$\pm$4.44 & 78.06$\pm$0.59 \\
Holistic-PU & 51.16$\pm$0.47 & 54.42$\pm$3.66 & 53.62$\pm$0.24 & 58.85$\pm$1.01 & 56.45$\pm$6.07 & 55.25$\pm$0.43 & 65.18$\pm$0.31 & 64.23$\pm$1.18 & 51.58$\pm$1.27 \\
PUe & 67.47$\pm$1.88 & 71.46$\pm$1.27 & 70.90$\pm$1.28 & 75.35$\pm$1.52 & 77.29$\pm$1.49 & 77.47$\pm$1.55 & 70.39$\pm$0.48 & 70.97$\pm$1.81 & 71.46$\pm$1.56 \\
GLWS & 76.14$\pm$0.86 & 74.96$\pm$1.62 & 78.68$\pm$0.70 & 87.00$\pm$0.40 & 86.89$\pm$0.71 & 86.96$\pm$0.74 & 78.93$\pm$0.45 & 78.52$\pm$1.01 & 79.56$\pm$0.67 \\
\midrule
uPU & 71.07$\pm$0.95 & 64.14$\pm$6.15 & 73.69$\pm$0.74 & 82.24$\pm$0.61 & 81.60$\pm$1.06 & 81.94$\pm$0.41 & 74.88$\pm$0.50 & 71.58$\pm$2.45 & 74.95$\pm$0.78 \\
\rowcolor{gray!10} uPU-c & 75.00$\pm$0.97 & 72.54$\pm$4.40 & 77.76$\pm$0.66 & 84.13$\pm$0.33 & 85.82$\pm$0.55 & 84.65$\pm$0.65 & 77.16$\pm$0.27 & 63.33$\pm$9.88 & 77.19$\pm$0.67 \\
nnPU & 75.63$\pm$1.34 & 66.81$\pm$1.09 & 77.80$\pm$0.74 & 86.56$\pm$0.38 & 86.12$\pm$0.71 & 86.72$\pm$0.26 & 78.52$\pm$0.77 & 73.97$\pm$0.57 & 78.19$\pm$0.46 \\
\rowcolor{gray!10} nnPU-c & 76.51$\pm$0.61 & 76.95$\pm$0.75 & 77.66$\pm$0.63 & 83.87$\pm$0.71 & 85.08$\pm$0.67 & 83.93$\pm$1.24 & 77.89$\pm$0.33 & 74.59$\pm$1.65 & 77.33$\pm$0.99 \\
nnPU-GA & 75.70$\pm$0.36 & 78.72$\pm$0.64 & 79.40$\pm$0.47 & 83.74$\pm$0.65 & 86.06$\pm$0.88 & 84.45$\pm$1.58 & 78.33$\pm$0.16 & 78.98$\pm$1.22 & 79.13$\pm$0.24 \\
\rowcolor{gray!10} nnPU-GA-c & 77.65$\pm$0.58 & 72.91$\pm$2.33 & 78.56$\pm$0.06 & 81.45$\pm$1.32 & 84.75$\pm$0.53 & 82.69$\pm$1.21 & 77.88$\pm$0.51 & 65.42$\pm$5.22 & 78.34$\pm$0.42 \\
PUSB & 72.73$\pm$0.54 & 77.03$\pm$0.74 & 76.73$\pm$0.35 & 73.10$\pm$0.53 & 77.19$\pm$0.68 & 76.91$\pm$0.33 & 77.26$\pm$0.33 & 78.65$\pm$0.09 & 78.66$\pm$0.16 \\
\rowcolor{gray!10} PUSB-c & 76.37$\pm$0.16 & 77.36$\pm$0.36 & 77.81$\pm$0.60 & 76.48$\pm$0.15 & 77.31$\pm$0.33 & 77.86$\pm$0.60 & 77.37$\pm$0.17 & 76.42$\pm$0.04 & 78.15$\pm$0.80 \\
VPU & 56.36$\pm$2.98 & 50.91$\pm$0.03 & 61.72$\pm$0.41 & 61.21$\pm$2.22 & 82.35$\pm$0.27 & 73.84$\pm$4.69 & 53.86$\pm$6.31 & 0.14$\pm$0.11 & 45.88$\pm$4.26 \\
\rowcolor{gray!10} VPU-c & 77.48$\pm$0.83 & 78.00$\pm$0.50 & 78.06$\pm$0.91 & 83.35$\pm$0.33 & 84.63$\pm$0.49 & 84.28$\pm$0.96 & 78.09$\pm$0.69 & 78.60$\pm$0.40 & 77.64$\pm$0.69 \\
Dist-PU & 70.40$\pm$2.37 & 71.86$\pm$2.34 & 74.68$\pm$0.79 & 83.97$\pm$1.16 & 83.18$\pm$1.51 & 83.92$\pm$0.73 & 75.58$\pm$1.50 & 75.76$\pm$1.61 & 77.00$\pm$0.75 \\
\rowcolor{gray!10} Dist-PU-c & 72.03$\pm$0.99 & 65.88$\pm$3.33 & 73.84$\pm$1.06 & 79.51$\pm$0.44 & 77.83$\pm$0.61 & 80.91$\pm$1.18 & 74.78$\pm$0.58 & 52.05$\pm$10.16 & 74.10$\pm$0.81 \\
\bottomrule
\end{tabular}}
\vspace{-10pt}
\end{table*}

\begin{table*}[h]
\centering
\scriptsize
\vspace{-15pt}
\caption{Test results~(mean$\pm$std) of accuracy, AUC, and F1 score for each algorithm on ImageNette~(Case 2) under different model selection criteria. The best performance w.r.t.~each validation metric is shown in bold.}\label{tab:IMAGENETTE-set3-2-merged-test}
\vspace{3pt}
\resizebox{0.99\textwidth}{!}{
\begin{tabular}{l|ccc|ccc|ccc}
\toprule
Test metric & \multicolumn{3}{c|}{Test ACC} & \multicolumn{3}{c|}{AUC} & \multicolumn{3}{c}{Test F1} \\
\midrule
Val metric & PA & PAUC & OA  & PA & PAUC & OA  & PA & PAUC & OA \\
\midrule
PUbN & 75.30$\pm$0.58 & 75.97$\pm$0.61 & 77.39$\pm$0.45 & 83.97$\pm$0.64 & 84.03$\pm$0.50 & 85.20$\pm$0.66 & 76.89$\pm$0.51 & 75.98$\pm$1.24 & 76.44$\pm$0.60 \\
PAN & 53.31$\pm$0.36 & 64.73$\pm$1.69 & 64.37$\pm$2.18 & 65.28$\pm$1.32 & 70.38$\pm$1.92 & 69.19$\pm$2.61 & 66.49$\pm$0.27 & 58.68$\pm$5.06 & 63.03$\pm$2.55 \\
CVIR & \textbf{76.60$\pm$0.74} & \textbf{77.87$\pm$0.67} & \textbf{79.29$\pm$0.47} & \textbf{85.84$\pm$0.26} & 86.25$\pm$0.32 & \textbf{87.22$\pm$0.49} & 78.43$\pm$0.48 & \textbf{78.67$\pm$0.35} & \textbf{79.39$\pm$0.09} \\
P3MIX-E & 60.42$\pm$4.27 & 49.86$\pm$0.23 & 60.82$\pm$4.16 & 70.79$\pm$2.16 & 81.61$\pm$0.76 & 71.51$\pm$2.47 & 67.11$\pm$1.54 & 44.19$\pm$18.04 & 67.38$\pm$1.42 \\
P3MIX-C & 74.17$\pm$0.90 & 75.40$\pm$0.81 & 75.35$\pm$0.81 & 83.92$\pm$0.88 & 83.97$\pm$1.13 & 83.68$\pm$0.43 & 76.85$\pm$0.73 & 77.21$\pm$0.65 & 77.13$\pm$0.44 \\
LBE & 74.51$\pm$0.94 & 74.67$\pm$0.54 & 76.31$\pm$0.92 & 83.06$\pm$1.01 & 82.33$\pm$0.45 & 83.33$\pm$0.99 & 76.85$\pm$0.74 & 73.81$\pm$1.63 & 74.99$\pm$1.41 \\
Count Loss & 73.27$\pm$0.28 & 73.62$\pm$0.23 & 74.43$\pm$0.66 & 82.20$\pm$0.51 & 82.04$\pm$0.66 & 82.05$\pm$0.72 & 76.28$\pm$0.22 & 76.11$\pm$0.30 & 76.46$\pm$0.45 \\
Robust-PU & 72.58$\pm$1.19 & 72.78$\pm$0.43 & 75.52$\pm$0.68 & 80.19$\pm$0.81 & 80.57$\pm$0.71 & 81.69$\pm$0.38 & 73.75$\pm$0.62 & 69.94$\pm$1.57 & 74.06$\pm$1.05 \\
Holistic-PU & 56.12$\pm$0.93 & 54.70$\pm$2.16 & 59.19$\pm$0.53 & 61.46$\pm$0.21 & 59.83$\pm$1.01 & 62.22$\pm$0.69 & 64.75$\pm$1.38 & 60.81$\pm$1.16 & 58.83$\pm$0.57 \\
PUe & 64.65$\pm$0.59 & 65.89$\pm$1.36 & 67.63$\pm$0.53 & 72.74$\pm$1.48 & 72.62$\pm$1.26 & 74.27$\pm$0.79 & 69.33$\pm$1.07 & 68.66$\pm$0.71 & 69.42$\pm$0.90 \\
GLWS & 75.61$\pm$0.65 & 75.38$\pm$0.24 & 76.99$\pm$0.21 & 85.81$\pm$0.55 & \textbf{86.55$\pm$0.37} & 85.77$\pm$0.29 & \textbf{78.47$\pm$0.34} & 78.40$\pm$0.13 & 78.65$\pm$0.19 \\
\midrule
uPU & 60.42$\pm$2.82 & 66.42$\pm$1.08 & 66.29$\pm$1.00 & 67.49$\pm$3.09 & 73.24$\pm$0.86 & 72.82$\pm$0.52 & 67.95$\pm$0.68 & 67.50$\pm$1.57 & 66.46$\pm$1.37 \\
\rowcolor{gray!10} uPU-c & 72.57$\pm$1.56 & 73.20$\pm$1.05 & 75.07$\pm$0.54 & 79.19$\pm$2.25 & 81.76$\pm$0.92 & 82.22$\pm$0.87 & 72.60$\pm$1.24 & 69.52$\pm$2.07 & 74.58$\pm$0.79 \\
nnPU & 69.83$\pm$0.52 & 55.99$\pm$3.35 & 72.76$\pm$0.55 & 82.83$\pm$1.26 & 80.53$\pm$0.83 & 82.65$\pm$0.76 & 74.92$\pm$0.35 & 69.09$\pm$1.48 & 75.65$\pm$0.45 \\
\rowcolor{gray!10} nnPU-c & 74.42$\pm$0.75 & 73.55$\pm$1.07 & 74.17$\pm$1.07 & 82.13$\pm$0.89 & 82.44$\pm$0.97 & 81.77$\pm$1.05 & 75.24$\pm$1.25 & 71.49$\pm$2.66 & 73.68$\pm$1.79 \\
nnPU-GA & 72.19$\pm$1.31 & 75.23$\pm$1.08 & 75.87$\pm$0.60 & 81.37$\pm$0.66 & 82.62$\pm$1.08 & 83.37$\pm$0.99 & 75.44$\pm$0.43 & 74.24$\pm$2.14 & 74.85$\pm$0.80 \\
\rowcolor{gray!10} nnPU-GA-c & 72.27$\pm$1.25 & 73.85$\pm$0.58 & 74.62$\pm$0.11 & 79.16$\pm$1.52 & 81.60$\pm$0.47 & 79.72$\pm$1.52 & 73.86$\pm$0.71 & 71.02$\pm$0.68 & 74.79$\pm$0.81 \\
PUSB & 71.29$\pm$1.80 & 72.57$\pm$0.76 & 75.30$\pm$0.56 & 71.44$\pm$1.78 & 72.65$\pm$0.76 & 75.35$\pm$0.54 & 75.76$\pm$0.85 & 74.73$\pm$0.74 & 76.77$\pm$0.29 \\
\rowcolor{gray!10} PUSB-c & 72.98$\pm$1.11 & 73.69$\pm$0.38 & 74.86$\pm$0.52 & 73.00$\pm$1.10 & 73.64$\pm$0.38 & 74.86$\pm$0.50 & 73.69$\pm$0.94 & 71.85$\pm$0.48 & 74.70$\pm$0.37 \\
VPU & 70.42$\pm$1.87 & 58.37$\pm$6.43 & 73.28$\pm$0.73 & 78.68$\pm$1.11 & 78.52$\pm$1.50 & 80.21$\pm$0.47 & 73.29$\pm$0.72 & 24.24$\pm$19.50 & 70.20$\pm$1.70 \\
\rowcolor{gray!10} VPU-c & 76.30$\pm$0.79 & 77.75$\pm$0.57 & 77.38$\pm$1.00 & 84.37$\pm$0.45 & 84.11$\pm$0.49 & 83.80$\pm$0.79 & 78.34$\pm$0.83 & 78.32$\pm$0.40 & 77.88$\pm$0.75 \\
Dist-PU & 63.97$\pm$1.03 & 67.74$\pm$0.50 & 68.58$\pm$1.04 & 72.64$\pm$0.26 & 75.26$\pm$0.57 & 74.62$\pm$0.96 & 69.88$\pm$0.13 & 71.92$\pm$0.37 & 70.92$\pm$0.75 \\
\rowcolor{gray!10} Dist-PU-c & 60.43$\pm$4.37 & 68.02$\pm$1.27 & 67.29$\pm$1.74 & 67.51$\pm$3.90 & 74.10$\pm$1.29 & 71.97$\pm$2.69 & 67.65$\pm$0.89 & 69.06$\pm$1.11 & 65.39$\pm$3.35 \\
\bottomrule
\end{tabular}}
\end{table*}

\begin{figure*}[tbp]
  \centering
  \subfigure[Accuracy w/ PA]{
    \includegraphics[width=0.48\textwidth]{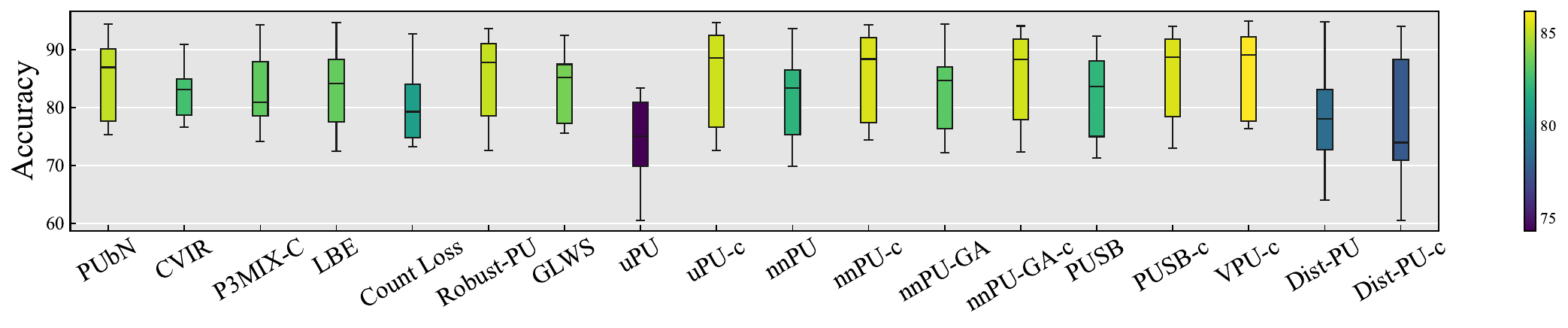}
  }
  \subfigure[F1 w/ PA]{
    \includegraphics[width=0.48\textwidth]{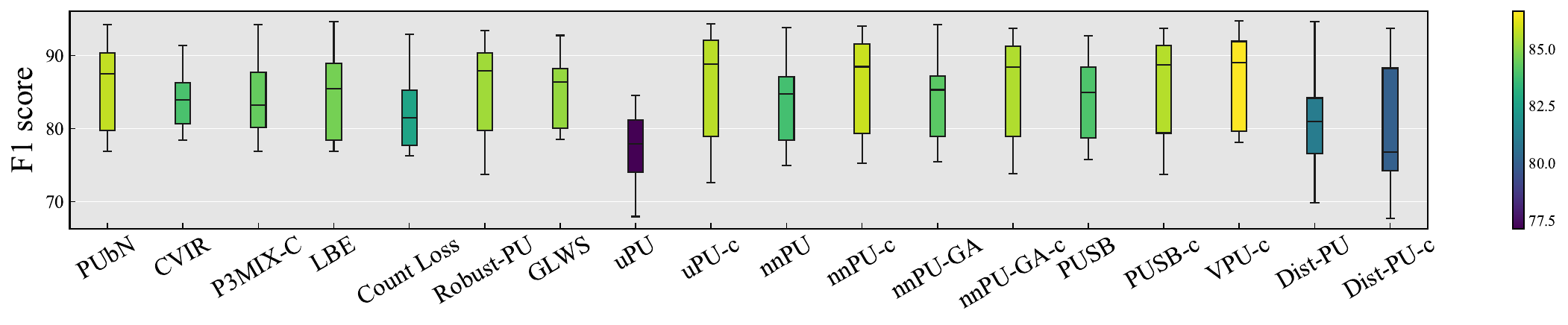}
  }
  \\
  \subfigure[Accuracy w/ PAUC]{
    \includegraphics[width=0.48\textwidth]{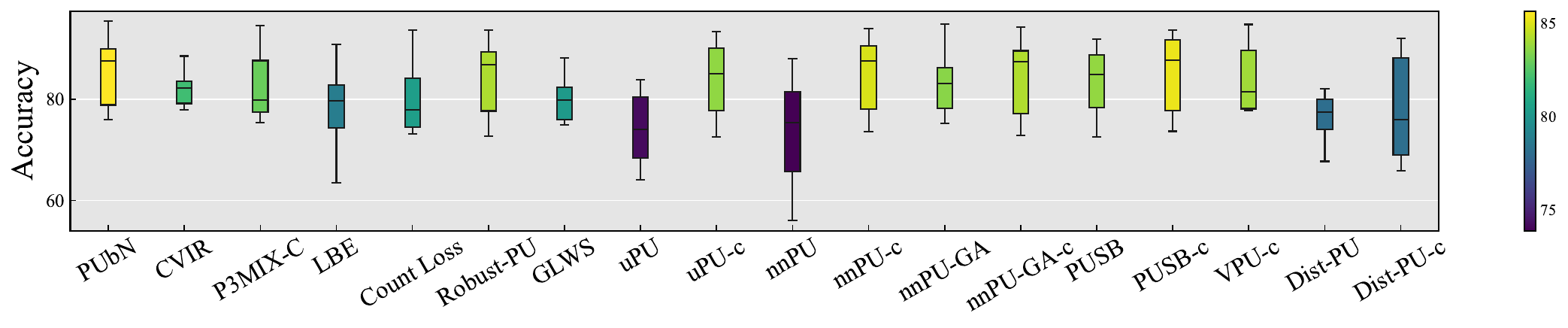}
  }
  \subfigure[F1 w/ PAUC]{
    \includegraphics[width=0.48\textwidth]{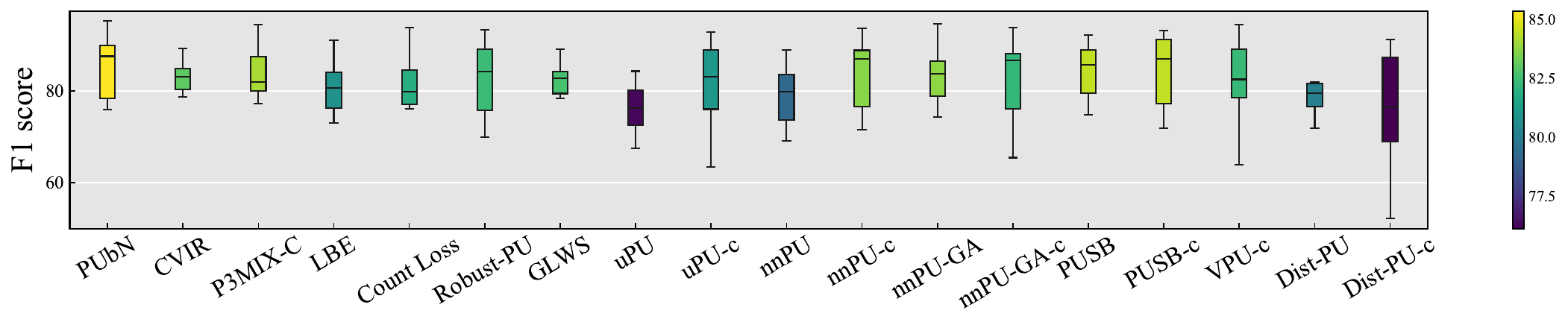}
  }
  \\
  \subfigure[Accuracy w/ OA]{
    \includegraphics[width=0.48\textwidth]{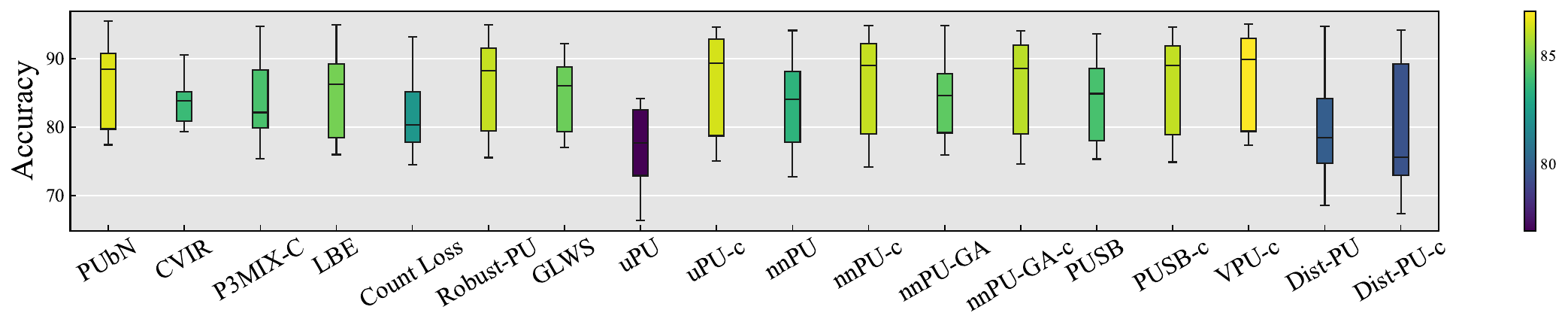}
  }
  \subfigure[F1 w/ OA]{
    \includegraphics[width=0.48\textwidth]{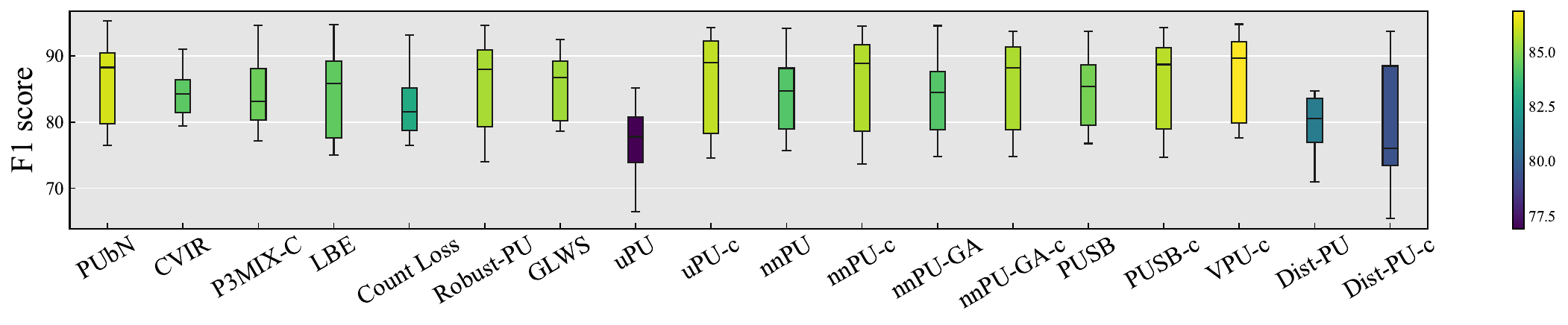}
  }
  \\  
  \caption{Overall performance w.r.t.~accuracy and the F1 score across all datasets. Hyperparameters were tuned using PA, PAUC and OA, respectively; bar colors indicate means.}\label{fig:res_acc}
  \vspace{-20pt}
\end{figure*}
\section{Conclusion}
In this paper, we conducted a comprehensive empirical study of PU learning algorithms. To the best of our knowledge, we proposed the first PU learning benchmark to systematically compare different PU learning algorithms in a unified framework. We investigated model selection criteria to facilitate realistic evaluation of PU learning algorithms. We also identified the ILS problem for the one-sample setting of PU learning and proposed a calibration approach to ensure fair comparisons of different families of PU learning algorithms. We hope that our framework can facilitate accessible, realistic, and fair evaluation of PU learning algorithms in the future. A limitation of our work is that we use relatively small benchmark datasets following previous work. In the future, it is also promising to investigate the performance of different algorithms on collected large-scale PU benchmark datasets.

\subsubsection*{Acknowledgments}
WW and DDW were supported by the Junior Research Associate~(JRA) program of RIKEN. MS was supported by JST ASPIRE Grant Number JPMJAP2405.
\bibliographystyle{iclr2026_conference}
\bibliography{pubench}

@inproceedings{kiryo2017positive,
  title={Positive-unlabeled learning with non-negative risk estimator},
  author={Kiryo, Ryuichi and Niu, Gang and du Plessis, Marthinus C. and Sugiyama, Masashi},
  booktitle={Advances in Neural Information Processing Systems 30},
  year={2017},
  pages={1674--1684}
}

@inproceedings{
chen2026pubench,
title={{PU}-{B}ench: A unified benchmark for rigorous and reproducible {PU} learning},
author={Qiuyi Chen and Haiyang Zhang and Leqi Zhang and Changchun Li and Jia Wang and Wei Wang},
booktitle={Proceedings of the 14th International Conference on Learning Representations},
year={2026}
}

@inproceedings{he2016deep,
  title={Deep residual learning for image recognition},
  author={He, Kaiming and Zhang, Xiangyu and Ren, Shaoqing and Sun, Jian},
  booktitle={Proceedings of the 2016 IEEE Conference on Computer Vision and Pattern Recognition},
  pages={770--778},
  year={2016}
}

@inproceedings{nair2010rectified,
  title={Rectified linear units improve restricted boltzmann machines},
  author={Vinod Nair and
               Geoffrey E. Hinton},
  booktitle={Proceedings of the 27th International Conference on Machine Learning},
  pages={807--814},
  year={2010}
}

@inproceedings{kato2019learning,
  title={Learning from positive and unlabeled data with a selection bias},
  author={Kato, Masahiro and Teshima, Takeshi and Honda, Junya},
  booktitle={Proceedings of the 7th International Conference on Learning Representations},
  year={2019}
}

@inproceedings{long2024positive,
  title={Positive-unlabeled learning by latent group-aware meta disambiguation},
  author={Long, Lin and Wang, Haobo and Jiang, Zhijie and Feng, Lei and Yao, Chang and Chen, Gang and Zhao, Junbo},
  booktitle={Proceedings of the 2024 IEEE/CVF Conference on Computer Vision and Pattern Recognition},
  pages={23138--23147},
  year={2024}
}

@inproceedings{zhu2023robust,
author = {Zhu, Zhangchi and Wang, Lu and Zhao, Pu and Du, Chao and Zhang, Wei and Dong, Hang and Qiao, Bo and Lin, Qingwei and Rajmohan, Saravan and Zhang, Dongmei},
title = {Robust positive-unlabeled learning via noise negative sample self-correction},
year = {2023},
booktitle = {Proceedings of the 29th ACM SIGKDD Conference on Knowledge Discovery and Data Mining},
pages = {3663–3673}
}

@book{shalev2014understanding,
  title={Understanding machine learning: From theory to algorithms},
  author={Shalev-Shwartz, Shai and Ben-David, Shai},
  year={2014},
  publisher={Cambridge University Press},
  address={Cambridge, UK}
}

@InProceedings{zhu2023mixture,
  title = 	 {Mixture proportion estimation beyond irreducibility},
  author =       {Zhu, Yilun and Fjeldsted, Aaron and Holland, Darren and Landon, George and Lintereur, Azaree and Scott, Clayton},
  booktitle = 	 {Proceedings of the 40th International Conference on Machine Learning},
  pages = 	 {42962--42982},
  year = 	 {2023}
}

@inproceedings{ramaswamy2016mixture,
  author={Harish G. Ramaswamy and Clayton Scott and Ambuj Tewari},
  title={Mixture proportion estimation via kernel embeddings of distributions},
  booktitle={Proceedings of the 33rd International Conference on Machine Learning},
  pages     = {2052--2060},
  year      = {2016}
}

@article{vapnik1998statistical,
  title={Statistical learning theory},
  author={Vapnik, Vladimir},
  journal={John Wiley \& Sons},
  volume={2},
  year={1998}
}

@inproceedings{paszke2019pytorch,
  title={{PyTorch}: An imperative style, high-performance deep learning library},
  author={Paszke, Adam and Gross, Sam and Massa, Francisco and Lerer, Adam and Bradbury, James and Chanan, Gregory and Killeen, Trevor and Lin, Zeming and Gimelshein, Natalia and Antiga, Luca and others},
  booktitle={Advances in Neural Information Processing Systems 32},
  year={2019}
}

@inproceedings{
tian2024multiscale,
title={Multiscale Positive-Unlabeled Detection of {AI}-Generated Texts},
author={Yuchuan Tian and Hanting Chen and Xutao Wang and Zheyuan Bai and Qinghua ZHANG and Ruifeng Li and Chao Xu and Yunhe Wang},
booktitle={Proceedings of the 12th International Conference on Learning Representations},
year={2024}
}

@inproceedings{wang2023pue,
 author = {Wang, Xutao and Chen, Hanting and Guo, Tianyu and Wang, Yunhe},
 booktitle = {Advances in Neural Information Processing Systems 36},
 pages = {19783--19798},
 title = {{PUe}: Biased positive-unlabeled learning enhancement by causal inference},
 year = {2023}
}

@InProceedings{sakai2017semi,
  title = 	 {Semi-Supervised Classification Based on Classification from Positive and Unlabeled Data},
  author =       {Tomoya Sakai and Marthinus Christoffel du Plessis and Gang Niu and Masashi Sugiyama},
  booktitle = 	 {Proceedings of the 34th International Conference on Machine Learning},
  pages = 	 {2998--3006},
  year = 	 {2017}
}

@misc{kelly2023uci,
  title={The {UCI} machine learning repository},
  author={Kelly, Markelle and Longjohn, Rachel and Nottingham, Kolby},
  year={2023}
}

@inproceedings{yin2024lambda,
 author = {Yin, Hang and Xiang, Liyao and Ding, Dong and He, Yuheng and Wu, Yihan and Chu, Pengzhi and Wang, Xinbing and Zhou, Chenghu},
 booktitle = {Advances in Neural Information Processing Systems},
 pages = {78964--78995},
 title = {Lambda: Learning Matchable Prior For Entity Alignment with Unlabeled Dangling Cases},
 volume = {37},
 year = {2024}
}

@InProceedings{wu2024unraveling,
  title = 	 {Unraveling the impact of heterophilic structures on graph positive-unlabeled learning},
  author =       {Wu, Yuhao and Yao, Jiangchao and Han, Bo and Yao, Lina and Liu, Tongliang},
  booktitle = 	 {Proceedings of the 41st International Conference on Machine Learning},
  pages = 	 {53928--53943},
  year = 	 {2024}
}

@inproceedings{shukla2023unified,
 author = {Shukla, Vinay and Zeng, Zhe and Ahmed, Kareem and Van den Broeck, Guy},
 booktitle = {NeurIPS},
 title = {A Unified Approach to Count-Based Weakly Supervised Learning},
 year = {2023}
}

@inproceedings{hu2021predictive,
  author       = {Wenpeng Hu and
                  Ran Le and
                  Bing Liu and
                  Feng Ji and
                  Jinwen Ma and
                  Dongyan Zhao and
                  Rui Yan},
  title        = {Predictive adversarial learning from positive and unlabeled data},
  booktitle    = {Proceedings of the 35th {AAAI} Conference on Artificial Intelligence},
  pages        = {7806--7814},
  year         = {2021}
}

@InProceedings{hsieh2019classification,
  title ={Classification from positive, unlabeled and biased negative data},
  author = {Hsieh, Yu-Guan and Niu, Gang and Sugiyama, Masashi},
  booktitle ={Proceedings of the 36th International Conference on Machine Learning},
  pages = {2820--2829},
  year = {2019}
}

@inproceedings{deng2009imagenet,
  title={{ImageNet}: A large-scale hierarchical image database},
  author={Deng, Jia and Dong, Wei and Socher, Richard and Li, Li-Jia and Li, Kai and Fei-Fei, Li},
  booktitle={2009 {IEEE} Conference on Computer Vision and Pattern Recognition},
  pages={248--255},
  year={2009}
}

@Techreport{krizhevsky2009learning,
author = {Krizhevsky, Alex and Hinton, Geoffrey E.},
institution = {University of Toronto},
title = {Learning multiple layers of features from tiny images},
year = {2009}
}

@article{wei2022robust,
  title={Robust model selection for positive and unlabeled learning with constraints},
  author={Wei, Tong and Wang, Hai and Tu, Weiwei and Li, Yufeng},
  journal={Science China Information Sciences},
  volume={65},
  number={11},
  pages={212101},
  year={2022}
}

@InProceedings{charoenphakdee2019on,
  title = 	 {On symmetric losses for learning from corrupted labels},
  author =       {Charoenphakdee, Nontawat and Lee, Jongyeong and Sugiyama, Masashi},
  booktitle = 	 {Proceedings of the 36th International Conference on Machine Learning},
  pages = 	 {961--970},
  year = 	 {2019}
}

@inproceedings{wang2025realistic,
    author = {Wang, Wei and Wu, Dong-Dong and Wang, Jindong and Niu, Gang and Zhang, Min-Ling and Sugiyama, Masashi},
    title = {Realistic evaluation of deep partial-label learning algorithms},
    booktitle = {Proceedings of the 13th International Conference on Learning Representations},
    year = {2025}
}

@article{raschka2018model,
  title={Model evaluation, model selection, and algorithm selection in machine learning},
  author={Raschka, Sebastian},
  journal={arXiv preprint arXiv:1811.12808},
  year={2018}
}

@inproceedings{
gulrajani2021in,
title={In Search of lost domain generalization},
author={Ishaan Gulrajani and David Lopez-Paz},
booktitle={Proceedings of the 9th International Conference on Learning Representations},
year={2021}
}

@inproceedings{golowich2018size,
  title={Size-independent sample complexity of neural networks},
  author={Golowich, Noah and Rakhlin, Alexander and Shamir, Ohad},
  booktitle={Proceedings of the 31st Conference On Learning Theory},
  pages={297--299},
  year={2018}
}

@inproceedings{
takahashi2025positiveunlabeled,
title={Positive-unlabeled diffusion models for preventing sensitive data generation},
author={Hiroshi Takahashi and Tomoharu Iwata and Atsutoshi Kumagai and Yuuki Yamanaka and Tomoya Yamashita},
booktitle={Proceedings of the 13th International Conference on Learning Representations},
year={2025}
}

@ARTICLE{mao2025boosting,
  author={Mao, Yuren and Hao, Yu and Cao, Xin and Gao, Yunjun and Yao, Chang and Lin, Xuemin},
  journal={IEEE Transactions on Knowledge and Data Engineering}, 
  title={Boosting {GNN}-based link prediction via {PU-AUC} Optimization}, 
  year={2025},
  volume={37},
  number={4},
  pages={1635-1649}}

@inproceedings{yi2017scalable,
  author       = {Jinfeng Yi and
                  Cho{-}Jui Hsieh and
                  Kush R. Varshney and
                  Lijun Zhang and
                  Yao Li},
  title        = {Scalable demand-aware recommendation},
  booktitle    = {Advances in Neural Information Processing Systems 30},
  pages        = {2412--2421},
  year         = {2017}
}

@InProceedings{hsieh2015pu,
  title = 	 {{PU} Learning for Matrix Completion},
  author = 	 {Hsieh, Cho-Jui and Natarajan, Nagarajan and Dhillon, Inderjit},
  booktitle = 	 {Proceedings of the 32nd International Conference on Machine Learning},
  pages = 	 {2445--2453},
  year = 	 {2015}
}

@article{ju2020pumad,
  title={PUMAD: PU metric learning for anomaly detection},
  author={Ju, Hyunjun and Lee, Dongha and Hwang, Junyoung and Namkung, Junghyun and Yu, Hwanjo},
  journal={Information Sciences},
  volume={523},
  pages={167--183},
  year={2020}
}

@inproceedings{zhou2021pure,
  title={Pure: Positive-unlabeled recommendation with generative adversarial network},
  author={Zhou, Yao and Xu, Jianpeng and Wu, Jun and Taghavi, Zeinab and Korpeoglu, Evren and Achan, Kannan and He, Jingrui},
  booktitle={Proceedings of the 27th ACM SIGKDD Conference on Knowledge Discovery \& Data Mining},
  pages={2409--2419},
  year={2021}
}

@article{chen2023bias,
  title={Bias and debias in recommender system: A survey and future directions},
  author={Chen, Jiawei and Dong, Hande and Wang, Xiang and Feng, Fuli and Wang, Meng and He, Xiangnan},
  journal={ACM Transactions on Information Systems},
  volume={41},
  number={3},
  pages={1--39},
  year={2023}
}

@ARTICLE{gong2022instance,
  author={Gong, Chen and Wang, Qizhou and Liu, Tongliang and Han, Bo and You, Jane and Yang, Jian and Tao, Dacheng},
  journal={IEEE Transactions on Pattern Analysis and Machine Intelligence}, 
  title={Instance-dependent positive and unlabeled learning with labeling bias estimation}, 
  year={2022},
  volume={44},
  number={8},
  pages={4163-4177}}

@inproceedings{Bekker2019beyond,
  author       = {Jessa Bekker and
                  Pieter Robberechts and
                  Jesse Davis},
  title        = {Beyond the selected completely at random assumption for learning from positive and unlabeled data},
  booktitle    = {Machine Learning and Knowledge Discovery in Databases - European Conference,
                  {ECML} {PKDD} 2019, Proceedings, Part {II}},
  series       = {Lecture Notes in Computer Science},
  volume       = {11907},
  pages        = {71--85},
  year         = {2019}
}

@InProceedings{pawel2025learning,
  title = 	 {Learning from biased positive-unlabeled data via threshold calibration},
  author =       {Teisseyre, Pawe{\l} and Martens, Timo and Bekker, Jessa and Davis, Jesse},
  booktitle = 	 {Proceedings of the 28th International Conference on Artificial Intelligence and Statistics},
  pages = 	 {2314--2322},
  year = 	 {2025}
}

@inproceedings{niu2016theoretical,
  author={Niu, Gang and du Plessis, Marthinus C. and Sakai, Tomoya and Ma, Yao and Sugiyama, Masashi},
  title={Theoretical comparisons of positive-unlabeled learning against positive-negative learning},
  booktitle={Advances in Neural Information Processing Systems 29},
  pages={1199--1207},
  year={2016}
}

@inproceedings{elkan2008learning,
  author       = {Charles Elkan and
                  Keith Noto},
  title        = {Learning classifiers from only positive and unlabeled data},
  booktitle    = {Proceedings of the 14th {ACM} {SIGKDD} International Conference on
                  Knowledge Discovery and Data Mining},
  pages        = {213--220},
  year         = {2008}
}

@inproceedings{du2014analysis,
  title={Analysis of learning from positive and unlabeled data},
  author={du Plessis, Marthinus C. and Niu, Gang and Sugiyama, Masashi},
  booktitle={Advances in Neural Information Processing Systems 27},
  year={2014},
  pages={703--711}
}

@InProceedings{chen2024general,
  title = {A general framework for learning from weak supervision},
  author = {Chen, Hao and Wang, Jindong and Feng, Lei and Li, Xiang and Wang, Yidong and Xie, Xing and Sugiyama, Masashi and Singh, Rita and Raj, Bhiksha},
  booktitle = {Proceedings of the 41st International Conference on Machine Learning},
  pages = {7462--7485},
  year = {2024}
}

@InProceedings{du2015convex,
  title = {Convex formulation for learning from positive and unlabeled data},
  author = {du Plessis, Marthinus and Niu, Gang and Sugiyama, Masashi},
  booktitle = {Proceedings of the 32nd International Conference on Machine Learning},
  pages = {1386--1394},
  year = {2015}
}

@article{coudray2023risk,
  author  = {Olivier Coudray and Christine Keribin and Pascal Massart and Patrick Pamphile},
  title   = {Risk bounds for positive-unlabeled learning under the selected at random assumption},
  journal = {Journal of Machine Learning Research},
  year    = {2023},
  volume  = {24},
  number  = {107},
  pages   = {1--31}
}

@inproceedings{li2022who,
  title={Who is your right mixup partner in positive and unlabeled learning},
  author={Li, Changchun and Li, Ximing and Feng, Lei and Ouyang, Jihong},
  booktitle={Proceedings of the 10th International Conference on Learning Representations},
  year={2022}
}

@inproceedings{garg2021mixture,
  author       = {Saurabh Garg and
                  Yifan Wu and
                  Alexander J. Smola and
                  Sivaraman Balakrishnan and
                  Zachary C. Lipton},
  title        = {Mixture proportion estimation and {PU} learning: {A} modern approach},
  booktitle    = {Advances in Neural Information Processing Systems 34},
  pages        = {8532--8544},
  year         = {2021}
}

@book{sugiyama2022machine,
  title={Machine learning from weak supervision: An empirical risk minimization approach},
  author={Sugiyama, Masashi and Bao, Han and Ishida, Takashi and Lu, Nan and Sakai, Tomoya and Niu, Gang},
  year={2022},
  publisher={MIT Press}
}

@inproceedings{chen2020variational,
  author       = {Hui Chen and
                  Fangqing Liu and
                  Yin Wang and
                  Liyue Zhao and
                  Hao Wu},
  title        = {A variational approach for learning from positive and unlabeled data},
  booktitle    = {Advances in Neural Information Processing Systems 33},
  year         = {2020},
  pages={14844--14854},
}

@inproceedings{
yao2022rethinking,
title={Rethinking class-prior estimation for positive-unlabeled learning},
author={Yu Yao and Tongliang Liu and Bo Han and Mingming Gong and Gang Niu and Masashi Sugiyama and Dacheng Tao},
booktitle={Proceedings of the 10th International Conference on Learning Representations},
year={2022}
}

@inproceedings{chen2020self,
  author       = {Xuxi Chen and
                  Wuyang Chen and
                  Tianlong Chen and
                  Ye Yuan and
                  Chen Gong and
                  Kewei Chen and
                  Zhangyang Wang},
  title        = {{Self-PU}: Self boosted and calibrated positive-unlabeled training},
  booktitle    = {Proceedings of the 37th International Conference on Machine Learning},
  pages        = {1510--1519},
  year         = {2020}
}

@InProceedings{li2024positive,
  title = 	 {Positive and unlabeled learning with controlled probability boundary fence},
  author =       {Li, Changchun and Dai, Yuanchao and Feng, Lei and Li, Ximing and Wang, Bing and Ouyang, Jihong},
  booktitle = 	 {Proceedings of the 41st International Conference on Machine Learning},
  pages = 	 {27641--27652},
  year = 	 {2024}
}

@article{bekker2020learning,
  title={Learning from positive and unlabeled data: A survey},
  author={Bekker, Jessa and Davis, Jesse},
  journal={Machine Learning},
  volume={109},
  pages={719--760},
  year={2020},
  publisher={Springer}
}

@inproceedings{zhao2022distpu,
  author       = {Zhao, Yunrui and
                  Qianqian Xu and
                  Yangbangyan Jiang and
                  Peisong Wen and
                  Qingming Huang},
  title        = {{Dist-PU}: Positive-unlabeled learning from a label distribution perspective},
  booktitle    = {Proceedings of the 2022 {IEEE/CVF} Conference on Computer Vision and Pattern Recognition},
  pages        = {14441--14450},
  year         = {2022}
}

@article{jiang2023positive,
  author       = {Jiang, Yangbangyan and
                  Qianqian Xu and
                  Yunrui Zhao and
                  Zhiyong Yang and
                  Peisong Wen and
                  Xiaochun Cao and
                  Qingming Huang},
  title        = {Positive-unlabeled learning with label distribution alignment},
  journal      = {IEEE Transactions on Pattern Analysis and Machine Intelligence},
  volume       = {45},
  number       = {12},
  pages        = {15345--15363},
  year         = {2023}
}

@inproceedings{wang2023beyond,
title={{Beyond} myopia: Learning from positive and unlabeled data through holistic predictive trends},
author={Wang, Xinrui and Wan, Wenhai and Geng, Chuanxing and Li, Shao-Yuan and Chen, Songcan},
booktitle={Advances in Neural Information Processing Systems 36},
year={2023}
}
\newpage
\appendix

\subsubsection*{The Use of Large Language Models~(LLMs)}
LLMs were used solely for correcting grammatical and spelling errors.
\section{Comparison with a concurrent work}
A concurrent work~\citep{chen2026pubench} also proposed a PU learning benchmark. The authors primarily focused on establishing a fair comparison and evaluation pipeline for PU learning algorithms. They noted the differences between one-sample and two-sample settings, yet they did not analyze the ILS problem or propose solutions. Furthermore, they assumed that the validation set had true labels, which is unrealistic, as discussed in the paper. 
\section{Proofs}
\subsection{Proof of Proposition~\ref{prop:pacc}}\label{proof:prop_pacc}
\begin{proof}[\unskip \nopunct]
\begin{align}
\mathrm{ACC}(f)=&\pi\mathbb{E}_{p(\bm{x}|y=+1)}\left[\mathbb{I}\left(f(\bm{x})\geq 0\right)\right]+(1-\pi)\mathbb{E}_{p(\bm{x}|y=-1)}\left[\mathbb{I}\left(f(\bm{x})<0\right)\right]\nonumber\\
=&\pi\mathbb{E}_{p(\bm{x}|y=+1)}\left[\mathbb{I}\left(f(\bm{x})\geq 0\right)\right]+\mathbb{E}_{p(\bm{x})}\left[\mathbb{I}\left(f(\bm{x})<0\right)\right]-\pi\mathbb{E}_{p(\bm{x}|y=+1)}\left[\mathbb{I}\left(f(\bm{x})<0\right)\right]\nonumber\\
=&\pi\mathbb{E}_{p(\bm{x}|y=+1)}\left[\mathbb{I}\left(f(\bm{x})\geq 0\right)\right]+\mathbb{E}_{p(\bm{x})}\left[\mathbb{I}\left(f(\bm{x})<0\right)\right]-\pi\mathbb{E}_{p(\bm{x}|y=+1)}\left[1-\mathbb{I}\left(f(\bm{x})\geq 0\right)\right]\nonumber\\
=&2\pi\mathbb{E}_{p(\bm{x}|y=+1)}\left[\mathbb{I}\left(f(\bm{x})\geq 0\right)\right]+\mathbb{E}_{p(\bm{x})}\left[\mathbb{I}\left(f(\bm{x})<0\right)\right]-\pi\nonumber\\
=&\mathbb{E}\left[\mathrm{PA}(f)\right]-\pi. \nonumber
\end{align}
Here, the last equation is obtained since $\mathcal{D}_{\mathrm{U}}\!\stackrel{\text{ i.i.d.}}{\sim}
\! p(\bm{x})$ for the TS setting and $\mathcal{D}_{\mathrm{P}}\bigcup\mathcal{D}_{\mathrm{U}}\!\stackrel{\text{ i.i.d.}}{\sim}
\! p(\bm{x})$ for the OS setting. Therefore, for two classifiers $f_1$ and $f_2$ that satisfy $\mathbb{E}\left[\mathrm{PA}(f_1)\right]<\mathbb{E}\left[\mathrm{PA}(f_2)\right]$, we have $\mathrm{ACC}(f_1)<\mathrm{ACC}(f_2)$. The proof is complete.
\end{proof}
\subsection{Proof of Proposition~\ref{prop:pauc}}\label{proof:prop_pauc}
\begin{proof}[\unskip \nopunct]
For the TS setting,
\begin{align}
\mathrm{AUC}(f)=&\mathbb{E}_{p(\bm{x}|y=+1)}\mathbb{E}_{p(\bm{x}'|y'=-1)}\left[\mathbb{I}\left(f(\bm{x})>f(\bm{x}')\right)+\frac{1}{2}\mathbb{I}\left(f(\bm{x})=f(\bm{x}')\right)\right]\nonumber\\
=&\frac{1}{1-\pi}\mathbb{E}_{p(\bm{x}|y=+1)}\mathbb{E}_{p(\bm{x}')}\left[\mathbb{I}\left(f(\bm{x})>f(\bm{x}')\right)+\frac{1}{2}\mathbb{I}\left(f(\bm{x})=f(\bm{x}')\right)\right]\nonumber\\
&-\frac{\pi}{1-\pi}\mathbb{E}_{p(\bm{x}|y=+1)}\mathbb{E}_{p(\bm{x}'|y'=+1)}\left[\mathbb{I}\left(f(\bm{x})>f(\bm{x}')\right)+\frac{1}{2}\mathbb{I}\left(f(\bm{x})=f(\bm{x}')\right)\right]\nonumber\\
=&\frac{1}{1-\pi}\mathbb{E}_{p(\bm{x}|y=+1)}\mathbb{E}_{p(\bm{x}')}\left[\mathbb{I}\left(f(\bm{x})>f(\bm{x}')\right)+\frac{1}{2}\mathbb{I}\left(f(\bm{x})=f(\bm{x}')\right)\right]-\frac{\pi}{2-2\pi}\nonumber\\
=&\frac{1}{1-\pi}\mathbb{E}\left[\mathrm{PAUC}(f)\right]-\frac{\pi}{2-2\pi}.\nonumber
\end{align}
For the OS setting,
\begin{align}
\mathrm{AUC}(f)=&\mathbb{E}_{p(\bm{x}|y=+1)}\mathbb{E}_{p(\bm{x}'|y'=-1)}\left[\mathbb{I}\left(f(\bm{x})>f(\bm{x}')\right)+\frac{1}{2}\mathbb{I}\left(f(\bm{x})=f(\bm{x}')\right)\right]\nonumber\\
=&\frac{1}{1-\widebar{\pi}}\mathbb{E}_{p(\bm{x}|y=+1)}\mathbb{E}_{p(\bm{x}')}\left[\mathbb{I}\left(f(\bm{x})>f(\bm{x}')\right)+\frac{1}{2}\mathbb{I}\left(f(\bm{x})=f(\bm{x}')\right)\right]\nonumber\\
&-\frac{\widebar{\pi}}{1-\widebar{\pi}}\mathbb{E}_{p(\bm{x}|y=+1)}\mathbb{E}_{p(\bm{x}'|y'=+1)}\left[\mathbb{I}\left(f(\bm{x})>f(\bm{x}')\right)+\frac{1}{2}\mathbb{I}\left(f(\bm{x})=f(\bm{x}')\right)\right]\nonumber\\
=&\frac{1}{1-\widebar{\pi}}\mathbb{E}_{p(\bm{x}|y=+1)}\mathbb{E}_{p(\bm{x}')}\left[\mathbb{I}\left(f(\bm{x})>f(\bm{x}')\right)+\frac{1}{2}\mathbb{I}\left(f(\bm{x})=f(\bm{x}')\right)\right]-\frac{\widebar{\pi}}{2-2\widebar{\pi}}\nonumber\\
=&\frac{1}{1-\widebar{\pi}}\mathbb{E}\left[\mathrm{PAUC}(f)\right]-\frac{\widebar{\pi}}{2-2\widebar{\pi}}.\nonumber
\end{align}
Therefore, under both OS and TS settings, for two classifiers $f_1$ and $f_2$ that satisfy $\mathbb{E}\left[\mathrm{PAUC}(f_1)\right]<\mathbb{E}\left[\mathrm{PAUC}(f_2)\right]$, we have $\mathrm{AUC}(f_1)<\mathrm{AUC}(f_2)$.
\end{proof}
\subsection{Bias of the Risk Estimator}\label{apd:bias}
Under the OS setting, we have 
\begin{align}
\mathbb{E}\left[\widehat{R}(f)\right]-R(f)=&\mathbb{E}_{\widebar{p}(\bm{x})}\left[\ell(f(\bm{x}), -1)\right]-\mathbb{E}_{p(\bm{x})}\left[\ell(f(\bm{x}), -1)\right]\nonumber\\
=&(\bar{\pi}-\pi)\left(\mathbb{E}_{p(\bm{x}|y=+1)}\left[\ell(f(\bm{x}), -1)\right]-\mathbb{E}_{p(\bm{x}|y=-1)}\left[\ell(f(\bm{x}), -1)\right]\right),\nonumber
\end{align}
which is not equal to 0. Therefore, it means that the bias of the risk estimator always exists. Then, the minimizers of $\mathbb{E}\left[\widehat{R}(f)\right]$ and $R(f)$ are not the same. 
\subsection{Proof of Theorem~\ref{thm:cal_ure}}\label{proof:cal_ure}
\begin{proof}[\unskip \nopunct]
First, we have 
\begin{align}
\widebar{p}(\bm{x})=&\widebar{\pi}p(\bm{x}|y=+1)+(1-\widebar{\pi})p(\bm{x}|y=-1)\nonumber\\
=&\frac{(1-c)\pi}{1-c\pi}p(\bm{x}|y=+1)+\frac{1-\pi}{1-c\pi}p(\bm{x}|y=-1).\nonumber
\end{align}
Therefore, we have
\begin{equation}
p(\bm{x}|y=-1)=\frac{1-c\pi}{1-\pi}\widebar{p}(\bm{x})-\frac{(1-c)\pi}{1-\pi}p(\bm{x}|y=+1).\nonumber
\end{equation}
Then,
\begin{align}
R(f)=&\pi\mathbb{E}_{p(\bm{x}|y=+1)}\left[\ell(f(\bm{x}), +1)\right]+(1-\pi)\mathbb{E}_{p(\bm{x}|y=-1)}\left[\ell(f(\bm{x}), -1)\right] \nonumber\\
=&\pi\mathbb{E}_{p(\bm{x}|y=+1)}\left[\ell(f(\bm{x}), +1)\right]+(1-c\pi)\mathbb{E}_{\widebar{p}(\bm{x})}\left[\ell(f(\bm{x}), -1)\right]-(1-c)\pi\mathbb{E}_{p(\bm{x}|y=+1)}\left[\ell(f(\bm{x}), -1)\right]\nonumber\\
=&\pi\mathbb{E}_{p(\bm{x}|y=+1)}\left[\ell(f(\bm{x}), +1)+(c-1)\ell(f(\bm{x}), -1)\right]+(1-c\pi)\mathbb{E}_{\widebar{p}(\bm{x})}\left[\ell(f(\bm{x}), -1)\right],\nonumber
\end{align}
which concludes the proof.
\end{proof}
\subsection{Proof of Theorem~\ref{thm:cre_eeb}}\label{proof:eeb}
\begin{definition}[Rademacher complexity] Let $\mathcal{X}^{\mathrm{P}}_{n_{\mathrm{P}}}=\left\{\bm{x}_{1}, \cdots \bm{x}_{n_{\mathrm{P}}}\right\}$ denote $n_{\mathrm{P}}$ i.i.d.~random variables drawn from density $p(\bm{x}|y=+1)$. Let $\mathcal{X}^{\mathrm{U}}_{n_{\mathrm{U}}}=\left\{\bm{x}_{n_{\mathrm{P}}+1}, \cdots \bm{x}_{n_{\mathrm{P}}+n_{\mathrm{U}}}\right\}$ denote $n_{\mathrm{U}}$ i.i.d.~random variables drawn from density $\widebar{p}(\bm{x})$. Let $\mathcal{F}=\{f:\mathcal{X}\mapsto \mathbb{R}\}$ denote a class of measurable functions, $\bm{\sigma}_{\mathrm{P}}=\left(\sigma_{1}, \sigma_{2}, \cdots, \sigma_{n_{\mathrm{P}}}\right)$, and $\bm{\sigma}_{\mathrm{U}}=\left(\sigma_{n_{\mathrm{P}}+1}, \sigma_{n_{\mathrm{P}}+2}, \cdots, \sigma_{n_{\mathrm{P}}+n_{\mathrm{U}}}\right)$ denote Rademacher variables taking values from $\{+1, -1\}$ uniformly. Then, the (expected) Rademacher complexities of $\mathcal{F}$ are defined as
\begin{align}
\mathfrak{R}_{n_{\mathrm{P}}}(\mathcal{F})&=\mathbb{E}_{\mathcal{X}^{\mathrm{P}}_{n_{\mathrm{P}}}} \mathbb{E}_{\bm{\sigma}_{\mathrm{P}}}\left[\sup _{f \in \mathcal{F}} \frac{1}{n_{\mathrm{P}}} \sum_{i=1}^{n_{\mathrm{P}}} \sigma_{i} f(\bm{x}_{i})\right],\nonumber\\
\mathfrak{R}'_{n_{\mathrm{U}}}(\mathcal{F})&=\mathbb{E}_{\mathcal{X}^{\mathrm{U}}_{n_{\mathrm{U}}}} \mathbb{E}_{\bm{\sigma}_{\mathrm{U}}}\left[\sup _{f \in \mathcal{F}} \frac{1}{n_{\mathrm{U}}} \sum_{i=n_{\mathrm{P}}+1}^{n_{\mathrm{P}}+n_{\mathrm{U}}} \sigma_{i} f(\bm{x}_{i})\right].\nonumber
\end{align}
\begin{lemma}\label{lm:geb}
For any $\delta>0$, we have the following inequality with probability at least $1-\delta$:
\begin{align}
\sup_{f\in\mathcal{F}}\left|\widebar{R}(f)-R(f)\right|\leq &2(2-c)\pi L_{\ell}\mathfrak{R}_{n_{\mathrm{P}}}(\mathcal{F})+2(1-c\pi)L_{\ell}\mathfrak{R}'_{n_{\mathrm{U}}}(\mathcal{F})\nonumber\\
&+\left(\frac{\pi(2-c)C_{\ell}}{\sqrt{n_{\mathrm{P}}}}+\frac{(1-c\pi)C_{\ell}}{\sqrt{n_{\mathrm{U}}}}\right)\sqrt{\frac{\ln{2/\delta}}{2}}.\nonumber
\end{align}
\end{lemma}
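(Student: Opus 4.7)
The plan is to decompose the deviation $\widebar{R}(f)-R(f)$ into two independent empirical processes, one on the positive sample and one on the unlabeled sample, and control each by combining McDiarmid's inequality with Rademacher symmetrization and Talagrand's contraction lemma.

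First, by Theorem~\ref{thm:cal_ure} I write $R(f)=\pi\,\phi_{+}(f)+(1-c\pi)\,\phi_{\mathrm{U}}(f)$, where
\[
\phi_{+}(f)=\mathbb{E}_{p(\bm{x}|y=+1)}\bigl[\ell(f(\bm{x}),+1)+(c-1)\ell(f(\bm{x}),-1)\bigr],\qquad \phi_{\mathrm{U}}(f)=\mathbb{E}_{\widebar{p}(\bm{x})}\bigl[\ell(f(\bm{x}),-1)\bigr],
\]
and $\widebar{R}(f)=\pi\,\widehat{\phi}_{+}(f)+(1-c\pi)\,\widehat{\phi}_{\mathrm{U}}(f)$ is the corresponding empirical version. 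By the triangle inequality, $\sup_{f}|\widebar{R}(f)-R(f)|$ is bounded by $\pi\sup_{f}|\widehat{\phi}_{+}(f)-\phi_{+}(f)|+(1-c\pi)\sup_{f}|\widehat{\phi}_{\mathrm{U}}(f)-\phi_{\mathrm{U}}(f)|$, which I treat separately.

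Second, I apply McDiarmid's inequality to each supremum. Because $c\in(0,1]$, the integrand on the positive side satisfies $|\ell(f(\bm{x}),+1)+(c-1)\ell(f(\bm{x}),-1)|\le(2-c)C_{\ell}$, so the bounded-differences constant is $(2-c)C_{\ell}/n_{\mathrm{P}}$; on the unlabeled side it is $C_{\ell}/n_{\mathrm{U}}$. A union bound at level $\delta/2$ on each side yields the two $\sqrt{\ln(2/\delta)/2}$ concentration terms with coefficients $\pi(2-c)C_{\ell}/\sqrt{n_{\mathrm{P}}}$ and $(1-c\pi)C_{\ell}/\sqrt{n_{\mathrm{U}}}$, matching the last line of the claim.

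Third, I bound the two expected uniform deviations via the standard symmetrization argument (ghost sample plus Rademacher variables), which introduces a factor of $2$ and replaces the supremum of $|\widehat{\phi}-\phi|$ by a Rademacher average over the induced loss class. I then peel the loss off $f$ using Talagrand's contraction lemma: on the positive sample the map $z\mapsto\ell(z,+1)+(c-1)\ell(z,-1)$ is $(2-c)L_{\ell}$-Lipschitz in $z$, producing $2(2-c)L_{\ell}\mathfrak{R}_{n_{\mathrm{P}}}(\mathcal{F})$; on the unlabeled sample the map $z\mapsto\ell(z,-1)$ is $L_{\ell}$-Lipschitz, producing $2L_{\ell}\mathfrak{R}'_{n_{\mathrm{U}}}(\mathcal{F})$. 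Multiplying by the outer weights $\pi$ and $(1-c\pi)$ gives the two Rademacher terms in the claim, and adding the concentration terms completes the bound.

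The main obstacle is the bookkeeping around the composite loss that appears on the positive side: I must carefully verify that its Lipschitz constant and its uniform bound are both $(2-c)$ times the base quantities (which uses $c\in(0,1]$ so that $|c-1|=1-c$), and I must use the absolute-value form of symmetrization/contraction, which is what supplies the extra factor of $2$ in the Rademacher coefficients. Once these ingredients are tracked correctly, the rest is a routine union bound over the two one-sided events.
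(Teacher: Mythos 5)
Your toolkit (McDiarmid, symmetrization, Talagrand contraction) and your constants for the composite positive-side loss are the right ones, and your decomposition of $\widebar{R}-R$ into two separate empirical processes with a per-sample McDiarmid at level $\delta/2$ does reproduce the concentration terms: the paper instead applies McDiarmid once to the joint functional of all $n_{\mathrm{P}}+n_{\mathrm{U}}$ variables and then relaxes $\sqrt{a+b}\le\sqrt{a}+\sqrt{b}$, which lands in the same place. (One small caution even here: a bound $|g|\le M$ on the summand only gives a bounded-differences constant of $2M/n$; to get $(2-c)C_{\ell}/n_{\mathrm{P}}$ you need the \emph{range} of $\ell(z,+1)+(c-1)\ell(z,-1)$, namely the interval $[-(1-c)C_{\ell},\,C_{\ell}]$ of length $(2-c)C_{\ell}$, not the absolute-value bound you quote.)

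The genuine gap is in how you treat the absolute value. The factor of $2$ in the coefficients $2(2-c)\pi L_{\ell}\mathfrak{R}_{n_{\mathrm{P}}}(\mathcal{F})$ and $2(1-c\pi)L_{\ell}\mathfrak{R}'_{n_{\mathrm{U}}}(\mathcal{F})$ is already produced by the ordinary \emph{one-sided} symmetrization $\mathbb{E}\sup_{f}(\widehat{\phi}-\phi)\le 2\,\mathfrak{R}(\ell\circ\mathcal{F})$; it is not ``supplied by'' the absolute-value form, as you claim. If you actually run the two-sided versions—$\mathbb{E}\sup_{f}|\widehat{\phi}-\phi|\le 2\,\mathbb{E}\sup_{f}\bigl|\frac{1}{n}\sum_{i}\sigma_{i}\,\ell(f(\bm{x}_{i}),\cdot)\bigr|$ followed by the Ledoux--Talagrand contraction with absolute values—you pay an \emph{additional} factor of $2$ (and that form also requires the normalization $\ell(0,\cdot)=0$, which fails for the logistic loss), so you would prove the bound with coefficients $4(2-c)\pi L_{\ell}$ and $4(1-c\pi)L_{\ell}$ rather than the stated ones; the same doubling occurs if you instead bound $\mathbb{E}\sup_{f}|\cdot|$ by the sum of the two one-sided expectations. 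The paper's proof never places the absolute value inside an expectation: it bounds $\sup_{f}(\widebar{R}(f)-R(f))$ and $\sup_{f}(R(f)-\widebar{R}(f))$ separately, each with probability $1-\delta/2$ using one-sided McDiarmid, one-sided symmetrization, and the standard contraction $\mathfrak{R}(\ell\circ\mathcal{F})\le L_{\ell}\mathfrak{R}(\mathcal{F})$, and then takes a union bound over the two signs—which costs only the $\ln(2/\delta)$ already present in the concentration term and leaves the Rademacher coefficients untouched. You should restructure your argument the same way; your per-sample triangle-inequality split can be kept, but the two-sidedness must be resolved by a union bound over signs, not by absolute-value symmetrization.
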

\begin{proof}
First, we give the upper bound for the one-side uniform deviation $\sup_{f\in\mathcal{F}}\left(\widebar{R}(f)-R(f)\right)$. When an instance in $\mathcal{X}^{\mathrm{P}}_{n_{\mathrm{P}}}$ is replaced by another instance, the value of $\sup_{f\in\mathcal{F}}\left(\widebar{R}(f)-R(f)\right)$ changes at most $\pi(2-c)C_{\ell}/n_{\mathrm{P}}$; when an instance in $\mathcal{X}^{\mathrm{U}}_{n_{\mathrm{U}}}$ is replaced by another instance, the value of $\sup_{f\in\mathcal{F}}\left(\widebar{R}(f)-R(f)\right)$ changes at most $(1-c\pi)C_{\ell}/n_{\mathrm{U}}$. Therefore, according to McDiarmid’s inequality, we have the following inequality with probability at least $1-\delta/2$:
\begin{align}
\sup_{f\in\mathcal{F}}\left(\widebar{R}(f)-R(f)\right)\leq&\mathbb{E}\left[\sup_{f\in\mathcal{F}}\left(\widebar{R}(f)-R(f)\right)\right]+\sqrt{\frac{\pi^{2}(2-c)^{2}C^{2}_{\ell}}{n_{\mathrm{P}}}+\frac{(1-c\pi)^{2}C^{2}_{\ell}}{n_{\mathrm{U}}}}\sqrt{\frac{\ln{2/\delta}}{2}}\nonumber\\
\leq&\mathbb{E}\left[\sup_{f\in\mathcal{F}}\left(\widebar{R}(f)-R(f)\right)\right]+\left(\frac{\pi(2-c)C_{\ell}}{\sqrt{n_{\mathrm{P}}}}+\frac{(1-c\pi)C_{\ell}}{\sqrt{n_{\mathrm{U}}}}\right)\sqrt{\frac{\ln{2/\delta}}{2}}.\nonumber
\end{align}
Then, by symmetrization~\citep{vapnik1998statistical}, it is a routine work to have
\begin{equation}
\mathbb{E}\left[\sup_{f\in\mathcal{F}}\left(\widebar{R}(f)-R(f)\right)\right]\leq 2(2-c)\pi \mathfrak{R}_{n_{\mathrm{P}}}(\ell\circ\mathcal{F})+2(1-c\pi)\mathfrak{R}'_{n_{\mathrm{U}}}(\ell\circ\mathcal{F}).\nonumber
\end{equation}
According to Talagrand’s contraction lemma~\citep{shalev2014understanding}, we have 
\begin{equation}
\mathfrak{R}_{n_{\mathrm{P}}}(\ell\circ\mathcal{F})\leq L_{\ell}\mathfrak{R}_{n_{\mathrm{P}}}(\mathcal{F}),~~~\mathfrak{R}'_{n_{\mathrm{U}}}(\ell\circ\mathcal{F})\leq L_{\ell}\mathfrak{R}'_{n_{\mathrm{U}}}(\mathcal{F}).\nonumber
\end{equation}
By combining the above inequalities, we have the following inequality with probability at least $1-\delta/2$:
\begin{align}
\sup_{f\in\mathcal{F}}\left(\widebar{R}(f)-R(f)\right)\leq &2(2-c)\pi L_{\ell}\mathfrak{R}_{n_{\mathrm{P}}}(\mathcal{F})+2(1-c\pi)L_{\ell}\mathfrak{R}'_{n_{\mathrm{U}}}(\mathcal{F})\nonumber\\
&+\left(\frac{\pi(2-c)C_{\ell}}{\sqrt{n_{\mathrm{P}}}}+\frac{(1-c\pi)C_{\ell}}{\sqrt{n_{\mathrm{U}}}}\right)\sqrt{\frac{\ln{2/\delta}}{2}}.\nonumber
\end{align}
In a similar way, we have the following inequality with probability at least $1-\delta/2$:
\begin{align}
\sup_{f\in\mathcal{F}}\left(R(f)-\widebar{R}(f)\right)\leq &2(2-c)\pi L_{\ell}\mathfrak{R}_{n_{\mathrm{P}}}(\mathcal{F})+2(1-c\pi)L_{\ell}\mathfrak{R}'_{n_{\mathrm{U}}}(\mathcal{F})\nonumber\\
&+\left(\frac{\pi(2-c)C_{\ell}}{\sqrt{n_{\mathrm{P}}}}+\frac{(1-c\pi)C_{\ell}}{\sqrt{n_{\mathrm{U}}}}\right)\sqrt{\frac{\ln{2/\delta}}{2}}.\nonumber
\end{align}
Therefore, we have the following inequality with probability at least $1-\delta$:
\begin{align}
\sup_{f\in\mathcal{F}}\left|\widebar{R}(f)-R(f)\right|\leq &2(2-c)\pi L_{\ell}\mathfrak{R}_{n_{\mathrm{P}}}(\mathcal{F})+2(1-c\pi)L_{\ell}\mathfrak{R}'_{n_{\mathrm{U}}}(\mathcal{F})\nonumber\\
&+\left(\frac{\pi(2-c)C_{\ell}}{\sqrt{n_{\mathrm{P}}}}+\frac{(1-c\pi)C_{\ell}}{\sqrt{n_{\mathrm{U}}}}\right)\sqrt{\frac{\ln{2/\delta}}{2}}.\nonumber
\end{align}
The proof is complete.
\end{proof}
Then, we give the proof of Theorem~\ref{thm:cre_eeb}.
\begin{proof}[Proof of Theorem~\ref{thm:cre_eeb}]
\begin{align}
R(\widebar{f})-R(f^{*})=&R(\widebar{f})-\widebar{R}((\widebar{f})+\widebar{R}((\widebar{f})-\widebar{R}(f^{*})+\widebar{R}(f^{*})-R(f^{*})\nonumber\\
\leq& R(\widebar{f})-\widebar{R}((\widebar{f})+\widebar{R}((\widebar{f})-\widebar{R}(f^{*})+\widebar{R}(f^{*})-R(f^{*})\nonumber\nonumber\\
\leq& 2\sup_{f\in\mathcal{F}}\left|\widebar{R}(f)-R(f)\right|.\nonumber
\end{align}
By Lemma~\ref{lm:geb}, the proof is complete.
\end{proof}
\end{definition}
\subsection{Derivation of Equivalence of Risk Estimators}\label{apd:derivation_equiv}
\begin{align}
&\widebar{R}(f)\nonumber\\
=&\frac{\pi}{n_{\rm P}}\sum_{i=1}^{n_{\rm P}}\left(\ell\left(f(\bm{x}_i),+1\right)+(c-1)\ell\left(f(\bm{x}_i),-1\right)\right)+\frac{1-c\pi}{n_{\rm U}}\sum_{i=n_{\rm P}+1}^{n_{\rm P}+n_{\rm U}}\ell\left(f(\bm{x}_i),-1\right)\nonumber\\
=&\sum_{i=1}^{n_{\rm P}}\left(\frac{\pi}{n_{\rm P}}\ell\left(f(\bm{x}_i),+1\right)+\left(\frac{1}{n_{\rm P}+n_{\rm U}}-\frac{\pi}{n_{\rm P}}\right)\ell\left(f(\bm{x}_i),-1\right)\right)+\frac{1}{n_{\rm P}+n_{\rm U}}\sum_{i=n_{\rm P}+1}^{n_{\rm P}+n_{\rm U}}\ell\left(f(\bm{x}_i),-1\right)\nonumber\\
=&\frac{\pi}{n_{\rm P}}\sum_{i=1}^{n_{\rm P}}\left(\ell\left(f(\bm{x}_i),+1\right)-\ell\left(f(\bm{x}_i),-1\right)\right)+\frac{1}{n_{\rm U}}\sum_{i=1}^{n_{\rm P}+n_{\rm U}}\ell\left(f(\bm{x}_i),-1\right),
\end{align}
where the second equation uses the estimation $c=n_{\rm P}/\pi(n_{\rm P}+n_{\rm U})$.
\section{More Experimental Details}\label{apd:exp_setting}
\subsection{More Details of Benchmark Datasets}
Table~\ref{real_world_dataset} summarizes their key characteristics, including the number of examples, feature dimensionality, positive class configurations, and task domains. 
For all datasets, we vary the positive rate in \{10\%, 20\%, 30\%, 40\%, 50\%\}. For the benchmark experiments in Section~\ref{sec:exp}, we used the positive rate 30\%.

\begin{table*}[ht]
\scriptsize
\caption{Summary of datasets used in this PU learning benchmark.}\label{real_world_dataset}\vspace{2pt}
\centering
\setlength{\tabcolsep}{6pt}
\renewcommand{\arraystretch}{1.1}
\begin{tabular}{cccccc}
\toprule
\textbf{Dataset} & \textbf{\# Examples} & \textbf{\# Features} & \textbf{Positive Classes~(Case 1)} & \textbf{Positive Classes~(Case 2)} & \textbf{Task Domain} \\
\midrule
CIFAR-10   & 20,000 & 3,072  & \{0,1,2,8,9\} & \{2,3,5,7,9\} & Image classification\\
ImageNette & 6,000  & 12,288 & \{0,1,2,8,9\} & \{2,3,5,7,9\} & Image classification \\
USPS       & 4,000  & 256    & \{4,7,9,5,8\} & \{1,6,4,9,8\} & Digit recognition \\
Letter     & 13,000 & 16     & \{B,V,L,R,I,O,W,S,J,K,C,H,Z\} & \{D,T,A,Y,Q,G,B,L,I,W,J,C,Z\} & Character recognition \\
\bottomrule
\end{tabular}
\end{table*}

\subsection{Descriptions of Algorithms}

\begin{itemize}[leftmargin=1em, itemsep=1pt, topsep=0pt, parsep=-1pt]
    \item uPU~\citep{du2015convex}: An unbiased risk estimator that is convex when the loss function satisfies certain linear-odd conditions. 
    \item nnPU~\citep{kiryo2017positive}: A non-negative risk estimator that alleviates the overfitting issue in PU learning. 
    \item nnPU-GA~\citep{kiryo2017positive}: A gradient ascent formulation of nnPU.
    \item PUSB~\citep{kato2019learning}: A method that accounts for selection bias in the labeling process. 
    \item PUbN~\citep{hsieh2019classification}: A framework that incorporates biased negative data into empirical risk minimization. 
    \item VPU~\citep{chen2020variational}: A variational approach that directly evaluates the modeling error of a Bayesian classifier from data. 
    \item PAN~\citep{hu2021predictive}: A predictive adversarial network built upon the generative adversarial network framework. 
    \item CVIR~\citep{garg2021mixture}: A mixture-proportion estimation method combining best bin estimation and conditional Value Ignoring Risk. 
    \item Dist-PU~\citep{zhao2022distpu}: A method that enforces consistency between predicted and ground-truth label distributions. 
    \item P$^3$MIX-E~\citep{li2022who}: A mixup-based method that pairs marginal pseudo-negative instances with boundary-near positive instances, with early-learning regularization. 
    \item P$^3$MIX-C~\citep{li2022who}: A mixup-based method that pairs marginal pseudo-negative instances with boundary-near positive instances, with pseudo-negative correction. 
    \item LBE~\citep{gong2022instance}: An instance-dependent PU algorithm that jointly estimates labeling bias and learns the classifier. 
    \item Count Loss~\citep{shukla2023unified}: A unified approach introducing a count-based loss penalizing deviations from arithmetic label-count constraints. 
    \item Robust-PU~\citep{zhu2023robust}: A reweighted learning framework that dynamically adjusts sample weights based on training progress and sample hardness. 
    \item Holistic-PU~\citep{wang2023beyond}: A holistic method interpreting prediction scores as a temporal point process. 
    \item PUe~\citep{wang2023pue}: A causality-based method that reconstructs the loss via normalized propensity scores and inverse probability weighting. 
    \item GLWS~\citep{chen2024general}: A general weak-supervision framework formulated as Expectation-Maximization, accommodating PU data as one supervision source. 
\end{itemize}

\subsection{Implementation Details}
All algorithms were implemented in PyTorch~\citep{paszke2019pytorch}, and all experiments were conducted on a single NVIDIA Tesla V100 GPU. 
We used the SGD optimizer and trained for 20,000 iterations across all datasets. 
Model performance on the validation and test sets was recorded every 100 iterations. 
For each dataset, we generated three random data splits. 
For each split, 10 random hyperparameter configurations were sampled from a predefined pool. 
Table~\ref{hyperparameter_table} provides the details of the hyperparameter configurations used for all algorithms. 

\begin{table}[ht]
\caption{Hyperparameters, their default values, and distributions for random search.}\label{hyperparameter_table}
\begin{center}
{
\resizebox{\textwidth}{!}{
\begin{tabular}{llll}
\toprule
\textbf{Condition} & \textbf{Parameter} & \textbf{Default Value} & \textbf{Random Distribution}\\
\midrule
\multirow{3}{*}{ResNet}       & learning rate & 0.001 & $10^{\text{Uniform}(-4.5, -2.5)}$\\
& batch size    & 64   & $2^{\text{Uniform}(5, 8)}$\\
& momentum & 0.9 & $0.9$\\ \midrule
\multirow{3}{*}{MLP}       & learning rate & 0.001 & $10^{\text{Uniform}(-4.5, -2.5)}$\\
& batch size    & 128   & $2^{\text{Uniform}(4, 7)}$\\
& momentum & 0.9 & $0.9$\\ \midrule
\multirow{1}{*}{nnPU} & tolerance threshold & 0.0 & 0.0 \\
\midrule
\multirow{1}{*}{PUbN} & importance of unlabeled data & 0.5 & RandomChoice([0.5,0.7,0.9]) \\
\midrule
\multirow{1}{*}{PAN} &  balance factor of the KL-divergences & 0.0001 & 0.0001 \\
\midrule
\multirow{6}{*}{P$^3$MIX-E} 
 & predictive score threshold & 0.85  & 0.85\\
 & size of the candidate mixup pool & 96 & 96\\
 & weight of the positive loss & 1 & 1\\
 & weight of the unlabeled loss & 1 & 1\\
 & weight of the entropy loss & 0.5 & 0.5\\
 & weight of the early-learning regularization & 5 &5 \\
\midrule
\multirow{6}{*}{P$^3$MIX-C} 
 & predictive score threshold & 0.8  & 0.8\\
 & size of the candidate mixup pool  & 96 & 96\\
 & mixup coefficient & 1.0 & 1.0\\
 & weight of the positive loss & 1 & 1\\
 & weight of the unlabeled loss & 1 & 1\\
 & weight of the entropy loss & 0.1 & 0.1\\
\midrule
\multirow{1}{*}{LBE} & warm up iteration & 2000 & 2000 \\
\midrule
\multirow{6}{*}{Robust-PU} 
& warm up iteration & 2000 & 2000 \\
& training scheduler& linear & linear \\
& temperature in the logistic loss & 1 & RandomChoice([1,1.3]) \\
& initial threshold & 0.1 & RandomChoice([0.1,0.11]) \\
& final threshold & 2 & RandomChoice([1,2]) \\
& growing step & 10 & RandomChoice([5,10]) \\
\midrule
\multirow{1}{*}{Holistic-PU} & warm up iteration & 2000 & 2000 \\
\bottomrule
\end{tabular}}
}
\end{center}
\end{table}

\section{Details of Experimental Results}\label{apd:exp_res}
Tables~\ref{tab:CIFAR10-set3-1-merged-val} to~\ref{tab:USPS-set3-2-merged-val} report detailed experimental results in terms of different metrics on CIFAR-10, ImageNette, Letter, and USPS, and the hyperparameters are determined with PA, PAUC, and OA, respectively.
\begin{figure*}[htbp]
  \centering
  \subfigure[PA]{
    \includegraphics[width=0.95\textwidth]{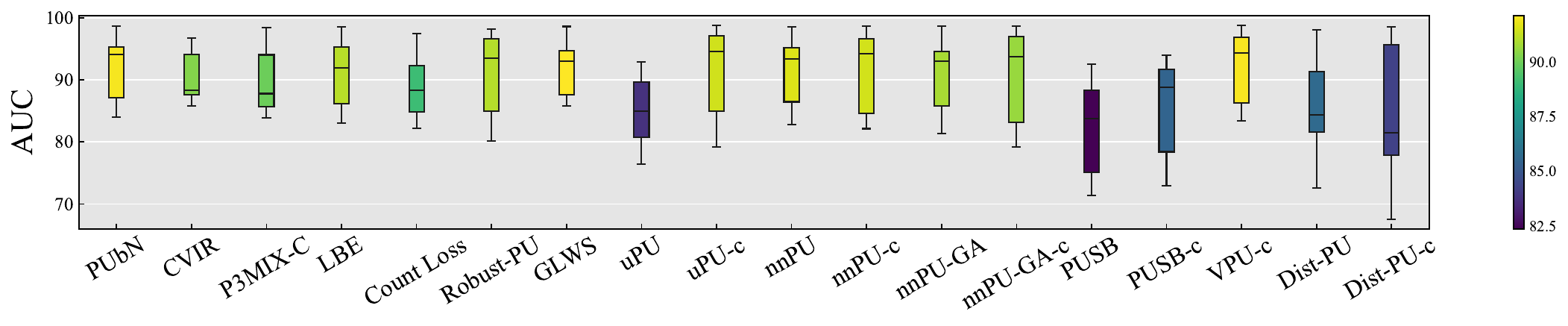}
  }
  \\
  \subfigure[PAUC]{
    \includegraphics[width=0.95\textwidth]{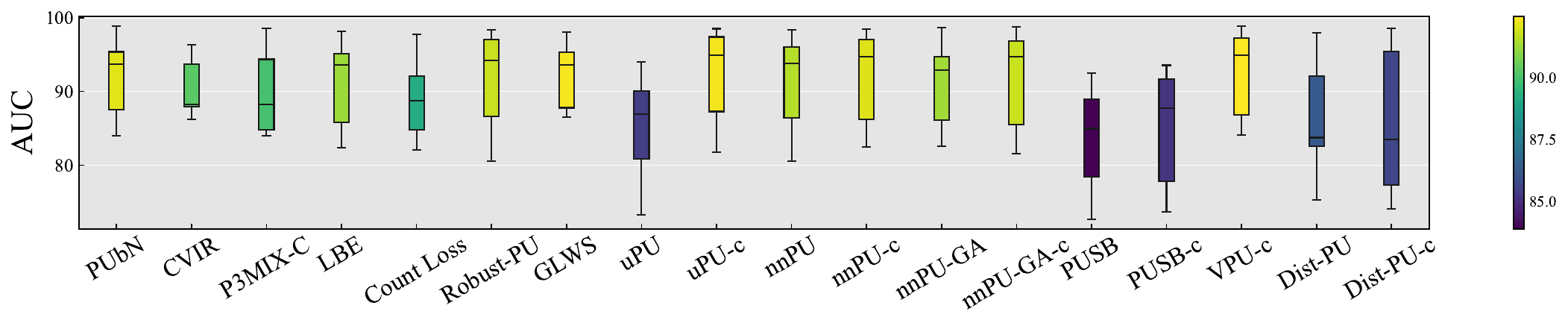}
  }
  \\
  \subfigure[OA]{
    \includegraphics[width=0.95\textwidth]{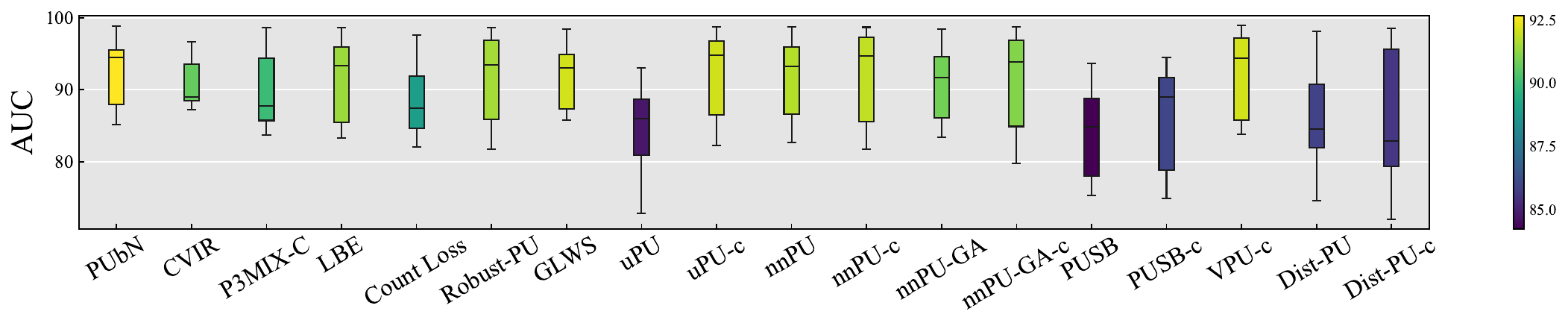}
  }
  \\  
  \caption{Overall performance w.r.t.~the AUC score of different algorithms across all datasets. Hyperparameters were tuned using PA, PAUC and OA, respectively; bar colors indicate means.}\label{fig:res_auc}
\end{figure*}

\begin{figure*}[htbp]
  \centering
  \subfigure[PA]{
    \includegraphics[width=0.95\textwidth]{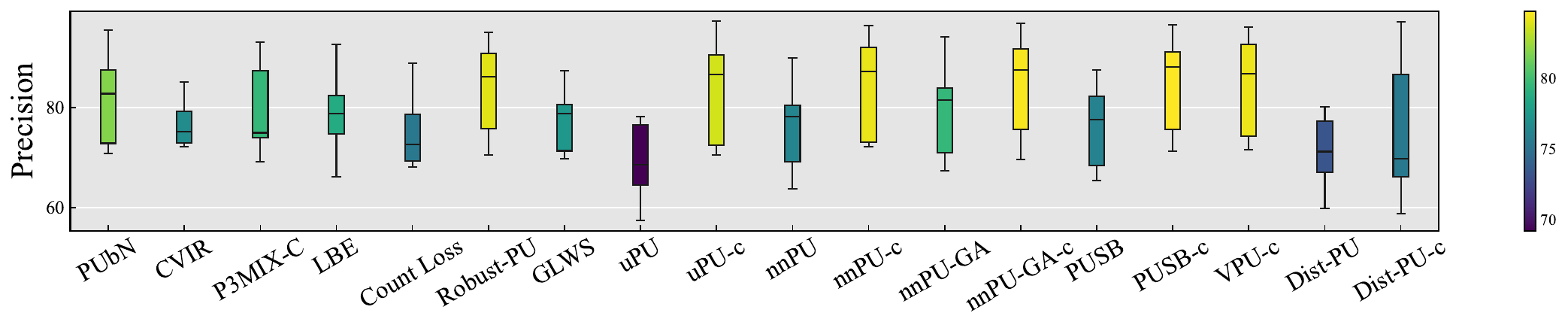}
  }
  \\
  \subfigure[PAUC]{
    \includegraphics[width=0.95\textwidth]{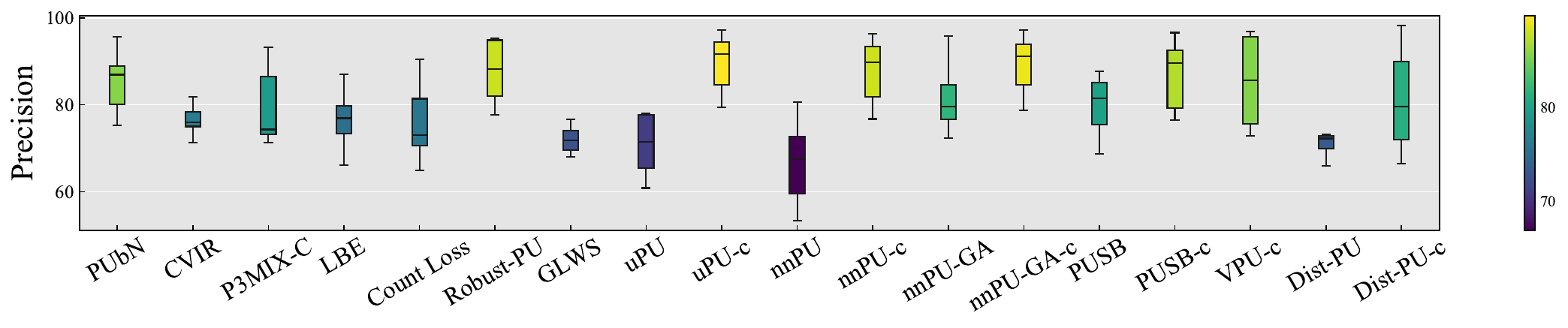}
  }
  \\
  \subfigure[OA]{
    \includegraphics[width=0.95\textwidth]{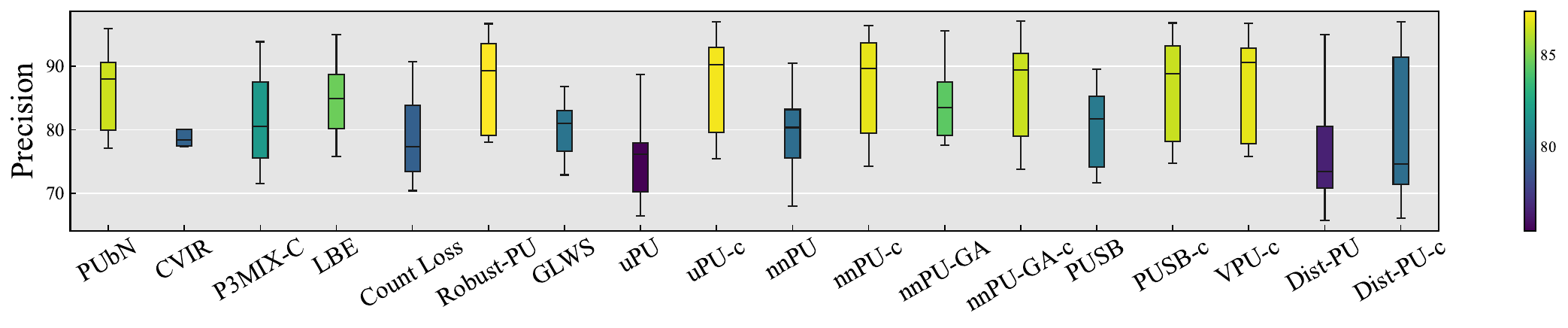}
  }
  \\  
  \caption{Overall performance w.r.t.~precision of different algorithms across all datasets. Hyperparameters were tuned using PA, PAUC and OA, respectively; bar colors indicate means.}\label{fig:res_precision}
\end{figure*}

\begin{figure*}[htbp]
  \centering
  \subfigure[PA]{
    \includegraphics[width=0.95\textwidth]{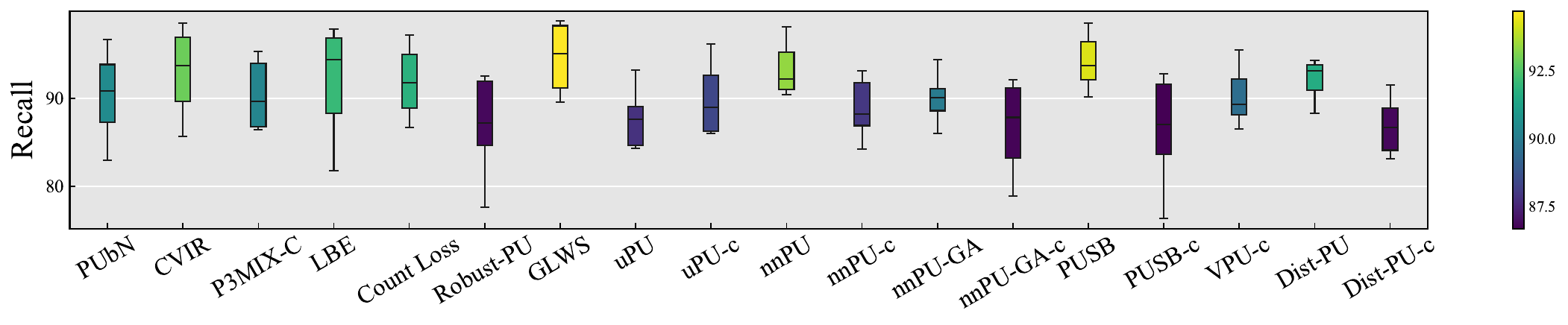}
  }
  \\
  \subfigure[PAUC]{
    \includegraphics[width=0.95\textwidth]{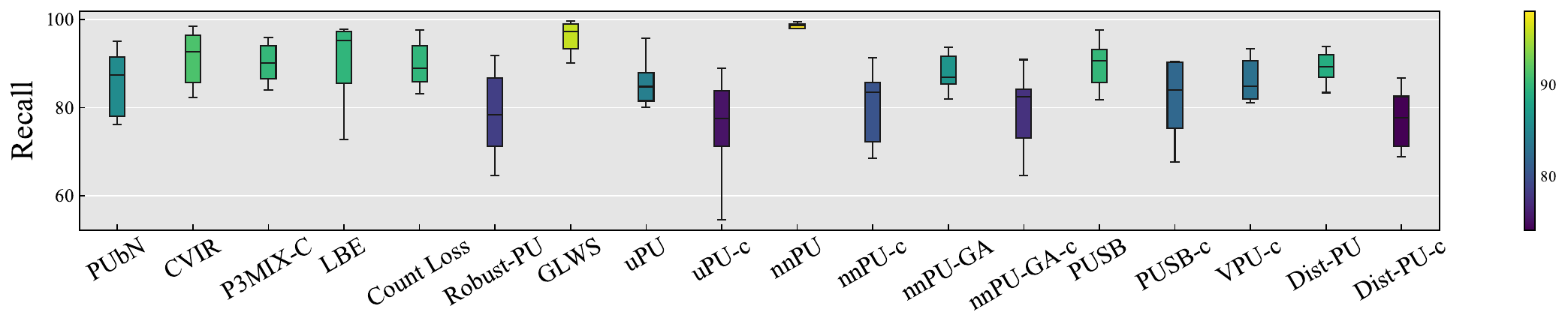}
  }
  \\
  \subfigure[OA]{
    \includegraphics[width=0.95\textwidth]{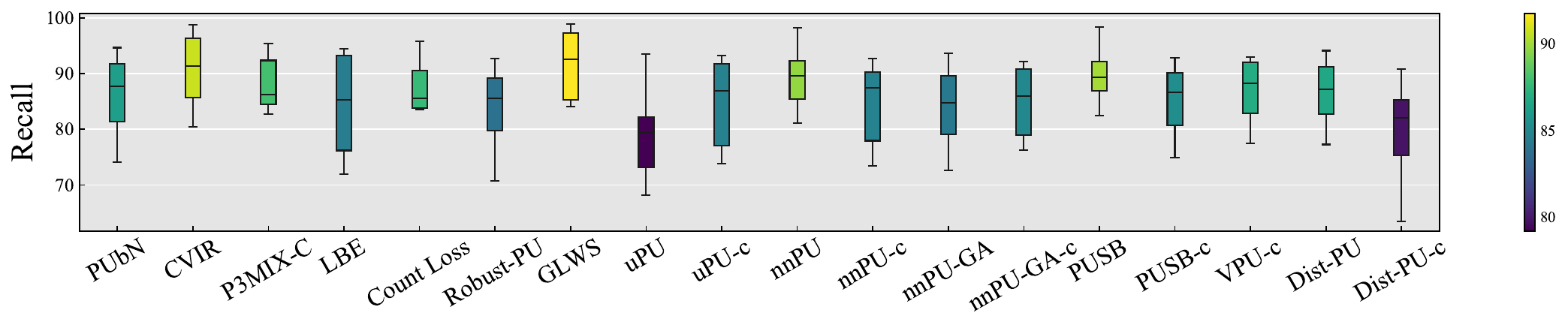}
  }
  \\  
  \caption{Overall performance w.r.t.~recall of different algorithms across all datasets. Hyperparameters were tuned using PA, PAUC and OA, respectively; bar colors indicate means.}\label{fig:res_recall}
\end{figure*}

\begin{table*}[htbp]
\centering
\scriptsize
\caption{Test results~(mean$\pm$std) of precision and recall for each algorithm on CIFAR-10~(Case 1) under different model selection criteria. The best performance w.r.t.~each validation metric is shown in bold. Here, ``-c'' indicates using the proposed calibration technique in Algorithm~\ref{alg:cal_pu}.}
\label{tab:CIFAR10-set3-1-merged-val}
\begin{tabular}{l|ccc|ccc}
\toprule
Test metric & \multicolumn{3}{c|}{Precision} & \multicolumn{3}{c}{Recall} \\
\midrule
Val metric  & PA & PAUC & OA& PA & PAUC& OA  \\
\midrule
PUbN & 85.58$\pm$0.37 & 86.97$\pm$0.18 & 89.46$\pm$0.65 & 87.71$\pm$1.00 & 85.25$\pm$1.87 & 84.64$\pm$0.24 \\
PAN & 71.61$\pm$0.86 & 73.33$\pm$0.14 & 79.51$\pm$1.74 & 88.39$\pm$1.65 & 86.65$\pm$1.61 & 78.41$\pm$3.09 \\
CVIR & 82.12$\pm$1.38 & 78.27$\pm$1.03 & 86.30$\pm$0.93 & 90.72$\pm$0.42 & 92.42$\pm$1.35 & 86.74$\pm$0.26 \\
P3MIX-E & 68.93$\pm$6.62 & 50.00$\pm$0.00 & 82.77$\pm$5.96 & 90.99$\pm$1.97 & \textbf{100.00$\pm$0.00} & 67.19$\pm$17.74 \\
P3MIX-C & 86.03$\pm$0.91 & 84.91$\pm$1.00 & 86.27$\pm$0.66 & 86.86$\pm$0.24 & 86.97$\pm$0.45 & 87.19$\pm$0.81 \\
LBE & 79.00$\pm$1.37 & 66.06$\pm$1.22 & 88.64$\pm$0.96 & 89.31$\pm$0.95 & 97.45$\pm$0.33 & 80.41$\pm$0.36 \\
Count Loss & 75.81$\pm$0.30 & 74.78$\pm$1.63 & 79.88$\pm$0.66 & 90.73$\pm$0.31 & 90.57$\pm$1.51 & 86.65$\pm$1.07 \\
Robust-PU & 84.19$\pm$1.05 & 89.77$\pm$1.85 & 88.23$\pm$0.69 & 87.73$\pm$1.21 & 80.72$\pm$2.97 & 82.89$\pm$0.26 \\
Holistic-PU & 50.10$\pm$0.05 & 50.00$\pm$0.00 & 78.03$\pm$0.77 & \textbf{99.49$\pm$0.22} & \textbf{100.00$\pm$0.00} & 88.60$\pm$0.41 \\
PUe & 74.23$\pm$1.27 & 80.12$\pm$1.97 & 85.70$\pm$2.17 & 85.52$\pm$0.48 & 76.39$\pm$2.72 & 73.49$\pm$1.77 \\
GLWS & 79.61$\pm$0.65 & 73.12$\pm$2.81 & 82.86$\pm$0.85 & 92.70$\pm$0.34 & 95.53$\pm$1.03 & 89.99$\pm$0.32 \\
\midrule
uPU & 78.14$\pm$1.90 & 78.06$\pm$7.35 & 88.69$\pm$0.56 & 84.29$\pm$0.41 & 80.13$\pm$7.43 & 73.44$\pm$0.66 \\
\rowcolor{gray!10} uPU-c & 85.58$\pm$0.88 & 89.53$\pm$1.39 & 88.50$\pm$0.93 & 86.39$\pm$0.98 & 77.69$\pm$2.63 & 83.91$\pm$0.66 \\
nnPU & 77.17$\pm$0.27 & 68.25$\pm$0.27 & 78.73$\pm$1.19 & 91.00$\pm$0.32 & 95.62$\pm$0.62 & 88.99$\pm$1.44 \\
\rowcolor{gray!10} nnPU-c & 83.72$\pm$0.48 & 87.75$\pm$0.73 & 86.66$\pm$0.46 & 88.22$\pm$0.98 & 83.76$\pm$0.68 & 85.95$\pm$0.77 \\
nnPU-GA & 81.92$\pm$1.66 & 82.29$\pm$0.38 & 86.81$\pm$1.62 & 88.18$\pm$1.20 & 87.12$\pm$0.72 & 82.54$\pm$0.65 \\
\rowcolor{gray!10} nnPU-GA-c & 85.33$\pm$0.90 & 89.78$\pm$1.03 & 89.25$\pm$0.78 & 86.55$\pm$1.15 & 81.95$\pm$0.76 & 82.19$\pm$0.41 \\
PUSB & 76.20$\pm$1.34 & 78.13$\pm$1.39 & 78.64$\pm$0.98 & 91.93$\pm$0.81 & 90.41$\pm$0.19 & \textbf{90.44$\pm$0.20} \\
\rowcolor{gray!10} PUSB-c & 86.43$\pm$0.03 & 89.07$\pm$1.32 & 87.87$\pm$0.17 & 85.76$\pm$0.84 & 79.38$\pm$1.23 & 84.66$\pm$0.25 \\
VPU & \textbf{88.71$\pm$0.41} & \textbf{97.16$\pm$1.53} & \textbf{90.61$\pm$0.82} & 80.05$\pm$0.84 & 33.15$\pm$15.89 & 79.93$\pm$1.08 \\
\rowcolor{gray!10} VPU-c & 84.97$\pm$1.65 & 77.43$\pm$2.41 & 89.08$\pm$0.15 & 88.67$\pm$0.83 & 93.37$\pm$0.83 & 85.82$\pm$0.61 \\
Dist-PU & 76.34$\pm$0.77 & 72.78$\pm$0.89 & 84.75$\pm$0.15 & 91.79$\pm$0.56 & 93.81$\pm$0.86 & 81.86$\pm$1.26 \\
\rowcolor{gray!10} Dist-PU-c & 84.07$\pm$1.13 & 88.22$\pm$2.06 & 90.49$\pm$0.84 & 91.58$\pm$0.99 & 86.67$\pm$2.28 & 86.02$\pm$0.83 \\
\bottomrule
\end{tabular}
\end{table*}

\begin{table*}[htbp]
\centering
\scriptsize
\caption{Test results~(mean$\pm$std) of precision and recall for each algorithm on CIFAR-10~(Case 2) under different model selection criteria. The best performance w.r.t.~each validation metric is shown in bold. Here, ``-c'' indicates using the proposed calibration technique in Algorithm~\ref{alg:cal_pu}.}
\label{tab:CIFAR10-set3-2-merged-val}
\begin{tabular}{l|ccc|ccc}
\toprule
Test metric & \multicolumn{3}{c|}{Precision} & \multicolumn{3}{c}{Recall} \\
\midrule
Val metric  & PA & PAUC & OA& PA & PAUC& OA  \\
\midrule
PUbN & 73.21$\pm$1.41 & 80.55$\pm$1.72 & 80.14$\pm$1.05 & 89.42$\pm$0.70 & 78.25$\pm$3.04 & 79.75$\pm$0.94 \\
PAN & 57.23$\pm$1.87 & 56.98$\pm$2.87 & 59.50$\pm$1.67 & 92.77$\pm$0.38 & 94.70$\pm$2.27 & 83.05$\pm$5.42 \\
CVIR & 73.13$\pm$1.91 & 76.32$\pm$1.46 & 77.51$\pm$0.90 & 90.57$\pm$0.80 & 85.61$\pm$1.27 & 85.81$\pm$0.63 \\
P3MIX-E & 55.91$\pm$3.28 & 33.33$\pm$13.61 & 56.04$\pm$3.39 & 96.32$\pm$1.97 & 66.67$\pm$27.22 & \textbf{96.07$\pm$2.17} \\
P3MIX-C & 74.10$\pm$1.68 & 74.09$\pm$2.13 & 75.63$\pm$0.62 & 86.67$\pm$1.25 & 85.03$\pm$1.78 & 84.71$\pm$0.96 \\
LBE & 66.21$\pm$1.66 & 58.31$\pm$2.11 & 75.81$\pm$1.05 & 92.61$\pm$1.25 & 97.81$\pm$0.77 & 76.30$\pm$0.92 \\
Count Loss & 69.39$\pm$0.70 & 71.20$\pm$0.18 & 73.73$\pm$0.09 & 87.54$\pm$0.76 & 83.13$\pm$1.10 & 83.48$\pm$2.12 \\
Robust-PU & 75.32$\pm$1.11 & 82.16$\pm$1.82 & 79.06$\pm$0.85 & 86.25$\pm$1.51 & 72.97$\pm$1.95 & 80.59$\pm$1.55 \\
Holistic-PU & 53.00$\pm$0.10 & 59.93$\pm$4.67 & 67.62$\pm$0.43 & \textbf{98.93$\pm$0.21} & 54.64$\pm$23.85 & 81.48$\pm$5.21 \\
PUe & 63.65$\pm$0.22 & 62.69$\pm$2.20 & 71.16$\pm$1.47 & 86.71$\pm$0.82 & 88.03$\pm$2.08 & 71.53$\pm$3.84 \\
GLWS & 71.86$\pm$1.10 & 69.66$\pm$1.62 & 77.16$\pm$0.93 & 91.35$\pm$1.11 & 93.40$\pm$0.68 & 84.11$\pm$0.63 \\
\midrule
uPU & 62.03$\pm$1.56 & 65.14$\pm$1.52 & 69.80$\pm$0.66 & 84.79$\pm$2.37 & 82.83$\pm$2.57 & 72.40$\pm$3.72 \\
\rowcolor{gray!10} uPU-c & 72.31$\pm$0.26 & 82.55$\pm$0.25 & 80.15$\pm$0.99 & 88.23$\pm$0.26 & 74.27$\pm$0.66 & 77.23$\pm$2.25 \\
nnPU & 68.39$\pm$1.63 & 57.41$\pm$0.78 & 74.03$\pm$0.92 & 91.01$\pm$1.51 & \textbf{98.65$\pm$0.40} & 85.19$\pm$0.73 \\
\rowcolor{gray!10} nnPU-c & 73.19$\pm$1.09 & 81.98$\pm$0.60 & 80.25$\pm$0.74 & 87.81$\pm$1.17 & 73.10$\pm$1.44 & 77.99$\pm$1.18 \\
nnPU-GA & 71.42$\pm$1.13 & 72.38$\pm$0.59 & 78.98$\pm$1.66 & 88.75$\pm$1.03 & 86.52$\pm$2.59 & 77.65$\pm$1.40 \\
\rowcolor{gray!10} nnPU-GA-c & 74.94$\pm$0.69 & 80.00$\pm$1.79 & 79.37$\pm$0.90 & 84.28$\pm$2.37 & 75.86$\pm$1.44 & 78.70$\pm$0.96 \\
PUSB & 69.38$\pm$0.61 & 76.30$\pm$1.64 & 74.79$\pm$0.62 & 92.21$\pm$0.15 & 84.05$\pm$2.62 & 85.62$\pm$1.96 \\
\rowcolor{gray!10} PUSB-c & 76.45$\pm$0.77 & 79.54$\pm$1.37 & 78.93$\pm$0.69 & 84.07$\pm$0.83 & 75.54$\pm$1.16 & 79.71$\pm$1.19 \\
VPU & \textbf{81.54$\pm$0.63} & \textbf{92.66$\pm$1.97} & \textbf{82.10$\pm$0.62} & 69.74$\pm$1.96 & 29.59$\pm$11.86 & 69.91$\pm$2.43 \\
\rowcolor{gray!10} VPU-c & 72.30$\pm$0.58 & 72.93$\pm$1.34 & 77.65$\pm$0.43 & 89.87$\pm$0.48 & 90.00$\pm$0.94 & 83.71$\pm$1.10 \\
Dist-PU & 68.12$\pm$0.72 & 71.23$\pm$1.19 & 71.27$\pm$1.24 & 88.31$\pm$0.40 & 83.64$\pm$1.35 & 83.07$\pm$1.45 \\
\rowcolor{gray!10} Dist-PU-c & 69.46$\pm$4.49 & 72.97$\pm$2.81 & 75.24$\pm$2.54 & 83.36$\pm$0.70 & 78.05$\pm$3.74 & 72.82$\pm$5.27 \\
\bottomrule
\end{tabular}
\end{table*}

\begin{table*}[htbp]
\centering
\scriptsize
\caption{Test results~(mean$\pm$std) of precision and recall for each algorithm on ImageNette~(Case 1) under different model selection criteria. The best performance w.r.t.~each validation metric is shown in bold. Here, ``-c'' indicates using the proposed calibration technique in Algorithm~\ref{alg:cal_pu}.}
\label{tab:IMAGENETTE-set3-1-merged-val}
\begin{tabular}{l|ccc|ccc}
\toprule
Test metric & \multicolumn{3}{c|}{Precision} & \multicolumn{3}{c}{Recall} \\
\midrule
Val metric  & PA & PAUC & OA& PA & PAUC& OA  \\
\midrule
PUbN & 70.84$\pm$0.32 & 78.57$\pm$4.30 & 77.10$\pm$1.61 & 85.89$\pm$0.85 & 76.09$\pm$5.82 & 81.90$\pm$2.84 \\
PAN & 50.00$\pm$0.66 & 53.43$\pm$1.40 & 60.22$\pm$3.11 & 94.12$\pm$2.52 & 38.73$\pm$23.64 & 36.64$\pm$2.60 \\
CVIR & 74.49$\pm$1.81 & 75.11$\pm$3.74 & 78.28$\pm$1.36 & 86.93$\pm$1.69 & 85.67$\pm$3.79 & 85.13$\pm$1.14 \\
P3MIX-E & 71.37$\pm$2.70 & 32.75$\pm$13.37 & 71.98$\pm$2.64 & 81.92$\pm$1.42 & 66.67$\pm$27.22 & 81.48$\pm$1.38 \\
P3MIX-C & 74.23$\pm$1.92 & 74.56$\pm$2.46 & 78.59$\pm$2.18 & 87.47$\pm$0.80 & 87.34$\pm$0.62 & 82.78$\pm$1.94 \\
LBE & \textbf{76.29$\pm$0.82} & 81.74$\pm$1.60 & \textbf{80.73$\pm$0.77} & 81.76$\pm$0.93 & 73.48$\pm$3.85 & 75.86$\pm$2.11 \\
Count Loss & 69.02$\pm$1.15 & 71.41$\pm$2.49 & 74.81$\pm$1.18 & 89.37$\pm$0.93 & 86.22$\pm$1.82 & 83.77$\pm$2.66 \\
Robust-PU & 75.89$\pm$0.38 & 81.60$\pm$2.37 & 79.16$\pm$0.86 & 80.00$\pm$1.31 & 66.11$\pm$7.91 & 77.07$\pm$1.53 \\
Holistic-PU & 50.17$\pm$0.26 & 54.20$\pm$3.82 & 52.95$\pm$0.29 & 93.12$\pm$1.81 & 84.66$\pm$10.32 & 50.48$\pm$2.51 \\
PUe & 64.33$\pm$2.68 & 70.74$\pm$0.59 & 68.85$\pm$0.95 & 78.49$\pm$2.87 & 71.33$\pm$3.07 & 74.38$\pm$2.68 \\
GLWS & 69.81$\pm$1.18 & 68.10$\pm$1.82 & 75.17$\pm$0.67 & 90.92$\pm$1.13 & 92.88$\pm$0.54 & 84.51$\pm$0.74 \\
\midrule
uPU & 65.37$\pm$1.04 & 60.87$\pm$4.80 & 70.38$\pm$0.55 & 87.71$\pm$0.58 & 89.47$\pm$4.66 & 80.15$\pm$1.07 \\
\rowcolor{gray!10} uPU-c & 70.53$\pm$2.49 & 85.36$\pm$2.00 & 77.92$\pm$1.59 & 86.05$\pm$3.74 & 54.50$\pm$12.20 & 76.69$\pm$1.98 \\
nnPU & 69.49$\pm$1.81 & 60.22$\pm$0.88 & 76.11$\pm$2.64 & 90.46$\pm$1.00 & \textbf{95.92$\pm$0.48} & 81.15$\pm$3.55 \\
\rowcolor{gray!10} nnPU-c & 72.51$\pm$1.01 & 81.53$\pm$2.56 & 77.04$\pm$0.57 & 84.20$\pm$0.62 & 69.52$\pm$4.15 & 77.77$\pm$2.26 \\
nnPU-GA & 69.74$\pm$0.60 & 76.68$\pm$1.43 & 79.13$\pm$2.26 & 89.37$\pm$0.77 & 81.95$\pm$3.81 & 79.63$\pm$2.88 \\
\rowcolor{gray!10} nnPU-GA-c & 75.88$\pm$1.16 & \textbf{86.08$\pm$2.79} & 77.78$\pm$0.87 & 80.10$\pm$1.46 & 54.93$\pm$8.53 & 79.05$\pm$1.77 \\
PUSB & 65.46$\pm$0.55 & 72.72$\pm$2.12 & 71.63$\pm$0.81 & \textbf{94.26$\pm$0.58} & 86.17$\pm$2.78 & \textbf{87.31$\pm$1.24} \\
\rowcolor{gray!10} PUSB-c & 73.08$\pm$0.57 & 78.35$\pm$1.28 & 75.79$\pm$1.41 & 82.24$\pm$0.99 & 74.69$\pm$1.16 & 80.96$\pm$2.74 \\
VPU & 61.32$\pm$5.55 & 33.33$\pm$27.22 & 80.12$\pm$7.64 & 59.47$\pm$17.51 & 0.07$\pm$0.06 & 34.73$\pm$6.80 \\
\rowcolor{gray!10} VPU-c & 74.86$\pm$1.16 & 75.32$\pm$0.83 & 77.88$\pm$1.56 & 81.67$\pm$0.95 & 82.23$\pm$0.80 & 77.46$\pm$0.23 \\
Dist-PU & 63.85$\pm$2.20 & 66.01$\pm$2.40 & 69.52$\pm$0.61 & 92.79$\pm$0.44 & 89.14$\pm$1.38 & 86.31$\pm$1.05 \\
\rowcolor{gray!10} Dist-PU-c & 67.29$\pm$1.38 & 81.49$\pm$4.56 & 72.42$\pm$1.95 & 84.35$\pm$1.74 & 43.02$\pm$11.82 & 76.21$\pm$2.39 \\
\bottomrule
\end{tabular}
\end{table*}

\begin{table*}[htbp]
\centering
\scriptsize
\caption{Test results~(mean$\pm$std) of precision and recall for each algorithm on ImageNette~(Case 2) under different model selection criteria. The best performance w.r.t.~each validation metric is shown in bold. Here, ``-c'' indicates using the proposed calibration technique in Algorithm~\ref{alg:cal_pu}.}
\label{tab:IMAGENETTE-set3-2-merged-val}
\begin{tabular}{l|ccc|ccc}
\toprule
Test metric & \multicolumn{3}{c|}{Precision} & \multicolumn{3}{c}{Recall} \\
\midrule
Val metric  & PA & PAUC & OA& PA & PAUC& OA  \\
\midrule
PUbN & 71.79$\pm$1.10 & 75.30$\pm$1.17 & \textbf{79.28$\pm$1.74} & 82.94$\pm$1.86 & 77.08$\pm$3.45 & 74.08$\pm$2.39 \\
PAN & 51.63$\pm$0.24 & 70.75$\pm$3.83 & 64.81$\pm$2.19 & \textbf{93.47$\pm$1.86} & 53.01$\pm$8.18 & 61.43$\pm$3.03 \\
CVIR & 72.31$\pm$1.00 & 75.51$\pm$1.45 & 78.53$\pm$1.48 & 85.71$\pm$0.51 & 82.27$\pm$1.44 & 80.42$\pm$1.40 \\
P3MIX-E & 59.73$\pm$5.02 & 33.05$\pm$13.49 & 60.00$\pm$5.03 & 82.05$\pm$9.32 & 66.67$\pm$27.22 & 82.13$\pm$9.07 \\
P3MIX-C & 69.19$\pm$0.86 & 71.42$\pm$0.87 & 71.54$\pm$1.30 & 86.45$\pm$0.93 & 84.04$\pm$0.35 & 83.78$\pm$0.78 \\
LBE & 70.02$\pm$1.26 & 75.72$\pm$1.63 & 78.52$\pm$0.55 & 85.34$\pm$1.94 & 72.71$\pm$4.32 & 71.91$\pm$2.57 \\
Count Loss & 68.13$\pm$0.40 & 69.07$\pm$0.31 & 70.39$\pm$0.88 & 86.66$\pm$0.67 & 84.79$\pm$0.94 & 83.73$\pm$0.75 \\
Robust-PU & 70.52$\pm$2.00 & 77.77$\pm$2.80 & 78.01$\pm$1.20 & 77.58$\pm$1.81 & 64.58$\pm$4.76 & 70.69$\pm$2.37 \\
Holistic-PU & 54.06$\pm$1.09 & 53.98$\pm$1.86 & 58.96$\pm$0.87 & 82.10$\pm$6.17 & 71.70$\pm$7.18 & 58.89$\pm$1.87 \\
PUe & 60.79$\pm$0.12 & 63.24$\pm$1.65 & 65.25$\pm$0.23 & 80.82$\pm$2.80 & 75.25$\pm$0.71 & 74.24$\pm$2.00 \\
GLWS & 69.84$\pm$0.87 & 69.39$\pm$0.41 & 72.88$\pm$0.68 & 89.59$\pm$0.58 & 90.12$\pm$0.64 & \textbf{85.49$\pm$1.26} \\
\midrule
uPU & 57.42$\pm$2.33 & 65.47$\pm$2.34 & 66.40$\pm$2.65 & 84.31$\pm$3.94 & 71.10$\pm$5.64 & 68.14$\pm$6.00 \\
\rowcolor{gray!10} uPU-c & \textbf{72.56$\pm$2.89} & \textbf{79.46$\pm$1.02} & 75.39$\pm$0.26 & 73.42$\pm$3.51 & 62.21$\pm$3.91 & 73.83$\pm$1.50 \\
nnPU & 63.81$\pm$0.91 & 53.31$\pm$2.09 & 68.00$\pm$0.93 & 91.01$\pm$2.53 & \textbf{98.56$\pm$0.94} & 85.41$\pm$1.95 \\
\rowcolor{gray!10} nnPU-c & 72.24$\pm$0.48 & 76.75$\pm$2.56 & 74.25$\pm$0.69 & 78.73$\pm$3.00 & 68.48$\pm$6.78 & 73.47$\pm$3.67 \\
nnPU-GA & 67.45$\pm$1.90 & 76.34$\pm$0.93 & 77.55$\pm$1.48 & 85.99$\pm$2.22 & 72.88$\pm$4.68 & 72.56$\pm$2.15 \\
\rowcolor{gray!10} nnPU-GA-c & 69.67$\pm$2.08 & 78.80$\pm$0.62 & 73.81$\pm$1.46 & 78.90$\pm$1.89 & 64.65$\pm$0.71 & 76.22$\pm$3.06 \\
PUSB & 65.52$\pm$2.07 & 68.76$\pm$0.60 & 72.14$\pm$1.68 & 90.15$\pm$1.50 & 81.84$\pm$0.98 & 82.39$\pm$2.42 \\
\rowcolor{gray!10} PUSB-c & 71.30$\pm$1.31 & 76.51$\pm$0.32 & 74.72$\pm$1.46 & 76.28$\pm$0.92 & 67.73$\pm$0.61 & 74.87$\pm$1.76 \\
VPU & 66.96$\pm$2.61 & 55.05$\pm$22.59 & 78.42$\pm$1.24 & 81.55$\pm$2.30 & 22.49$\pm$18.22 & 63.98$\pm$3.62 \\
\rowcolor{gray!10} VPU-c & 71.58$\pm$0.54 & 75.82$\pm$0.96 & 75.76$\pm$1.51 & 86.54$\pm$1.40 & 81.02$\pm$0.43 & 80.18$\pm$0.64 \\
Dist-PU & 59.86$\pm$1.20 & 63.30$\pm$0.69 & 65.65$\pm$1.35 & 84.36$\pm$2.94 & 83.38$\pm$1.52 & 77.27$\pm$1.64 \\
\rowcolor{gray!10} Dist-PU-c & 58.89$\pm$4.09 & 66.51$\pm$1.59 & 68.25$\pm$0.62 & 83.16$\pm$7.58 & 72.05$\pm$2.14 & 63.51$\pm$5.76 \\
\bottomrule
\end{tabular}
\end{table*}

\begin{table*}[htbp]
\centering
\scriptsize
\caption{Test results~(mean$\pm$std) of accuracy, AUC, and F1 score for each algorithm on Letter~(Case 1) under different model selection criteria. The best performance w.r.t.~each validation metric is shown in bold. Here, ``-c'' indicates using the proposed calibration technique in Algorithm~\ref{alg:cal_pu}.}
\label{tab:Letter-set3-1-merged-test}
\resizebox{0.99\textwidth}{!}{
\begin{tabular}{l|ccc|ccc|ccc}
\toprule
Test metric & \multicolumn{3}{c|}{Test ACC} & \multicolumn{3}{c|}{AUC} & \multicolumn{3}{c}{Test F1} \\
\midrule
Val metric & PA & PAUC & OA  & PA & PAUC & OA  & PA & PAUC & OA \\
\midrule
PUbN & 88.92$\pm$1.90 & 89.05$\pm$2.11 & 89.70$\pm$1.38 & 94.24$\pm$1.48 & 94.48$\pm$1.35 & 94.61$\pm$1.20 & 89.59$\pm$1.58 & 89.43$\pm$1.70 & 89.57$\pm$1.46 \\
PAN & 49.28$\pm$0.27 & 48.20$\pm$0.54 & 52.18$\pm$1.24 & 47.05$\pm$2.18 & 55.92$\pm$0.51 & 46.69$\pm$2.30 & 65.40$\pm$0.26 & 65.04$\pm$0.49 & 42.19$\pm$17.25 \\
CVIR & 83.35$\pm$0.56 & 82.60$\pm$0.75 & 84.67$\pm$0.58 & 86.40$\pm$0.65 & 87.63$\pm$0.99 & 87.78$\pm$0.90 & 85.16$\pm$0.50 & 84.33$\pm$0.62 & 85.86$\pm$0.35 \\
P3MIX-E & 51.80$\pm$1.39 & 49.62$\pm$0.87 & 61.42$\pm$4.12 & 60.70$\pm$5.00 & 81.42$\pm$0.26 & 67.00$\pm$7.82 & 67.12$\pm$0.64 & 43.85$\pm$17.57 & 42.69$\pm$17.49 \\
P3MIX-C & 80.03$\pm$1.13 & 77.58$\pm$2.53 & 80.92$\pm$1.14 & 85.08$\pm$1.23 & 84.43$\pm$1.62 & 84.50$\pm$0.68 & 82.46$\pm$0.83 & 80.56$\pm$1.73 & 82.83$\pm$0.96 \\
LBE & 85.63$\pm$1.13 & 81.37$\pm$2.19 & 87.55$\pm$0.28 & 91.81$\pm$1.52 & 93.96$\pm$0.29 & 94.38$\pm$0.23 & 87.17$\pm$0.85 & 83.32$\pm$1.15 & 87.44$\pm$0.32 \\
Count Loss & 77.67$\pm$0.86 & 73.15$\pm$1.96 & 78.27$\pm$1.01 & 86.31$\pm$1.48 & 87.17$\pm$1.55 & 84.67$\pm$0.78 & 80.27$\pm$0.67 & 77.19$\pm$1.62 & 79.98$\pm$0.84 \\
Robust-PU & 90.02$\pm$0.67 & 89.17$\pm$0.33 & 90.63$\pm$0.31 & 95.30$\pm$0.29 & 95.51$\pm$0.32 & 95.91$\pm$0.31 & 90.20$\pm$0.61 & 89.09$\pm$0.66 & 90.58$\pm$0.32 \\
Holistic-PU & 85.80$\pm$0.99 & 75.22$\pm$9.45 & 87.32$\pm$1.27 & 94.12$\pm$1.36 & 95.72$\pm$1.49 & 94.74$\pm$1.64 & 87.14$\pm$0.83 & 80.97$\pm$5.61 & 88.17$\pm$1.02 \\
PUe & 79.50$\pm$0.24 & 81.83$\pm$1.08 & 82.00$\pm$0.78 & 89.77$\pm$1.07 & 91.42$\pm$0.98 & 90.88$\pm$0.50 & 81.54$\pm$0.21 & 82.32$\pm$1.64 & 81.95$\pm$1.08 \\
GLWS & 85.87$\pm$0.95 & 80.93$\pm$1.54 & 86.32$\pm$0.58 & 92.91$\pm$0.63 & 93.62$\pm$0.45 & 92.65$\pm$0.83 & 87.03$\pm$0.75 & 83.53$\pm$1.17 & 87.28$\pm$0.54 \\
\midrule
uPU & 74.98$\pm$1.19 & 79.75$\pm$0.63 & 77.72$\pm$0.79 & 85.87$\pm$0.59 & 88.34$\pm$0.29 & 86.19$\pm$0.71 & 78.05$\pm$1.00 & 79.62$\pm$0.40 & 77.65$\pm$1.10 \\
\rowcolor{gray!10} uPU-c & \textbf{92.23$\pm$0.26} & 85.97$\pm$4.01 & \textbf{92.73$\pm$0.15} & \textbf{96.84$\pm$0.15} & \textbf{97.26$\pm$0.05} & 96.40$\pm$0.18 & \textbf{92.18$\pm$0.14} & 83.09$\pm$6.02 & \textbf{92.60$\pm$0.14} \\
nnPU & 85.13$\pm$0.46 & 79.53$\pm$1.62 & 85.60$\pm$0.31 & 94.16$\pm$0.51 & 95.44$\pm$0.44 & 94.49$\pm$0.66 & 86.19$\pm$0.37 & 82.46$\pm$1.10 & 85.85$\pm$0.40 \\
\rowcolor{gray!10} nnPU-c & 91.87$\pm$0.34 & 89.25$\pm$1.14 & 91.82$\pm$0.14 & 96.15$\pm$0.30 & 96.39$\pm$0.69 & 96.36$\pm$0.38 & 91.85$\pm$0.25 & 88.24$\pm$1.71 & 91.58$\pm$0.21 \\
nnPU-GA & 85.12$\pm$0.13 & 82.85$\pm$0.68 & 84.27$\pm$0.58 & 93.17$\pm$0.44 & 93.56$\pm$0.61 & 91.18$\pm$0.41 & 85.74$\pm$0.25 & 82.86$\pm$1.69 & 84.46$\pm$0.64 \\
\rowcolor{gray!10} nnPU-GA-c & 90.97$\pm$0.30 & 88.60$\pm$0.57 & 90.97$\pm$0.30 & 94.72$\pm$0.23 & 96.37$\pm$1.16 & 94.72$\pm$0.23 & 90.86$\pm$0.25 & 87.75$\pm$0.25 & 90.86$\pm$0.25 \\
PUSB & 85.73$\pm$0.70 & 87.43$\pm$0.21 & 86.82$\pm$0.54 & 86.09$\pm$0.63 & 87.42$\pm$0.25 & 86.81$\pm$0.56 & 86.63$\pm$0.67 & 87.66$\pm$0.50 & 86.70$\pm$0.78 \\
\rowcolor{gray!10} PUSB-c & 91.42$\pm$0.86 & \textbf{90.68$\pm$0.58} & 91.43$\pm$0.92 & 91.45$\pm$0.87 & 90.66$\pm$0.57 & 91.46$\pm$0.92 & 91.35$\pm$1.02 & \textbf{90.30$\pm$0.67} & 91.29$\pm$1.04 \\
VPU & 89.85$\pm$1.07 & 67.88$\pm$8.64 & 90.13$\pm$0.77 & 95.67$\pm$0.40 & 96.03$\pm$0.77 & 95.44$\pm$0.57 & 89.69$\pm$0.98 & 44.13$\pm$20.00 & 89.86$\pm$0.67 \\
\rowcolor{gray!10} VPU-c & 91.83$\pm$0.54 & 90.28$\pm$0.98 & 92.15$\pm$0.52 & 96.32$\pm$0.38 & 97.06$\pm$0.26 & \textbf{96.96$\pm$0.30} & 91.95$\pm$0.42 & 89.38$\pm$1.17 & 91.93$\pm$0.48 \\
Dist-PU & 77.07$\pm$0.77 & 77.45$\pm$0.78 & 77.55$\pm$0.78 & 81.95$\pm$1.07 & 82.71$\pm$1.23 & 82.07$\pm$1.66 & 80.15$\pm$0.45 & 79.68$\pm$0.14 & 80.07$\pm$0.40 \\
\rowcolor{gray!10} Dist-PU-c & 67.65$\pm$2.41 & 69.33$\pm$2.52 & 70.03$\pm$2.28 & 72.96$\pm$2.78 & 75.75$\pm$2.56 & 74.72$\pm$2.66 & 72.61$\pm$1.10 & 68.78$\pm$2.64 & 72.81$\pm$2.05 \\
\bottomrule
\end{tabular}}
\end{table*}

\begin{table*}[htbp]
\centering
\scriptsize
\caption{Test results~(mean$\pm$std) of precision and recall for each algorithm on Letter~(Case 1) under different model selection criteria. The best performance w.r.t.~each validation metric is shown in bold. Here, ``-c'' indicates using the proposed calibration technique in Algorithm~\ref{alg:cal_pu}.}
\label{tab:Letter-set3-1-merged-val}
\begin{tabular}{l|ccc|ccc}
\toprule
Test metric & \multicolumn{3}{c|}{Precision} & \multicolumn{3}{c}{Recall} \\
\midrule
Val metric  & PA & PAUC & OA& PA & PAUC& OA  \\
\midrule
PUbN & 84.12$\pm$2.75 & 86.86$\pm$4.01 & 88.33$\pm$1.64 & 96.00$\pm$0.44 & 92.79$\pm$2.16 & 90.85$\pm$1.27 \\
PAN & 49.11$\pm$0.23 & 48.20$\pm$0.54 & 34.01$\pm$13.90 & 97.89$\pm$1.15 & \textbf{100.00$\pm$0.00} & 56.40$\pm$23.56 \\
CVIR & 75.82$\pm$0.92 & 74.75$\pm$0.73 & 77.28$\pm$0.52 & 97.17$\pm$0.70 & 96.73$\pm$0.42 & 96.58$\pm$0.14 \\
P3MIX-E & 50.79$\pm$0.93 & 65.70$\pm$14.00 & 45.23$\pm$18.89 & \textbf{99.04$\pm$0.74} & 66.80$\pm$27.10 & 42.64$\pm$18.48 \\
P3MIX-C & 73.49$\pm$1.22 & 71.37$\pm$2.78 & 75.15$\pm$0.90 & 94.00$\pm$1.10 & 92.77$\pm$0.92 & 92.26$\pm$1.02 \\
LBE & 78.58$\pm$1.20 & 75.84$\pm$4.43 & 85.17$\pm$1.79 & 97.89$\pm$0.29 & 93.97$\pm$3.87 & 90.14$\pm$2.36 \\
Count Loss & 69.56$\pm$1.08 & 64.88$\pm$1.99 & 72.33$\pm$0.83 & 94.95$\pm$0.69 & 95.42$\pm$1.62 & 89.44$\pm$0.97 \\
Robust-PU & 87.94$\pm$0.82 & 86.68$\pm$1.06 & 90.32$\pm$0.77 & 92.60$\pm$0.67 & 91.84$\pm$2.47 & 90.86$\pm$0.32 \\
Holistic-PU & 79.36$\pm$1.07 & 71.38$\pm$8.69 & 82.39$\pm$2.08 & 96.62$\pm$0.46 & 96.74$\pm$1.50 & 94.99$\pm$1.21 \\
PUe & 73.33$\pm$0.21 & 78.59$\pm$0.74 & 80.47$\pm$0.23 & 91.82$\pm$0.32 & 86.90$\pm$4.26 & 83.62$\pm$2.52 \\
GLWS & 78.56$\pm$1.32 & 72.32$\pm$1.87 & 79.44$\pm$0.61 & 97.60$\pm$0.27 & 98.98$\pm$0.30 & \textbf{96.84$\pm$0.48} \\
\midrule
uPU & 67.24$\pm$1.28 & 77.65$\pm$2.09 & 74.98$\pm$0.74 & 93.05$\pm$0.71 & 81.96$\pm$1.61 & 80.56$\pm$1.73 \\
\rowcolor{gray!10} uPU-c & 89.10$\pm$0.60 & 93.95$\pm$2.97 & 92.00$\pm$0.92 & 95.49$\pm$0.48 & 77.42$\pm$10.67 & 93.26$\pm$0.91 \\
nnPU & 79.27$\pm$0.28 & 70.89$\pm$1.82 & 81.97$\pm$1.57 & 94.44$\pm$0.61 & 98.70$\pm$0.39 & 90.42$\pm$2.48 \\
\rowcolor{gray!10} nnPU-c & \textbf{90.57$\pm$0.67} & 93.06$\pm$1.83 & \textbf{93.26$\pm$1.26} & 93.18$\pm$0.49 & 84.51$\pm$4.36 & 90.08$\pm$1.46 \\
nnPU-GA & 81.01$\pm$0.13 & 81.22$\pm$3.50 & 80.81$\pm$1.19 & 91.07$\pm$0.63 & 86.51$\pm$6.63 & 88.48$\pm$0.53 \\
\rowcolor{gray!10} nnPU-GA-c & 89.60$\pm$0.51 & 92.37$\pm$3.02 & 89.60$\pm$0.51 & 92.17$\pm$0.32 & 84.05$\pm$2.57 & 92.17$\pm$0.32 \\
PUSB & 79.01$\pm$1.09 & 84.95$\pm$0.84 & 84.96$\pm$1.33 & 95.94$\pm$0.96 & 90.70$\pm$2.01 & 88.78$\pm$2.76 \\
\rowcolor{gray!10} PUSB-c & 90.00$\pm$1.00 & 90.13$\pm$0.90 & 89.87$\pm$0.76 & 92.79$\pm$1.59 & 90.49$\pm$0.97 & 92.77$\pm$1.40 \\
VPU & 89.11$\pm$1.55 & 65.54$\pm$26.76 & 91.24$\pm$1.24 & 90.42$\pm$1.99 & 35.30$\pm$17.59 & 88.61$\pm$1.31 \\
\rowcolor{gray!10} VPU-c & 88.60$\pm$0.48 & \textbf{95.21$\pm$0.75} & 92.09$\pm$0.68 & 95.57$\pm$0.65 & 84.40$\pm$2.68 & 91.84$\pm$1.42 \\
Dist-PU & 70.13$\pm$0.47 & 72.05$\pm$1.16 & 71.50$\pm$1.04 & 93.51$\pm$0.55 & 89.28$\pm$1.49 & 91.14$\pm$1.75 \\
\rowcolor{gray!10} Dist-PU-c & 63.03$\pm$3.53 & 68.85$\pm$3.51 & 66.07$\pm$3.28 & 87.01$\pm$3.71 & 68.80$\pm$1.90 & 81.54$\pm$2.35 \\
\bottomrule
\end{tabular}
\end{table*}

\begin{table*}[htbp]
\centering
\scriptsize
\caption{Test results~(mean$\pm$std) of accuracy, AUC, and F1 score for each algorithm on Letter~(Case 2) under different model selection criteria. The best performance w.r.t.~each validation metric is shown in bold. Here, ``-c'' indicates using the proposed calibration technique in Algorithm~\ref{alg:cal_pu}.}
\label{tab:Letter-set3-2-merged-test}
\resizebox{0.99\textwidth}{!}{
\begin{tabular}{l|ccc|ccc|ccc}
\toprule
Test metric & \multicolumn{3}{c|}{Test ACC} & \multicolumn{3}{c|}{AUC} & \multicolumn{3}{c}{Test F1} \\
\midrule
Val metric & PA & PAUC & OA  & PA & PAUC & OA  & PA & PAUC & OA \\
\midrule
PUbN & 87.47$\pm$0.58 & 88.98$\pm$1.45 & 89.63$\pm$0.98 & 94.15$\pm$1.14 & 93.88$\pm$1.45 & 94.59$\pm$1.09 & 88.40$\pm$0.61 & 89.15$\pm$1.44 & 89.74$\pm$1.02 \\
PAN & 50.02$\pm$0.48 & 49.88$\pm$0.85 & 51.73$\pm$1.43 & 45.39$\pm$4.63 & 57.60$\pm$2.16 & 51.85$\pm$4.33 & 66.64$\pm$0.40 & 44.18$\pm$18.05 & 21.43$\pm$17.50 \\
CVIR & 84.83$\pm$0.73 & 84.22$\pm$0.89 & 84.72$\pm$0.76 & 88.63$\pm$1.49 & 88.18$\pm$0.65 & 88.67$\pm$1.62 & 86.57$\pm$0.61 & 85.38$\pm$0.87 & 86.38$\pm$0.63 \\
P3MIX-E & 55.70$\pm$2.92 & 55.57$\pm$2.96 & 65.08$\pm$3.09 & 71.43$\pm$4.39 & 81.48$\pm$2.05 & 71.19$\pm$3.66 & 68.60$\pm$0.97 & 52.06$\pm$13.86 & 64.22$\pm$0.99 \\
P3MIX-C & 81.80$\pm$2.04 & 80.70$\pm$2.16 & 83.32$\pm$2.22 & 89.68$\pm$2.56 & 90.09$\pm$2.58 & 88.23$\pm$3.66 & 83.89$\pm$1.46 & 83.35$\pm$1.46 & 83.46$\pm$2.56 \\
LBE & 87.32$\pm$0.50 & 80.82$\pm$3.97 & 88.18$\pm$0.96 & 94.51$\pm$0.18 & 94.43$\pm$0.11 & 95.34$\pm$0.57 & 88.44$\pm$0.39 & 82.61$\pm$2.51 & 88.65$\pm$1.08 \\
Count Loss & 81.35$\pm$0.64 & 82.20$\pm$1.16 & 82.93$\pm$0.85 & 90.22$\pm$0.71 & 90.35$\pm$0.73 & 90.08$\pm$1.03 & 83.36$\pm$0.36 & 83.19$\pm$0.35 & 83.30$\pm$0.37 \\
Robust-PU & 90.88$\pm$0.52 & 90.18$\pm$0.84 & 91.07$\pm$0.39 & 96.31$\pm$0.65 & \textbf{96.92$\pm$0.53} & 96.63$\pm$0.57 & 91.00$\pm$0.44 & 89.64$\pm$1.06 & 90.83$\pm$0.42 \\
Holistic-PU & 87.88$\pm$1.37 & 86.12$\pm$1.82 & 88.65$\pm$1.12 & 95.09$\pm$0.62 & 95.36$\pm$0.69 & 95.40$\pm$0.83 & 88.79$\pm$0.90 & 87.58$\pm$1.22 & 89.49$\pm$0.85 \\
PUe & 79.50$\pm$0.70 & 78.03$\pm$1.17 & 82.53$\pm$0.04 & 88.18$\pm$1.99 & 91.92$\pm$0.19 & 90.65$\pm$0.40 & 80.97$\pm$0.48 & 80.73$\pm$0.63 & 81.94$\pm$0.20 \\
GLWS & 86.27$\pm$0.43 & 79.88$\pm$1.42 & 88.18$\pm$0.67 & 93.00$\pm$0.61 & 94.46$\pm$0.33 & 93.75$\pm$0.14 & 87.50$\pm$0.43 & 82.97$\pm$0.76 & 89.07$\pm$0.51 \\
\midrule
uPU & 75.22$\pm$1.07 & 72.03$\pm$2.17 & 77.52$\pm$0.34 & 84.07$\pm$1.04 & 85.49$\pm$0.25 & 85.72$\pm$0.35 & 77.80$\pm$0.45 & 75.49$\pm$0.59 & 77.93$\pm$0.57 \\
\rowcolor{gray!10} uPU-c & 91.32$\pm$0.57 & 89.72$\pm$0.59 & 92.13$\pm$0.18 & 96.48$\pm$0.25 & 96.86$\pm$0.22 & 96.30$\pm$0.53 & 91.71$\pm$0.35 & 89.05$\pm$0.91 & 92.14$\pm$0.27 \\
nnPU & 84.68$\pm$0.34 & 75.22$\pm$2.11 & 87.38$\pm$0.39 & 94.02$\pm$0.74 & 95.30$\pm$0.51 & 95.34$\pm$0.24 & 85.90$\pm$0.44 & 79.94$\pm$1.42 & 87.73$\pm$0.39 \\
\rowcolor{gray!10} nnPU-c & 91.27$\pm$0.43 & 90.50$\pm$0.17 & 91.65$\pm$0.19 & 96.21$\pm$0.40 & \textbf{96.92$\pm$0.12} & \textbf{97.09$\pm$0.22} & 91.44$\pm$0.44 & 90.42$\pm$0.19 & 91.65$\pm$0.22 \\
nnPU-GA & 85.63$\pm$0.60 & 83.43$\pm$1.40 & 86.15$\pm$0.13 & 93.84$\pm$0.34 & 93.79$\pm$0.09 & 93.68$\pm$0.02 & 86.37$\pm$0.67 & 85.00$\pm$1.00 & 86.42$\pm$0.41 \\
\rowcolor{gray!10} nnPU-GA-c & 91.55$\pm$0.33 & 89.28$\pm$1.89 & 91.70$\pm$0.39 & \textbf{96.79$\pm$0.42} & 96.65$\pm$0.43 & 96.61$\pm$0.56 & 91.58$\pm$0.37 & 88.44$\pm$2.57 & 91.69$\pm$0.41 \\
PUSB & 87.42$\pm$0.31 & 87.83$\pm$0.13 & 87.63$\pm$0.24 & 87.39$\pm$0.34 & 87.85$\pm$0.13 & 87.61$\pm$0.23 & 88.15$\pm$0.18 & 88.30$\pm$0.24 & 87.98$\pm$0.44 \\
\rowcolor{gray!10} PUSB-c & 91.33$\pm$0.77 & \textbf{91.48$\pm$0.40} & 91.53$\pm$0.71 & 91.34$\pm$0.76 & 91.46$\pm$0.41 & 91.47$\pm$0.76 & 91.29$\pm$0.84 & \textbf{91.23$\pm$0.51} & 91.22$\pm$0.95 \\
VPU & 90.85$\pm$0.28 & 74.93$\pm$6.54 & 91.18$\pm$0.08 & 96.26$\pm$0.24 & 95.91$\pm$0.10 & 96.23$\pm$0.26 & 90.60$\pm$0.36 & 64.86$\pm$10.97 & 90.98$\pm$0.10 \\
\rowcolor{gray!10} VPU-c & \textbf{91.95$\pm$0.38} & 89.55$\pm$0.05 & \textbf{92.85$\pm$0.29} & 96.63$\pm$0.26 & 96.40$\pm$0.48 & 96.89$\pm$0.12 & \textbf{91.94$\pm$0.33} & 89.13$\pm$0.38 & \textbf{92.74$\pm$0.27} \\
Dist-PU & 78.92$\pm$0.89 & 77.52$\pm$0.51 & 79.42$\pm$0.71 & 84.82$\pm$0.29 & 84.29$\pm$0.73 & 85.17$\pm$0.34 & 81.73$\pm$0.94 & 79.36$\pm$0.51 & 81.10$\pm$0.82 \\
\rowcolor{gray!10} Dist-PU-c & 75.33$\pm$1.22 & 77.58$\pm$0.65 & 76.87$\pm$0.77 & 82.69$\pm$0.74 & 84.55$\pm$0.23 & 83.73$\pm$0.43 & 78.14$\pm$1.03 & 77.51$\pm$1.23 & 78.00$\pm$1.36 \\
\bottomrule
\end{tabular}}
\end{table*}

\begin{table*}[htbp]
\centering
\scriptsize
\caption{Test results~(mean$\pm$std) of precision and recall for each algorithm on Letter~(Case 2) under different model selection criteria. The best performance w.r.t.~each validation metric is shown in bold. Here, ``-c'' indicates using the proposed calibration technique in Algorithm~\ref{alg:cal_pu}.}
\label{tab:Letter-set3-2-merged-val}
\begin{tabular}{l|ccc|ccc}
\toprule
Test metric & \multicolumn{3}{c|}{Precision} & \multicolumn{3}{c}{Recall} \\
\midrule
Val metric  & PA & PAUC & OA& PA & PAUC& OA  \\
\midrule
PUbN & 81.40$\pm$0.45 & 87.45$\pm$1.99 & 87.75$\pm$1.02 & 96.72$\pm$0.84 & 90.94$\pm$0.83 & 91.83$\pm$1.24 \\
PAN & 50.04$\pm$0.51 & 33.05$\pm$13.52 & 18.13$\pm$14.80 & \textbf{99.74$\pm$0.21} & 66.67$\pm$27.22 & 26.22$\pm$21.41 \\
CVIR & 78.35$\pm$0.93 & 79.00$\pm$0.54 & 78.58$\pm$1.09 & 96.73$\pm$0.14 & 92.88$\pm$1.34 & \textbf{95.95$\pm$0.56} \\
P3MIX-E & 53.00$\pm$1.72 & 69.22$\pm$12.66 & 66.29$\pm$4.29 & 97.68$\pm$1.77 & 67.78$\pm$23.64 & 63.34$\pm$2.72 \\
P3MIX-C & 75.69$\pm$1.88 & 73.75$\pm$1.92 & 82.40$\pm$1.72 & 94.13$\pm$0.75 & 95.91$\pm$1.19 & 85.25$\pm$5.36 \\
LBE & 81.36$\pm$0.74 & 79.16$\pm$6.57 & 84.70$\pm$1.08 & 96.89$\pm$0.35 & 89.61$\pm$6.60 & 93.03$\pm$1.51 \\
Count Loss & 75.71$\pm$1.09 & 80.17$\pm$3.40 & 82.45$\pm$2.34 & 92.84$\pm$1.32 & 87.33$\pm$3.35 & 84.47$\pm$1.68 \\
Robust-PU & 89.96$\pm$1.62 & 94.75$\pm$0.38 & \textbf{93.22$\pm$0.86} & 92.17$\pm$0.98 & 85.16$\pm$2.21 & 88.58$\pm$0.79 \\
Holistic-PU & 84.34$\pm$2.63 & 81.09$\pm$3.15 & 85.42$\pm$2.12 & 94.05$\pm$1.39 & 95.65$\pm$1.66 & 94.18$\pm$1.43 \\
PUe & 74.35$\pm$1.10 & 70.71$\pm$1.81 & 82.69$\pm$0.51 & 89.04$\pm$1.83 & 94.32$\pm$1.43 & 81.25$\pm$0.86 \\
GLWS & 78.89$\pm$0.68 & 71.48$\pm$1.33 & 83.68$\pm$0.94 & 98.23$\pm$0.11 & 98.94$\pm$0.40 & 95.21$\pm$0.38 \\
\midrule
uPU & 70.01$\pm$1.43 & 67.60$\pm$4.04 & 77.28$\pm$0.22 & 87.70$\pm$1.18 & 87.40$\pm$4.78 & 78.61$\pm$1.09 \\
\rowcolor{gray!10} uPU-c & 87.67$\pm$0.95 & 94.08$\pm$1.70 & 92.56$\pm$0.46 & 96.19$\pm$0.92 & 84.81$\pm$2.70 & 91.72$\pm$0.28 \\
nnPU & 79.59$\pm$0.97 & 66.88$\pm$1.98 & 85.51$\pm$1.48 & 93.41$\pm$1.66 & \textbf{99.49$\pm$0.08} & 90.21$\pm$1.41 \\
\rowcolor{gray!10} nnPU-c & 91.19$\pm$1.12 & 91.85$\pm$0.63 & 92.71$\pm$0.80 & 91.80$\pm$1.68 & 89.05$\pm$0.60 & 90.66$\pm$1.05 \\
nnPU-GA & 82.09$\pm$1.25 & 78.07$\pm$1.98 & 86.18$\pm$1.35 & 91.33$\pm$2.21 & 93.44$\pm$0.65 & 86.84$\pm$1.84 \\
\rowcolor{gray!10} nnPU-GA-c & 91.10$\pm$0.34 & 93.73$\pm$1.91 & 91.70$\pm$0.36 & 92.09$\pm$0.96 & 84.69$\pm$5.82 & 91.69$\pm$0.83 \\
PUSB & 83.69$\pm$1.06 & 85.27$\pm$0.95 & 86.21$\pm$0.86 & 93.19$\pm$1.01 & 91.66$\pm$1.39 & 89.95$\pm$1.78 \\
\rowcolor{gray!10} PUSB-c & 89.84$\pm$0.67 & 92.04$\pm$0.36 & 92.96$\pm$0.25 & 92.80$\pm$1.20 & 90.44$\pm$0.72 & 89.58$\pm$1.67 \\
VPU & \textbf{92.35$\pm$0.79} & \textbf{98.33$\pm$0.85} & 92.56$\pm$0.96 & 89.00$\pm$1.39 & 51.86$\pm$13.08 & 89.52$\pm$0.97 \\
\rowcolor{gray!10} VPU-c & 92.02$\pm$0.61 & 93.71$\pm$1.97 & 92.76$\pm$0.53 & 91.85$\pm$0.15 & 85.28$\pm$2.42 & 92.75$\pm$0.68 \\
Dist-PU & 72.28$\pm$1.26 & 72.46$\pm$1.03 & 75.23$\pm$1.44 & 94.04$\pm$0.38 & 87.89$\pm$1.96 & 88.08$\pm$1.38 \\
\rowcolor{gray!10} Dist-PU-c & 70.19$\pm$2.12 & 77.82$\pm$0.81 & 73.93$\pm$0.49 & 88.32$\pm$0.66 & 77.27$\pm$1.93 & 82.61$\pm$2.50 \\
\bottomrule
\end{tabular}
\end{table*}

\begin{table*}[htbp]
\centering
\scriptsize
\caption{Test results~(mean$\pm$std) of accuracy, AUC, and F1 score for each algorithm on USPS~(Case 1) under different model selection criteria. The best performance w.r.t.~each validation metric is shown in bold. Here, ``-c'' indicates using the proposed calibration technique in Algorithm~\ref{alg:cal_pu}.}
\label{tab:USPS-set3-1-merged-test}
\resizebox{0.99\textwidth}{!}{
\begin{tabular}{l|ccc|ccc|ccc}
\toprule
Test metric & \multicolumn{3}{c|}{Test ACC} & \multicolumn{3}{c|}{AUC} & \multicolumn{3}{c}{Test F1} \\
\midrule
Val metric & PA & PAUC & OA  & PA & PAUC & OA  & PA & PAUC & OA \\
\midrule
PUbN & \textbf{93.76$\pm$0.23} & 92.89$\pm$0.24 & \textbf{93.95$\pm$0.13} & \textbf{98.28$\pm$0.03} & 98.01$\pm$0.08 & \textbf{98.29$\pm$0.04} & \textbf{92.60$\pm$0.28} & 91.42$\pm$0.36 & \textbf{92.77$\pm$0.17} \\
PAN & 84.52$\pm$0.50 & 84.52$\pm$0.65 & 84.97$\pm$0.48 & 89.98$\pm$0.20 & 90.89$\pm$0.48 & 90.06$\pm$0.46 & 81.23$\pm$0.46 & 80.56$\pm$1.19 & 80.61$\pm$0.58 \\
CVIR & 82.79$\pm$1.48 & 82.01$\pm$0.96 & 82.98$\pm$1.39 & 94.88$\pm$0.36 & 93.80$\pm$0.16 & 93.42$\pm$0.69 & 82.72$\pm$1.31 & 81.95$\pm$0.79 & 82.76$\pm$1.26 \\
P3MIX-E & 88.99$\pm$1.40 & 89.49$\pm$1.29 & 89.84$\pm$1.17 & 96.18$\pm$0.44 & 96.33$\pm$0.43 & 96.23$\pm$0.47 & 87.54$\pm$1.30 & 88.02$\pm$1.23 & 87.90$\pm$1.30 \\
P3MIX-C & 92.69$\pm$0.66 & \textbf{93.47$\pm$0.49} & 93.22$\pm$0.31 & 97.98$\pm$0.22 & \textbf{98.16$\pm$0.14} & 98.09$\pm$0.11 & 91.41$\pm$0.78 & \textbf{92.38$\pm$0.57} & 92.05$\pm$0.36 \\
LBE & 91.45$\pm$0.62 & 87.10$\pm$1.25 & 92.29$\pm$0.33 & 97.67$\pm$0.12 & 97.04$\pm$0.46 & 97.60$\pm$0.18 & 90.52$\pm$0.55 & 86.49$\pm$1.17 & 91.16$\pm$0.18 \\
Count Loss & 91.99$\pm$0.34 & 90.08$\pm$0.84 & 91.76$\pm$0.81 & 97.44$\pm$0.27 & 97.27$\pm$0.09 & 97.60$\pm$0.09 & 90.97$\pm$0.31 & 88.91$\pm$0.66 & 90.64$\pm$0.69 \\
Robust-PU & 91.73$\pm$0.27 & 88.19$\pm$3.28 & 92.79$\pm$0.12 & 97.51$\pm$0.20 & 97.48$\pm$0.22 & 97.73$\pm$0.15 & 89.88$\pm$0.20 & 83.74$\pm$5.36 & 91.20$\pm$0.14 \\
Holistic-PU & 91.94$\pm$0.82 & 92.56$\pm$0.11 & 93.46$\pm$0.36 & 97.22$\pm$0.34 & 97.47$\pm$0.17 & 97.76$\pm$0.16 & 90.88$\pm$0.84 & 91.12$\pm$0.02 & 92.27$\pm$0.40 \\
PUe & 84.82$\pm$1.01 & 84.22$\pm$0.30 & 86.93$\pm$0.27 & 95.41$\pm$0.12 & 95.25$\pm$0.13 & 94.40$\pm$1.03 & 84.23$\pm$0.76 & 83.60$\pm$0.19 & 85.24$\pm$0.59 \\
GLWS & 91.13$\pm$0.37 & 86.78$\pm$0.60 & 90.52$\pm$0.47 & 98.21$\pm$0.02 & 97.78$\pm$0.17 & 98.18$\pm$0.09 & 90.40$\pm$0.36 & 86.38$\pm$0.55 & 89.81$\pm$0.45 \\
\midrule
uPU & 83.14$\pm$0.93 & 83.87$\pm$0.11 & 83.86$\pm$0.83 & 92.88$\pm$0.15 & 93.10$\pm$0.18 & 93.01$\pm$0.05 & 81.51$\pm$0.81 & 81.98$\pm$0.19 & 82.04$\pm$0.70 \\
\rowcolor{gray!10} uPU-c & 93.44$\pm$0.26 & 91.30$\pm$1.16 & 93.32$\pm$0.10 & 97.95$\pm$0.12 & 97.79$\pm$0.11 & 97.85$\pm$0.09 & 92.05$\pm$0.34 & 88.94$\pm$1.72 & 91.94$\pm$0.16 \\
nnPU & 90.60$\pm$0.28 & 87.49$\pm$0.81 & 90.22$\pm$0.42 & 97.94$\pm$0.09 & 97.63$\pm$0.06 & 97.70$\pm$0.15 & 89.82$\pm$0.27 & 87.02$\pm$0.73 & 89.44$\pm$0.42 \\
\rowcolor{gray!10} nnPU-c & 92.64$\pm$0.08 & 90.82$\pm$0.94 & 93.24$\pm$0.18 & 97.60$\pm$0.05 & 97.34$\pm$0.17 & 97.99$\pm$0.03 & 91.03$\pm$0.12 & 88.41$\pm$1.37 & 91.76$\pm$0.23 \\
nnPU-GA & 91.28$\pm$0.16 & 92.46$\pm$0.11 & 92.51$\pm$0.31 & 96.79$\pm$0.10 & 97.41$\pm$0.11 & 97.17$\pm$0.27 & 89.80$\pm$0.36 & 91.09$\pm$0.07 & 91.30$\pm$0.35 \\
\rowcolor{gray!10} nnPU-GA-c & 92.76$\pm$0.38 & 90.60$\pm$1.44 & 92.79$\pm$0.22 & 97.58$\pm$0.05 & 97.46$\pm$0.16 & 97.66$\pm$0.11 & 91.23$\pm$0.55 & 88.05$\pm$2.15 & 91.33$\pm$0.32 \\
PUSB & 89.90$\pm$0.73 & 91.73$\pm$0.26 & 91.38$\pm$0.83 & 90.98$\pm$0.65 & 92.51$\pm$0.26 & 92.17$\pm$0.70 & 89.17$\pm$0.71 & 90.91$\pm$0.29 & 90.56$\pm$0.80 \\
\rowcolor{gray!10} PUSB-c & 92.91$\pm$0.30 & 92.84$\pm$0.24 & 92.83$\pm$0.18 & 92.30$\pm$0.29 & 92.26$\pm$0.27 & 92.25$\pm$0.29 & 91.34$\pm$0.35 & 91.28$\pm$0.31 & 91.26$\pm$0.28 \\
VPU & 88.14$\pm$2.21 & 57.71$\pm$0.04 & 89.89$\pm$1.71 & 92.98$\pm$3.98 & 97.31$\pm$0.13 & 97.76$\pm$0.19 & 84.36$\pm$3.09 & 0.31$\pm$0.17 & 86.58$\pm$2.62 \\
\rowcolor{gray!10} VPU-c & 92.92$\pm$0.07 & 80.17$\pm$7.36 & 93.29$\pm$0.32 & 97.55$\pm$0.13 & 97.82$\pm$0.24 & 97.79$\pm$0.18 & 91.40$\pm$0.08 & 63.80$\pm$17.92 & 91.97$\pm$0.38 \\
Dist-PU & 87.73$\pm$0.55 & 82.15$\pm$2.23 & 86.10$\pm$0.14 & 92.52$\pm$0.85 & 92.58$\pm$0.43 & 91.03$\pm$0.77 & 86.69$\pm$0.55 & 81.64$\pm$1.82 & 84.77$\pm$0.17 \\
\rowcolor{gray!10} Dist-PU-c & 92.01$\pm$0.19 & 90.47$\pm$0.77 & 91.50$\pm$0.34 & 97.92$\pm$0.16 & 97.95$\pm$0.21 & 97.74$\pm$0.21 & 90.16$\pm$0.22 & 87.84$\pm$1.11 & 89.44$\pm$0.44 \\
\bottomrule
\end{tabular}}
\end{table*}

\begin{table*}[htbp]
\centering
\scriptsize
\caption{Test results~(mean$\pm$std) of precision and recall for each algorithm on USPS~(Case 1) under different model selection criteria. The best performance w.r.t.~each validation metric is shown in bold. Here, ``-c'' indicates using the proposed calibration technique in Algorithm~\ref{alg:cal_pu}.}
\label{tab:USPS-set3-1-merged-val}
\begin{tabular}{l|ccc|ccc}
\toprule
Test metric & \multicolumn{3}{c|}{Precision} & \multicolumn{3}{c}{Recall} \\
\midrule
Val metric  & PA & PAUC & OA& PA & PAUC& OA  \\
\midrule
PUbN & 92.93$\pm$0.28 & 93.46$\pm$0.72 & 93.93$\pm$0.10 & 92.27$\pm$0.31 & 89.53$\pm$1.26 & 91.65$\pm$0.35 \\
PAN & 84.90$\pm$5.07 & 87.64$\pm$5.36 & 90.59$\pm$5.35 & 79.49$\pm$4.77 & 76.59$\pm$6.00 & 74.27$\pm$5.22 \\
CVIR & 72.19$\pm$1.80 & 71.37$\pm$1.25 & 72.62$\pm$1.66 & 96.90$\pm$0.42 & 96.27$\pm$0.23 & 96.27$\pm$0.86 \\
P3MIX-E & 85.21$\pm$3.71 & 86.04$\pm$3.34 & 89.31$\pm$3.14 & 90.47$\pm$1.33 & 90.43$\pm$1.08 & 87.02$\pm$2.80 \\
P3MIX-C & 90.96$\pm$0.73 & 91.40$\pm$0.59 & 91.48$\pm$0.43 & 91.88$\pm$0.91 & 93.37$\pm$0.55 & 92.63$\pm$0.31 \\
LBE & 85.53$\pm$1.52 & 78.00$\pm$1.86 & 89.02$\pm$2.12 & 96.24$\pm$0.81 & 97.18$\pm$1.13 & 93.69$\pm$1.91 \\
Count Loss & 87.21$\pm$1.11 & 85.10$\pm$2.53 & 88.05$\pm$2.45 & 95.14$\pm$0.74 & 93.45$\pm$1.97 & 93.65$\pm$1.43 \\
Robust-PU & 93.49$\pm$1.50 & 95.16$\pm$0.47 & 94.46$\pm$0.25 & 86.63$\pm$0.95 & 75.96$\pm$8.06 & 88.16$\pm$0.16 \\
Holistic-PU & 87.61$\pm$1.66 & 92.29$\pm$1.53 & 92.36$\pm$0.77 & 94.47$\pm$0.28 & 90.12$\pm$1.45 & 92.20$\pm$0.26 \\
PUe & 75.50$\pm$1.85 & 74.71$\pm$0.58 & 81.82$\pm$1.80 & 95.45$\pm$1.08 & 94.90$\pm$0.45 & 89.41$\pm$3.20 \\
GLWS & 83.50$\pm$0.64 & 76.68$\pm$0.78 & 82.47$\pm$0.75 & \textbf{98.55$\pm$0.12} & \textbf{98.90$\pm$0.19} & \textbf{98.59$\pm$0.06} \\
\midrule
uPU & 76.29$\pm$1.56 & 77.81$\pm$0.18 & 77.77$\pm$1.65 & 87.57$\pm$0.17 & 86.63$\pm$0.61 & 86.90$\pm$0.47 \\
\rowcolor{gray!10} uPU-c & 94.51$\pm$0.21 & 95.48$\pm$0.55 & 94.10$\pm$0.32 & 89.73$\pm$0.68 & 83.45$\pm$3.19 & 89.88$\pm$0.60 \\
nnPU & 83.01$\pm$0.49 & 77.78$\pm$1.21 & 82.43$\pm$0.66 & 97.84$\pm$0.22 & 98.78$\pm$0.14 & 97.76$\pm$0.36 \\
\rowcolor{gray!10} nnPU-c & 94.11$\pm$0.36 & 94.52$\pm$0.71 & 94.85$\pm$0.20 & 88.16$\pm$0.50 & 83.18$\pm$2.52 & 88.86$\pm$0.44 \\
nnPU-GA & 89.02$\pm$1.35 & 91.22$\pm$0.96 & 89.89$\pm$0.46 & 90.78$\pm$2.06 & 91.02$\pm$0.92 & 92.75$\pm$0.37 \\
\rowcolor{gray!10} nnPU-GA-c & 93.49$\pm$0.40 & 94.18$\pm$0.39 & 93.02$\pm$0.36 & 89.14$\pm$1.38 & 82.98$\pm$3.91 & 89.73$\pm$0.94 \\
PUSB & 81.80$\pm$1.12 & 85.07$\pm$0.28 & 84.72$\pm$1.55 & 98.04$\pm$0.28 & 97.61$\pm$0.28 & 97.33$\pm$0.42 \\
\rowcolor{gray!10} PUSB-c & \textbf{94.59$\pm$0.58} & 94.29$\pm$0.46 & 94.23$\pm$0.58 & 88.31$\pm$0.42 & 88.47$\pm$0.62 & 88.51$\pm$1.03 \\
VPU & 94.38$\pm$2.09 & 66.67$\pm$27.22 & \textbf{97.19$\pm$0.39} & 76.55$\pm$4.48 & 0.16$\pm$0.08 & 78.43$\pm$4.44 \\
\rowcolor{gray!10} VPU-c & 94.22$\pm$0.79 & \textbf{96.82$\pm$0.89} & 93.30$\pm$0.50 & 88.78$\pm$0.76 & 55.53$\pm$18.31 & 90.67$\pm$0.28 \\
Dist-PU & 80.19$\pm$0.78 & 73.27$\pm$3.53 & 79.18$\pm$1.24 & 94.35$\pm$0.22 & 92.71$\pm$1.44 & 91.41$\pm$1.93 \\
\rowcolor{gray!10} Dist-PU-c & 94.24$\pm$0.58 & 95.22$\pm$0.37 & 94.27$\pm$0.38 & 86.43$\pm$0.43 & 81.61$\pm$2.06 & 85.10$\pm$0.71 \\
\bottomrule
\end{tabular}
\end{table*}

\begin{table*}[htbp]
\centering
\scriptsize
\caption{Test results~(mean$\pm$std) of accuracy, AUC, and F1 score for each algorithm on USPS~(Case 2) under different model selection criteria. The best performance w.r.t.~each validation metric is shown in bold. Here, ``-c'' indicates using the proposed calibration technique in Algorithm~\ref{alg:cal_pu}.}
\label{tab:USPS-set3-2-merged-test}
\resizebox{0.99\textwidth}{!}{
\begin{tabular}{l|ccc|ccc|ccc}
\toprule
Test metric & \multicolumn{3}{c|}{Test ACC} & \multicolumn{3}{c|}{AUC} & \multicolumn{3}{c}{Test F1} \\
\midrule
Val metric & PA & PAUC & OA  & PA & PAUC & OA  & PA & PAUC & OA \\
\midrule
PUbN & 94.45$\pm$0.26 & \textbf{95.45$\pm$0.14} & \textbf{95.45$\pm$0.20} & 98.62$\pm$0.18 & 98.83$\pm$0.08 & 98.85$\pm$0.10 & 94.23$\pm$0.29 & \textbf{95.31$\pm$0.13} & \textbf{95.29$\pm$0.20} \\
PAN & 80.70$\pm$3.24 & 83.54$\pm$1.27 & 84.22$\pm$1.12 & 88.16$\pm$3.61 & 92.49$\pm$0.23 & 92.89$\pm$0.18 & 78.89$\pm$3.82 & 81.47$\pm$2.11 & 82.45$\pm$1.90 \\
CVIR & 90.93$\pm$0.26 & 88.57$\pm$0.29 & 90.55$\pm$0.20 & 96.74$\pm$0.23 & 96.34$\pm$0.22 & 96.69$\pm$0.21 & 91.37$\pm$0.23 & 89.34$\pm$0.24 & 91.05$\pm$0.16 \\
P3MIX-E & 94.04$\pm$0.43 & 93.90$\pm$0.43 & 93.90$\pm$0.39 & 98.26$\pm$0.27 & 98.24$\pm$0.26 & 98.14$\pm$0.19 & 93.92$\pm$0.39 & 93.79$\pm$0.38 & 93.81$\pm$0.36 \\
P3MIX-C & 94.27$\pm$0.52 & 94.54$\pm$0.51 & 94.72$\pm$0.35 & 98.38$\pm$0.24 & 98.56$\pm$0.17 & 98.66$\pm$0.13 & 94.20$\pm$0.48 & 94.47$\pm$0.48 & 94.62$\pm$0.34 \\
LBE & 94.67$\pm$0.20 & 90.82$\pm$1.28 & 94.88$\pm$0.05 & 98.51$\pm$0.16 & 98.17$\pm$0.08 & 98.60$\pm$0.07 & 94.65$\pm$0.16 & 91.18$\pm$0.98 & 94.73$\pm$0.10 \\
Count Loss & 92.73$\pm$0.22 & 93.76$\pm$0.45 & 93.17$\pm$0.14 & 97.15$\pm$0.27 & 97.72$\pm$0.20 & 97.33$\pm$0.26 & 92.87$\pm$0.16 & 93.84$\pm$0.40 & 93.18$\pm$0.14 \\
Robust-PU & 93.72$\pm$0.41 & 93.64$\pm$0.31 & 94.93$\pm$0.19 & 98.13$\pm$0.11 & 98.34$\pm$0.15 & 98.62$\pm$0.23 & 93.44$\pm$0.45 & 93.33$\pm$0.31 & 94.69$\pm$0.20 \\
Holistic-PU & \textbf{95.15$\pm$0.28} & 94.83$\pm$0.24 & 95.02$\pm$0.53 & 98.76$\pm$0.11 & 98.73$\pm$0.17 & 98.49$\pm$0.20 & \textbf{94.99$\pm$0.29} & 94.65$\pm$0.24 & 94.84$\pm$0.57 \\
PUe & 85.27$\pm$1.11 & 85.00$\pm$0.63 & 86.05$\pm$0.33 & 93.95$\pm$0.48 & 95.26$\pm$0.23 & 93.48$\pm$0.92 & 86.55$\pm$0.96 & 86.38$\pm$0.50 & 87.08$\pm$0.25 \\
GLWS & 92.48$\pm$0.50 & 88.19$\pm$0.44 & 92.18$\pm$0.26 & 98.58$\pm$0.05 & 98.09$\pm$0.29 & 98.45$\pm$0.06 & 92.76$\pm$0.44 & 89.15$\pm$0.35 & 92.49$\pm$0.23 \\
\midrule
uPU & 83.36$\pm$0.48 & 82.68$\pm$0.81 & 84.12$\pm$0.06 & 92.33$\pm$0.30 & 93.99$\pm$0.25 & 92.79$\pm$0.79 & 84.51$\pm$0.42 & 84.34$\pm$0.56 & 85.14$\pm$0.27 \\
\rowcolor{gray!10} uPU-c & 94.67$\pm$0.10 & 93.36$\pm$0.76 & 94.57$\pm$0.28 & \textbf{98.78$\pm$0.10} & 98.50$\pm$0.30 & 98.68$\pm$0.11 & 94.36$\pm$0.11 & 92.85$\pm$0.88 & 94.26$\pm$0.32 \\
nnPU & 93.64$\pm$1.13 & 88.01$\pm$1.30 & 94.10$\pm$0.37 & 98.50$\pm$0.15 & 98.37$\pm$0.09 & 98.71$\pm$0.08 & 93.79$\pm$1.03 & 89.01$\pm$1.07 & 94.20$\pm$0.33 \\
\rowcolor{gray!10} nnPU-c & 94.32$\pm$0.20 & 94.00$\pm$0.24 & 94.80$\pm$0.12 & 98.69$\pm$0.02 & 98.48$\pm$0.04 & 98.67$\pm$0.04 & 94.03$\pm$0.23 & 93.67$\pm$0.27 & 94.56$\pm$0.10 \\
nnPU-GA & 94.42$\pm$0.26 & 94.95$\pm$0.13 & 94.78$\pm$0.07 & 98.61$\pm$0.10 & 98.68$\pm$0.10 & 98.44$\pm$0.12 & 94.28$\pm$0.24 & 94.76$\pm$0.14 & 94.59$\pm$0.08 \\
\rowcolor{gray!10} nnPU-GA-c & 94.12$\pm$0.04 & 94.27$\pm$0.17 & 94.07$\pm$0.36 & 98.66$\pm$0.06 & 98.79$\pm$0.07 & 98.73$\pm$0.07 & 93.77$\pm$0.04 & 93.92$\pm$0.17 & 93.69$\pm$0.40 \\
PUSB & 92.41$\pm$0.67 & 91.96$\pm$1.50 & 93.56$\pm$0.34 & 92.57$\pm$0.65 & 92.10$\pm$1.45 & 93.68$\pm$0.33 & 92.69$\pm$0.59 & 92.25$\pm$1.30 & 93.70$\pm$0.30 \\
\rowcolor{gray!10} PUSB-c & 94.09$\pm$0.18 & 93.64$\pm$0.21 & 94.57$\pm$0.25 & 94.01$\pm$0.18 & 93.55$\pm$0.22 & 94.50$\pm$0.25 & 93.76$\pm$0.19 & 93.24$\pm$0.24 & 94.27$\pm$0.26 \\
VPU & 89.82$\pm$2.61 & 76.63$\pm$10.29 & 89.54$\pm$1.88 & 97.91$\pm$0.35 & 98.58$\pm$0.10 & 97.99$\pm$0.57 & 88.33$\pm$3.43 & 58.65$\pm$23.78 & 88.11$\pm$2.41 \\
\rowcolor{gray!10} VPU-c & 94.93$\pm$0.13 & 94.82$\pm$0.12 & 95.05$\pm$0.14 & \textbf{98.78$\pm$0.06} & \textbf{98.86$\pm$0.09} & \textbf{98.91$\pm$0.11} & 94.72$\pm$0.17 & 94.55$\pm$0.16 & 94.81$\pm$0.14 \\
Dist-PU & 94.82$\pm$0.12 & 94.02$\pm$0.18 & 94.72$\pm$0.38 & 98.09$\pm$0.17 & 97.94$\pm$0.28 & 98.12$\pm$0.19 & 94.63$\pm$0.13 & 93.72$\pm$0.25 & 94.55$\pm$0.39 \\
\rowcolor{gray!10} Dist-PU-c & 94.10$\pm$0.44 & 92.09$\pm$0.50 & 94.14$\pm$0.37 & 98.49$\pm$0.16 & 98.53$\pm$0.20 & 98.50$\pm$0.03 & 93.73$\pm$0.49 & 91.30$\pm$0.59 & 93.77$\pm$0.42 \\
\bottomrule
\end{tabular}}
\end{table*}

\begin{table*}[htbp]
\centering
\scriptsize
\caption{Test results~(mean$\pm$std) of precision and recall for each algorithm on USPS~(Case 2) under different model selection criteria. The best performance w.r.t.~each validation metric is shown in bold. Here, ``-c'' indicates using the proposed calibration technique in Algorithm~\ref{alg:cal_pu}.}
\label{tab:USPS-set3-2-merged-val}
\begin{tabular}{l|ccc|ccc}
\toprule
Test metric & \multicolumn{3}{c|}{Precision} & \multicolumn{3}{c}{Recall} \\
\midrule
Val metric  & PA & PAUC & OA& PA & PAUC& OA  \\
\midrule
PUbN & 95.38$\pm$0.94 & 95.68$\pm$0.48 & 95.97$\pm$0.59 & 93.18$\pm$1.09 & 94.95$\pm$0.27 & 94.64$\pm$0.42 \\
PAN & 83.76$\pm$3.59 & 89.42$\pm$1.97 & 89.52$\pm$2.59 & 75.13$\pm$5.70 & 75.67$\pm$5.30 & 77.35$\pm$5.13 \\
CVIR & 85.15$\pm$0.42 & 81.84$\pm$0.41 & 84.48$\pm$0.37 & 98.57$\pm$0.05 & 98.36$\pm$0.22 & 98.74$\pm$0.18 \\
P3MIX-E & 93.52$\pm$1.27 & 93.30$\pm$1.32 & 92.82$\pm$0.87 & 94.37$\pm$0.51 & 94.34$\pm$0.58 & 94.85$\pm$0.29 \\
P3MIX-C & 93.07$\pm$1.19 & 93.28$\pm$0.94 & 93.88$\pm$0.75 & 95.39$\pm$0.26 & 95.70$\pm$0.08 & 95.39$\pm$0.10 \\
LBE & 92.57$\pm$0.82 & 86.94$\pm$3.54 & 95.05$\pm$0.89 & 96.86$\pm$0.56 & 96.45$\pm$2.16 & 94.47$\pm$1.09 \\
Count Loss & 88.88$\pm$0.75 & 90.40$\pm$0.96 & 90.75$\pm$0.87 & 97.27$\pm$0.57 & 97.58$\pm$0.41 & 95.80$\pm$1.05 \\
Robust-PU & 95.00$\pm$0.46 & 95.31$\pm$0.48 & 96.73$\pm$0.26 & 91.95$\pm$0.83 & 91.44$\pm$0.17 & 92.73$\pm$0.39 \\
Holistic-PU & 95.52$\pm$0.49 & 95.53$\pm$0.63 & 95.60$\pm$0.47 & 94.47$\pm$0.42 & 93.79$\pm$0.31 & 94.10$\pm$0.92 \\
PUe & 77.96$\pm$1.20 & 77.46$\pm$0.78 & 79.31$\pm$0.72 & 97.30$\pm$0.88 & 97.65$\pm$0.21 & 96.59$\pm$0.91 \\
GLWS & 87.39$\pm$0.88 & 80.72$\pm$0.66 & 86.83$\pm$0.38 & \textbf{98.84$\pm$0.12} & \textbf{99.56$\pm$0.20} & \textbf{98.94$\pm$0.06} \\
\midrule
uPU & 77.29$\pm$0.61 & 75.45$\pm$1.20 & 78.20$\pm$0.75 & 93.24$\pm$0.79 & 95.67$\pm$0.58 & 93.55$\pm$1.69 \\
\rowcolor{gray!10} uPU-c & \textbf{97.19$\pm$0.48} & 97.26$\pm$0.65 & 96.97$\pm$0.16 & 91.71$\pm$0.50 & 88.88$\pm$1.65 & 91.71$\pm$0.70 \\
nnPU & 89.86$\pm$1.86 & 80.69$\pm$1.82 & 90.47$\pm$0.71 & 98.16$\pm$0.35 & 99.32$\pm$0.07 & 98.26$\pm$0.13 \\
\rowcolor{gray!10} nnPU-c & 96.32$\pm$0.13 & 96.26$\pm$0.18 & 96.46$\pm$0.51 & 91.85$\pm$0.53 & 91.23$\pm$0.63 & 92.73$\pm$0.29 \\
nnPU-GA & 94.11$\pm$0.67 & 95.82$\pm$0.28 & 95.61$\pm$0.20 & 94.47$\pm$0.26 & 93.72$\pm$0.44 & 93.59$\pm$0.28 \\
\rowcolor{gray!10} nnPU-GA-c & 96.77$\pm$0.20 & 97.19$\pm$0.29 & 97.11$\pm$0.18 & 90.96$\pm$0.16 & 90.86$\pm$0.11 & 90.52$\pm$0.59 \\
PUSB & 87.44$\pm$1.11 & 87.68$\pm$2.47 & 89.52$\pm$0.75 & 98.64$\pm$0.10 & 97.48$\pm$0.32 & 98.29$\pm$0.27 \\
\rowcolor{gray!10} PUSB-c & 96.44$\pm$0.41 & 96.58$\pm$0.49 & 96.84$\pm$0.41 & 91.23$\pm$0.44 & 90.14$\pm$0.63 & 91.85$\pm$0.27 \\
VPU & 96.67$\pm$0.70 & \textbf{98.82$\pm$0.56} & \textbf{97.26$\pm$0.20} & 82.02$\pm$5.96 & 52.95$\pm$21.53 & 80.76$\pm$3.82 \\
\rowcolor{gray!10} VPU-c & 96.09$\pm$0.48 & 96.87$\pm$0.51 & 96.80$\pm$0.21 & 93.42$\pm$0.78 & 92.36$\pm$0.79 & 92.90$\pm$0.07 \\
Dist-PU & 95.45$\pm$0.21 & 95.77$\pm$0.70 & 95.01$\pm$0.40 & 93.82$\pm$0.18 & 91.81$\pm$1.11 & 94.10$\pm$0.50 \\
\rowcolor{gray!10} Dist-PU-c & 97.04$\pm$0.15 & 98.27$\pm$0.26 & 96.98$\pm$0.06 & 90.65$\pm$0.83 & 85.26$\pm$0.88 & 90.79$\pm$0.80 \\
\bottomrule
\end{tabular}
\end{table*}

\end{document}